\documentclass[12pt]{article}
\usepackage{authblk}
\usepackage[utf8]{inputenc}
\usepackage[T1]{fontenc}
\usepackage{amsmath,amssymb,amsthm}
\usepackage{mathtools}
\usepackage{mathrsfs}
\usepackage{bm}
\usepackage{hyperref}
\hypersetup{colorlinks=true, linkcolor=blue}
\usepackage{geometry}
\usepackage{graphicx}
\usepackage{float}
\usepackage{algorithm}
\usepackage{algpseudocode}
\usepackage{enumerate}
\usepackage[breakable, skins]{tcolorbox}
\usepackage{placeins}
\usepackage[backend=biber,style=alphabetic]{biblatex}
\usepackage{textcomp}
\usepackage{hyphenat}

\usepackage{appendix}
\usepackage{indentfirst,csquotes}
\usepackage{titlesec}
\usepackage[english]{babel}
\usepackage{xcolor,paralist,fancyhdr,etoolbox}
\usepackage{algorithm}
\usepackage{algpseudocode}
\usepackage{float}
\usepackage{tcolorbox}
\usepackage{booktabs}
\usepackage[mathscr]{euscript}
\usepackage{mathrsfs, mathtools, stmaryrd}
\usepackage{geometry}
\usepackage{multirow}
\usepackage{stmaryrd}
\usepackage{cleveref}

\newtheorem{theorem}{Theorem}[section]
\newtheorem{lemma}[theorem]{Lemma}
\newtheorem{definition}[theorem]{Definition}

\newtheorem{corollary}[theorem]{Corollary}

\newcommand{\E}{\mathbb{E}}
\newcommand{\R}{\mathbb{R}}
\newcommand{\norm}[1]{\left\|#1\right\|}
\newcommand{\ip}[2]{\langle #1,\, #2 \rangle}

\newcommand{\Id}{I}
\newcommand{\Sig}{\Sigma}

\newcommand{\I}{{I}}

\renewcommand{\Pr}{\mathbb{P}}

\newcommand{\Sigmax}{\Sigma_x}

\definecolor{algoheader}{RGB}{50,50,50}
\tcbset{
  mainthm/.style={
    breakable,
    colback=blue!4!white,
    colframe=blue!55!black,
    fonttitle=\bfseries,
    before upper={\setlength{\parskip}{4pt}},
    left=6pt, right=6pt, top=4pt, bottom=4pt
  }
}

\title{ Modulated learning for private and distributed regression with just a single sample per client device}
\date{}
\author{Praneeth Vepakomma$^{1,2}$, Amirhossein Reisizadeh$^{1}$,\\ Samuel Horváth$^{2}$, Munther Dahleh}
\affil[1]{Massachusetts Institute of Technology (MIT)}
\affil[2]{Mohamed bin Zayed University of Artificial Intelligence (MBZUAI)}

\begin{document}

\maketitle


\begin{abstract}
This work focuses on the question of learning from a large number of devices with each device holding only a single sample of data. Several real-world applications exist to this one sample per client setup up including learning from fitness trackers, data/app usage aggregators, body-worn sensing devices, and daily event monitors to name a few. When a client has only one sample, the standard federated learning paradigm breaks down as a local update based on that single point is far from being useful, especially in the earlier rounds for estimation of the model coefficients. This utility is further weakened by the privacy-inducing noise applied at every round. This work caters to this problem to enable such clients to collaboratively contribute to effectively learn a global model without leaking the privacy of their data. The proposed approach injects a single, carefully calibrated noisy perturbation to transform the sample at each client, followed by a post-processed representation which is shared with the server. These representations aggregated at the server are processed to obtain an unbiased gradient update that in expectation matches the non-private centralized gradient while preserving data privacy. This approach is different than traditional private federated learning, where the communication payloads involve model coefficients as opposed to privately transformed data samples. This method enables devices with extremely limited data to collaborate and learn accurate, privacy-preserving models without requiring large local datasets or sacrificing individual privacy.
\end{abstract}




\section{Introduction} Privacy-preserving federated learning typically assumes that each client holds a local dataset large enough to compute meaningful updates over multiple gradient steps. However, in many real-world applications (mobile health apps, fitness trackers, body worn sensing, or daily event monitors) each single client may hold only a few or in the extreme, often just a single data point. In such regimes, existing mechanisms like differentially private stochastic gradient descent (DP-SGD) based federated learning schemes become substantially ineffective as local updates are noisy and unreliable as they are learned from a single sample. On top of this, the privacy-inducing noise injected into this system further exacerbates this problem.

This paper thereby asks: \emph{can we enable useful, differentially private distributed learning when each client contributes only one example data point?} We address this challenge by designing a cosine-modulated, contractive transformation followed by a privacy-preserving Gaussian perturbation. These transformed data points are then further processed with a specific functional form to create an intermediate result that assists the server in estimating the gradient of the learning loss. The scheme enables the server to obtain an unbiased gradient update that in expectation matches the non-private centralized gradient update, while still preserving data privacy. This approach is different from traditional private federated learning, where the communication payloads involve model coefficients as opposed to privately transformed data samples.
Building upon the unbiasedness of our estimator, we further derive the covariance of the stochastic gradient updates and establish the asymptotic normality of our model estimator. This facilitates asymptotically valid inference, such as hypothesis testing, on the underlying regression coefficients within our distributed framework.
The remainder of the paper is organized as follows. Section 2 fixes the privacy definitions and background needed for the method. Section 3 develops the core single-vector modulated protocol and shows how the server recovers an unbiased gradient estimator from privatized client messages. Sections 4–6 present a non-iterative variant, a reconstruction lower bound, and a multi-vector refinement. Sections 7–9 analyze variance, convergence, and design trade-offs, and the appendices collect proofs, experiments, and supporting background.

\section{Preliminaries}Before introducing the protocol itself, we fix the privacy and learning notions used throughout the paper. This section provides the minimum background needed to state the client mechanism and server correction cleanly.
In this section, we list the relevant preliminaries and related works on differential privacy, private linear regression and some existing paradigms for collaborative machine learning, before presenting the proposed methods.
\subsection{Differential Privacy}
Because privacy in the proposed protocol is enforced through the client-side transformation and its sensitivity, we begin with the relevant local-DP definitions. The main takeaway of this subsection is that once the client map has a sensitivity bound, the Gaussian mechanism and post-processing properties maintain the privacy guarantee used throughout the paper.
\begin{definition}
  (Differential privacy \cite{dwork2006differential,dwork2014algorithmic}) A randomized algorithm often called a mechanism $
\mathcal{M}: \mathcal{X} \rightarrow \mathcal{Y}
$ satisfies ${\varepsilon}$-local differential privacy (or ${\varepsilon}$- local DP) if, for every pair of possible inputs $x, x^{\prime} \in \mathcal{X}$ and every possible set of outputs $S \subseteq \mathcal{Y}$, we have

$$
\operatorname{Pr}[\mathcal{M}(x) \in S] \leq e^{\varepsilon} \cdot \operatorname{Pr}\left[\mathcal{M}\left(x^{\prime}\right) \in S\right] +\delta.
$$
Here, $\varepsilon$ controls the privacy leakage (smaller being more private), and $\delta$ captures a small probability of failure, where DP does not hold. When $\delta = 0$, this reduces to pure DP; otherwise, it is referred to as approximate DP. The goal of DP is to ensure that the inclusion or exclusion of a single individual’s data does not significantly affect the output of an mechanism, thereby protecting privacy. \end{definition} We now describe certain properties of DP that are quite useful. 
Differential privacy is closed under postprocessing as any function applied to the output of a DP mechanism does not degrade the privacy guarantee.\begin{theorem}[Immunity to postprocessing]
If $\mathcal{M}$ is $(\varepsilon, \delta)$-DP and $g$ is any (randomized or deterministic) function, then $g \circ \mathcal{M}$ is also $(\varepsilon, \delta)$-DP.
\end{theorem}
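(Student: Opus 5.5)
The plan is to handle a deterministic $g$ first and then obtain the randomized case by averaging over $g$'s internal randomness. Fix inputs $x, x'$ and a measurable set $T$ in the range of $g$.

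For deterministic $g:\mathcal{Y}\to\mathcal{Z}$, set $S = g^{-1}(T) \subseteq \mathcal{Y}$. Then $\Pr[g(\mathcal{M}(x)) \in T] = \Pr[\mathcal{M}(x) \in S]$. The definition of $(\varepsilon,\delta)$-DP applied to the set $S$ gives
\[
\Pr[\mathcal{M}(x)\in S] \le e^{\varepsilon}\Pr[\mathcal{M}(x')\in S] + \delta = e^{\varepsilon}\Pr[g(\mathcal{M}(x'))\in T] + \delta .
\]
This is exactly the required inequality. The only technical point is that $S$ must be an admissible event, which holds once $g$ is measurable.

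For randomized $g$, I would write $g(y) = h(y, R)$, where $R$ is a random seed independent of $\mathcal{M}$'s randomness and $h$ is deterministic and measurable. For each fixed value $r$ of the seed, $h(\cdot, r)$ is deterministic, so the first step gives
\[
\Pr[h(\mathcal{M}(x),r)\in T] \le e^{\varepsilon}\Pr[h(\mathcal{M}(x'),r)\in T] + \delta .
\]
I would then integrate this inequality against the law of $R$. Independence and Fubini/Tonelli turn the left side into $\Pr[g(\mathcal{M}(x))\in T]$ and the first term on the right into $e^{\varepsilon}\Pr[g(\mathcal{M}(x'))\in T]$. Because the law of $R$ is a probability measure, the $\delta$ term integrates to exactly $\delta$.

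I expect the main obstacle to be this averaging step. It relies on two things: that the additive slack $\delta$ survives averaging unchanged, which holds because the inequality is affine in the probabilities, and that $g$'s randomness is independent of the input data. Independence is implicit in calling $g$ a post-processing that does not access $x$. Apart from these points, the argument is routine.
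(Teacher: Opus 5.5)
Your proof is correct and is the standard textbook argument (Dwork and Roth): handle deterministic $g$ through the preimage $g^{-1}(T)$, then treat randomized $g$ as an independent mixture of deterministic maps, which the affine slack $\delta$ survives. The paper states this theorem as background from the cited literature and gives no proof of its own, so there is no different route to compare against; the seed representation $g(y)=h(y,R)$ you rely on is harmless, and it can also be avoided by extending the DP inequality from sets to $[0,1]$-valued test functions and integrating against the Markov kernel of $g$.
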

This essentially means that differential privacy is preserved under post-processing, and so any deterministic computation on the privatized outputs does not degrade the privacy guarantee.\par A classical way to obtain a differentially private mechanism for real-valued queries is to use carefully callibrated additive noise from a Gaussian distribution (for approximate DP) or a Laplace distribution (for pure DP). The amount of noise needed, depends on the notion of a global sensitivity of the query, that we define below.
\begin{definition} (Global sensitivity)
Global sensitivity is the maximum change in the output of a function $\Delta M_G$ when applied to any two neighboring datasets $D,D^{\prime}\in \mathcal{D}$ (or records in a special case), differing by a single element (or by a small amount in any chosen norm) as $$
\Delta M_G=\max _{D, D^{\prime}\in \mathcal{D}}\left\|M(D)-M(D^{\prime})\right\|_p
$$Hence, it is independent of any specific dataset and is instead a query-dependent measure of the maximum change in the output of a function $M$ over all possible neighboring datasets.
\end{definition} That said,
several other equivalent definitions (or different variants) of $(\varepsilon, \delta)$-differentially privacy also exist in the literature, some of which are detailed in Appendix \ref{appEquiv}.
\subsection{Collaborative learning}
Popular collaborative machine learning paradigms \cite{ben2019demystifying} include federated learning \cite
{konevcny2016federated,kairouz2021advances,mcmahan2017communication,bonawitz2019towards}, split learning \cite{vepakomma2018split,gupta2018distributed,thapa2022splitfed}, and local parallelism \cite{laskin2020parallel} where client devices hold local data, and only intermediate computations such as model gradients or intermediate representations such as activations are shared by them with a central server. These model updates (gradients or intermediate representations) from each client are aggregated at the server to improve the global model over multiple such rounds of interaction. The clients then download the current model from central server and continue to improve it locally by further updating their model weights based on their local data and are again sent back to the server for aggregation and this process continues. There is no explicit sharing of raw data in this setup and other such methods include \cite{zhang2017poseidon,assran2019stochastic, sergeev2018horovod, gu2023elasticflow}.
\subsection{Linear regression}
In linear regression, the relationship between a response $y \in \mathbb{R}$ and covariates $x \in \mathbb{R}^d$ is given by
$$
y = x^\top \beta + \epsilon, \quad \epsilon \sim \mathcal{N}(0, \sigma^2).
$$
Given $n$ i.i.d. samples $\{(x_i, y_i)\}_{i=1}^n$ from this model, with ${X} \in \mathbb{R}^{n \times d}$ denoting the design matrix and ${Y}$ denoting the response, the ordinary least squares (OLS) estimator of $\beta$ is
$$
\hat{\beta}_{\text{OLS}} = ({X}^\top {X})^{-1} {X}^\top {Y},
$$
and defined when ${X}$ has full column rank. Under Gaussian errors and fixed design, OLS is minimax optimal and achieves the Cramér-Rao lower bound, with the minimax risks as detailed in Appendix \ref{LRapp}.
\section{One (or few) private sample(s) per client} With the preliminaries in place, we now turn to the main regime of interest: many clients, each holding only private one sample. The goal of this section is to show that even in this extreme setting, privatized client messages can be designed so that the server recovers an unbiased centralized gradient in expectation. \par We provide two variants of the protocol with the first being the \textit{single vector modulated} private FL mechanism where the sample on each client is privately transformed using a proposed map while ensuring $(\varepsilon,\delta)$-differential privacy. This is then released for post-processing at the server, to achieve predictive utility in a privacy-preserving manner. 
The post-processing is designed so that the server's aggregate of transformed data representations received from all the clients helps recover an unbiased estimate of the \textit{centralized non-private gradient} in expectation, although while preserving privacy. We generalize this core methodology to a multi-vector framework, utilizing multiple orthonormal vectors to modulate the privacy inducing transformation of our sample, and rigorously establish its efficacy in reducing the mean squared error beyond the original formulation. 

\subsection{Modulated learning}
 We first develop the basic modulated-learning construction that underlies the rest of the paper. The plan is to define the client-side map, quantify its sensitivity, and then show how the server debiases the received statistics to obtain a usable gradient estimate.
 We start with the definition of the prescribed modulating map and correspondingly explain the role of various parameters used in this mapping. 
\subsubsection{Single vector modulated DP}\label{singVec}

At round $t$ of the protocol, the server maintains the global model $\beta_t$ and selects a modulation direction $v_t \in \mathbb{R}^d$ with $\|v_t\|_2 = 1$ and $v_t \perp \beta_t$. The tuple $(\beta_t, v_t)$, along with chosen parameters $(\alpha, \lambda, \omega)$ and the privacy inducing noise level $\sigma$ that is needed, is broadcast to all the clients. Upon receiving $(\beta_t,v_t)$, client $i$ draws $\phi_i^{(t)} \sim \mathrm{Unif}[0, 2\pi]$ and computes the modulated transformation given below.
\begin{equation} \label{eq:map}
g_i^{(t)} = (1 - \alpha) x_i + \lambda \cos\big( \omega \langle x_i, v_t\rangle + \phi_i^{(t)} \big) v_t.
\end{equation}
The parameters of $\alpha,\lambda$ and $\omega$ impact the global sensitivity  of the transformation $g(\cdot)$, which will allow us to calibrate the variance of the additive noise $\sigma^2$ that needs to be used to induce differential privacy. This result is formalized in the lemma~\ref{thm:lipschitz_placeholder} below.
\subsubsection{Role of \texorpdfstring{$\alpha,\lambda$ and $v$}{alpha, lambda and v}}

\begin{lemma}[Lipschitz continuity of $g$]\label{thm:lipschitz_placeholder}
Let $0<\alpha<1$, $\lambda>0$, $\omega>0$, $v\in\mathbb{R}^{d}$ with $\norm{v}_2=1$ and $\phi_0\in[0,2\pi]$.
Define
\begin{equation}
    g(x) \;=\; (1-\alpha)\,x \;+\; \lambda\,\cos\bigl(\omega\,\ip{x}{v}+\phi_0\bigr)\,v + \eta_0,
\quad \text{where } \eta_0 \sim \mathcal{N}(0, \sigma^2 I_d)
\end{equation}

Then $g$ devoid of the noise $\eta_0$ is globally Lipschitz with constant
\[
L \;=\; \lvert 1-\alpha\rvert + \lambda\,\omega .
\]
That is, $\norm{g(x)-g(y)}_2 \,\le\, L\,\norm{x-y}_2$ for all $x,y\in\mathbb{R}^{d}$.
\end{lemma}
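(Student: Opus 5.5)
We write $g_0(x) = (1-\alpha)x + \lambda\cos(\omega\ip{x}{v}+\phi_0)\,v$ for the noiseless map. We bound $\norm{g_0(x)-g_0(y)}_2$ directly via the triangle inequality, splitting it into the linear part and the modulated part. For any $x,y\in\R^d$,
\[
\norm{g_0(x)-g_0(y)}_2 \le \lvert 1-\alpha\rvert\,\norm{x-y}_2 + \lambda\,\bigl\lvert \cos(\omega\ip{x}{v}+\phi_0)-\cos(\omega\ip{y}{v}+\phi_0)\bigr\rvert\,\norm{v}_2 ,
\]
and $\norm{v}_2=1$ removes the last factor.

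For the scalar cosine difference we use the mean value theorem. Since $\lvert\frac{d}{ds}\cos s\rvert = \lvert\sin s\rvert\le 1$, the cosine is $1$-Lipschitz on $\R$, so
\[
\bigl\lvert\cos(a+\phi_0)-\cos(b+\phi_0)\bigr\rvert \le \lvert a-b\rvert .
\]
We apply this with $a=\omega\ip{x}{v}$ and $b=\omega\ip{y}{v}$. The right-hand side is then $\omega\,\lvert\ip{x-y}{v}\rvert$, which by Cauchy--Schwarz and $\norm{v}_2=1$ is at most $\omega\norm{x-y}_2$. Here we use $\omega>0$, so $\lvert\omega\rvert=\omega$. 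Alternatively, the identity $\cos a-\cos b=-2\sin\frac{a+b}{2}\sin\frac{a-b}{2}$ together with $\lvert\sin u\rvert\le\lvert u\rvert$ gives the same bound without calculus.

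Combining the two pieces yields $\norm{g_0(x)-g_0(y)}_2\le(\lvert1-\alpha\rvert+\lambda\omega)\norm{x-y}_2$, which is the claimed constant $L$. The phase $\phi_0$ and the noise $\eta_0$ play no role. The phase cancels inside the cosine difference, and the statement concerns $g$ "devoid of the noise", so $\eta_0$ is excluded. If one instead couples the same noise realization in $g(x)$ and $g(y)$, $\eta_0$ cancels identically.

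No step is a genuine obstacle. The only point needing care is to keep the reduction to the one-dimensional projection $\ip{x-y}{v}$ explicit before applying Cauchy--Schwarz. We also note that the bound is tight when $x-y$ is parallel to $v$ and the sine term is near $\pm1$. It is not needed for the lemma, though.
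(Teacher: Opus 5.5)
Your proof is correct and follows the paper's argument step for step: the triangle inequality separates the linear and cosine parts, $\norm{v}_2=1$ removes the vector factor, the $1$-Lipschitz property of cosine bounds the scalar difference, and Cauchy--Schwarz handles $\omega\lvert\ip{x-y}{v}\rvert$. One small correction to your (unneeded) tightness remark: along $x-y\parallel v$ the local stretch factor is $\lvert 1-\alpha-\lambda\omega\sin\theta\rvert$ with $\theta=\omega\ip{x}{v}+\phi_0$, so the constant $L$ is approached only as $\sin\theta\to -1$, not $\pm 1$.
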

\begin{proof}
Decompose $g(x)-g(y)$:
\[
g(x)-g(y)
=(1-\alpha)(x-y)
+\lambda\bigl[\cos(\omega\ip{x}{v}+\phi_0)-\cos(\omega\ip{y}{v}+\phi_0)\bigr]\,v .
\]
Taking $\ell_2$–norms and using $\norm{v}_2=1$ together with the triangle inequality,
\[
\norm{g(x)-g(y)}_2
\le
\lvert 1-\alpha\rvert\,\norm{x-y}_2
+
\lambda\,
\bigl|\cos(\omega\ip{x}{v}+\phi_0)-\cos(\omega\ip{y}{v}+\phi_0)\bigr| .
\]
Since $\cos t$ is $1$–Lipschitz,  
$|\cos u-\cos w|\le|u-w|$ for all $u,w\in\mathbb{R}$.  
With $u=\omega\ip{x}{v}+\phi_0$ and $w=\omega\ip{y}{v}+\phi_0$,  
$|u-w|=\omega\,|\ip{x-y}{v}|\le\omega\,\norm{x-y}_2$.  
Substituting gives
\[
\norm{g(x)-g(y)}_2
\le
\bigl(\lvert 1-\alpha\rvert+\lambda\omega\bigr)\,\norm{x-y}_2,
\]
so $L=\lvert 1-\alpha\rvert+\lambda\omega$ is a valid Lipschitz constant, and the global sensitivity under the adjacency relation of $x \sim y$ being neighbors when $\left\|x-y\right\|_2 \leq 1$.
\end{proof}Therefore, there are several choices of these parameter values such that the Lipschitz constant is strictly less than $1$, resulting in a corresponding transformation that is a contractive map.

\subsubsection{Client-side broadcast} This Lipschitz bound (in this case, the global sensitivity) is needed to induce privacy, as it quantifies the maximum change in output $g(x)$ for neighboring inputs. To be more precise for the given $L$, the Gaussian mechanism \cite{dwork2016calibrating} to induces differential privacy, by applying an additive Gaussian noise $\xi_i^{(t)} \sim \mathcal{N}(0, \sigma^2 I_d)$ with
$
\sigma = L \sqrt{2\ln(1.25/\delta)}/\varepsilon
$.
The noised version of this transformed data on each client given by,
$
\tilde{g}_i^{(t)} = g_i^{(t)} + \xi_i^{(t)}
$ is then $(\varepsilon,\delta)$-differentially private. This $\tilde{g}_i$, is then sent to the server for a specific post-processing that enables the regression task. 
The client-side of the mechanism is summarized below.
\begin{algorithm}
\caption{Single-Vector: Client Protocol }\label{algo1}
\begin{algorithmic}[1]
\State \textcolor{white}{\colorbox{algoheader}{\parbox{\dimexpr\linewidth-2\fboxsep}{\bfseries\ Input: Private data $(x_i,y_i)$, $(\alpha,\lambda,\omega,\sigma_{\text{DP}}^2)$, $v$}}}
\State Sample $\phi_i \sim \text{Unif}[0,2\pi]$
\State $g_i \gets (1-\alpha)x_i + \lambda \cos(\omega\langle x_i,v\rangle + \phi_i)v$
\State $\xi_i \sim \mathcal{N}(0,\sigma_{\text{DP}}^2 I_d)$
\State $\tilde{g}_i \gets g_i + \xi_i$
\State \textbf{Send} $(\tilde{g}_i, y_i)$ to the server.
\end{algorithmic}
\end{algorithm}
\subsubsection{Server-side aggregation}
 
The server aggregates these received updates as follows.
\begin{align}
\widetilde{\Sigma}_g &= \frac{1}{K} \sum_{i=1}^K \tilde{g}_i^{(t)} \tilde{g}_i^{(t)\top} .
\end{align}

The server then post-processes this $\widetilde{\Sigma}_g $ to get \begin{equation}\widehat{\Sigma}_x = \frac{1}{(1-\alpha)^2} \left( \widetilde{\Sigma}_g - \frac{\lambda^2}{2} vv^\top - \sigma_{\text{DP}}^2 I_d \right).\end{equation} This quantity enables it to obtain an unbiased estimate of the non-private centralized gradient, while still preserving privacy. This is because the privatized matrix $\widehat{\Sigma}_x$  can be post-processed using $y_i$ (which is considered to be non-sensitive in our setting),  $\tilde{g}_i$ which is already a privatized vector, and  $\beta$ which is a privatized vector in any round as follows \begin{equation}
\begin{aligned}
& Z \leftarrow \frac{1}{1-\alpha} \frac{1}{K} \sum_{i=1}^K y_i \tilde{g}_i \\
& G \leftarrow \widetilde{\Sigma}_x \beta-Z
\end{aligned}
\end{equation} to obtain a $G$ such that in expectation, it forms an unbiased estimate of the gradient of the loss function.
This server side of the mechanism is summarized in the algorithm \ref{algo2} below. We clip the coefficients (projection onto the ball of radius $c$), every round as needed to enable convergence analysis, later on. 
\begin{algorithm}
\caption{Single-Vector: Server Protocol }\label{algo2}
\begin{algorithmic}[1]
\State \textcolor{white}{\colorbox{algoheader}{\parbox{\dimexpr\linewidth-2\fboxsep}{\bfseries\ Input: $\{(\tilde{g}_i, y_i)\}_{i=1}^K$, $\beta$, $(\alpha,\lambda,\sigma_{\text{DP}}^2)$}}}
\State $\widetilde{\Sigma}_g \gets \frac{1}{K}\sum_{i=1}^K \tilde{g}_i\tilde{g}_i^\top$
\State $\widetilde{\Sigma}_x \gets \frac{1}{(1-\alpha)^2} \left( \widetilde{\Sigma}_g - \frac{\lambda^2}{2} vv^\top - \sigma_{\text{DP}}^2 I_d \right)$
\State $Z \gets \frac{1}{1-\alpha} \frac{1}{K} \sum_{i=1}^K y_i \tilde{g}_i$
\State $G \gets \widetilde{\Sigma}_x \beta - Z$
\State $\beta \gets \beta- \eta G$
\State $\beta=\frac{\beta}{\max \left(1,\left\|\beta\right\|_2 / c\right)}$
\State {\textbf{Broadcast} the updated regression coefficients $\beta$ and also send new $v \perp \beta$ to the clients.}
\end{algorithmic}
\end{algorithm}
We now show that this algorithm on the server side does result in an unbiased estimate of the gradient while preserving differential privacy. To do so, we first note the following expansion of the expected value $\E[\widetilde{\Sigma}_g]$. The perturbed feature covariance $\widetilde{\Sigma}_g$ is computed from the noisy modulated features $\tilde{g}_i$ sent by the clients. Its expectation can be decomposed into contributions from the noise-free modulated features $g_i$ and the differential privacy (DP) noise $\xi_i$.

\begin{align}
    \E[\widetilde{\Sigma}_g]
    &=\nonumber
    \frac{1}{K} \sum_{i=1}^K \E[\tilde{g}_i\tilde{g}_i^\top]\\\nonumber
    &=
    \frac{1}{K} \sum_{i=1}^K \E[(g_i + \xi_i)(g_i + \xi_i)^\top]\\
    &=
    \frac{1}{K} \sum_{i=1}^K \E[g_i g_i^\top]
    +
    \sigma^2_{\text{DP}} I.
\end{align}The cross terms vanish because the DP noise $\xi_i$ is zero-mean and independent of $g_i$, leaving only the covariance of $g_i$ and the isotropic noise variance. Now we compute the expectation of the noise-free modulated feature covariance:

\begin{align}
    \frac{1}{K} \sum_{i=1}^K \E[g_i g_i^\top]
    &=\nonumber
    (1-\alpha)^2 \frac{1}{K} \sum_{i=1}^K x_i x_i^\top
    +
    \lambda^2 \frac{1}{K} \sum_{i=1}^K  \E[\cos^2(\omega\langle x_i,v\rangle + \phi_i)]v v^\top \\
    &=
    (1-\alpha)^2 \frac{1}{K} \sum_{i=1}^K x_i x_i^\top
    +
    \frac{\lambda^2}{2} v v^\top
\end{align}The second equality uses the fact that $\mathbb{E}[\cos^2(\theta)] = \frac{1}{2}$ when $\theta$ is uniformly distributed over $[0, 2\pi]$. Thus, the cosine modulation contributes a fixed rank-one term scaled by $\lambda^2/2$.

Putting these together,

\begin{align}
    \E[\widetilde{\Sigma}_g]
    =
    (1-\alpha)^2 \frac{1}{K} \sum_{i=1}^K x_i x_i^\top
    +
    \frac{\lambda^2}{2} v v^\top
    +
    \sigma^2_{\text{DP}} I
\end{align}

or, equivalently, after applying the server’s debiasing step:

\begin{align}
    \E[\widetilde{\Sigma}_x]
    =
    \frac{1}{K} \sum_{i=1}^K x_i x_i^\top
\end{align}since $\Sigma_x = \frac{1}{K} \sum_i x_i x_i^\top$, showing that $\widetilde{\Sigma}_x$ is an unbiased estimator of the true data covariance $\Sigma_x$, with the modulation and noise terms removed via algebraic correction.
Similarly, 

\begin{align}
    \E[Z]
    &=
    \frac{1}{1-\alpha} \frac{1}{K} \sum_{i=1}^K \E[y_i \tilde{g}_i]\\
    &=
    \frac{1}{1-\alpha} \frac{1}{K} \sum_{i=1}^K (1-\alpha) y_i x_i + \lambda \E[\cos(\omega\langle x_i,v\rangle + \phi_i)]y_i v \\
    &=
    \frac{1}{K} \sum_{i=1}^K y_i x_i
\end{align} The cosine term vanishes in expectation because $\mathbb{E}[\cos(\cdot)] = 0$ under uniform phase $\phi_i$. Thus, $Z$ is an unbiased estimator of $\frac{1}{K} \sum_i y_i x_i$. Therefore,\begin{align}
    \E[G]
    =
    \E[\widetilde{\Sigma}_x \beta - Z]
    =
    \frac{1}{K} \sum_{i=1}^K x_i x_i^\top \beta
    -
    \frac{1}{K} \sum_{i=1}^K y_i x_i
    =
    \frac{1}{K} X^\top X \beta - \frac{1}{K} X^\top Y 
\end{align}

\begin{align}
    L(\beta) = \frac{1}{2K} \sum_{i=1}^K (x_i^\top \beta - y_i)^2,
    \quad
    \nabla L(\beta) = \frac{1}{K} X^\top X \beta - \frac{1}{K} X^\top Y = \E[G]
\end{align}

Thus, $G$ is an unbiased estimator for the gradient $\nabla L(\beta)$, ensuring that the server’s update direction is correct in expectation while preserving differential privacy.

That requires the server to have a private estimate of $\Sigmax$, as that is the only term in the bias correction function that depends on the raw sensitive data. In  order to estimate it privately, we thereby show the following relation between $\widetilde{\Sigma}_g$ and $\Sigmax$ where $\widetilde{\Sigma}_g$ is obtained as a post-processing of a privatized quantity $\tilde{g}_i^{(t)}$ while $-\frac{\lambda^2}{2} v v^{\top}-\sigma_{\mathrm{DP}}^2 \I_d$ is already private, as it is data independent and public information known to the server.  
$\widetilde{\Sigma}_g$ can be obtained as a post-processing of $\tilde{g}_i^{(t)}$ as shown in the result below, which provides a private estimator for $\Sigmax$ as a corollary.
\begin{tcolorbox}[colback=cyan!10, colframe=black]
\begin{corollary}[Covariance Estimator]\label{thm:cov-estimator}
The sample covariance of perturbed features is defined as $\widetilde{\Sigma}_g = \frac{1}{K} \sum_{i=1}^K \tilde{g}_i \tilde{g}_i^\top$, and the estimator
\[
\widehat{\Sigma}_x = \frac{1}{(1-\alpha)^2} \left( \widetilde{\Sigma}_g - \frac{\lambda^2}{2} v v^\top - \sigma_{\mathrm{DP}}^2 I_d \right)
\]
converges almost surely to $\Sigma_x$ as $K \to \infty$.
\end{corollary}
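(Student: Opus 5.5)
Since $\Sigma_x=\frac1K\sum_i x_ix_i^\top$ depends on $K$, I read the statement as the difference $\widehat{\Sigma}_x-\Sigma_x\to 0$ almost surely as $K\to\infty$. Under the natural i.i.d. data model, where $\Sigma_x\to\E[xx^\top]$, this is the same as convergence to the population second moment. I will prove it with a strong law of large numbers applied to the zero-mean fluctuations. First I use the exact unbiasedness computation above, which gives $\E[\widetilde{\Sigma}_g]=(1-\alpha)^2\Sigma_x+\frac{\lambda^2}{2}vv^\top+\sigma_{\mathrm{DP}}^2 I_d$. This yields the identity
\[
\widehat{\Sigma}_x-\Sigma_x=\frac{1}{(1-\alpha)^2}\cdot\frac1K\sum_{i=1}^K D_i,\qquad D_i:=\tilde g_i\tilde g_i^\top-\E[\tilde g_i\tilde g_i^\top],
\]
where the expectation is over the phases $\phi_i$ and the noise $\xi_i$, conditionally on the data. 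It therefore suffices to show that $\frac1K\sum_i D_i\to0$ almost surely, entrywise.

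Next I expand $\tilde g_i\tilde g_i^\top$ with $g_i=(1-\alpha)x_i+\lambda c_i v$, where $c_i=\cos(\omega\langle x_i,v\rangle+\phi_i)$. The centered pieces of $D_i$ are:
\begin{itemize}
\item $\lambda^2(c_i^2-\tfrac12)vv^\top$;
\item $(1-\alpha)\lambda c_i(x_iv^\top+vx_i^\top)$;
\item the cross terms $g_i\xi_i^\top+\xi_ig_i^\top$;
\item $\xi_i\xi_i^\top-\sigma_{\mathrm{DP}}^2I_d$.
\end{itemize}
Conditionally on the data $(x_i)$, the $D_i$ are independent and mean zero, because the $\phi_i$ and $\xi_i$ are drawn fresh and independently per client. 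The Gaussian terms have finite fourth moments, and the cosine terms are bounded by $1$.

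I will then bound the second moments: $\E\|D_i\|_F^2\le C\bigl(1+\|x_i\|_2^2\bigr)$ for a constant $C$ depending only on $\lambda,\alpha,\sigma_{\mathrm{DP}},d$. Under a bounded-feature assumption $\|x_i\|_2\le B$, which is natural here since sensitivity is calibrated on a bounded domain, we get $\sum_i \E\|D_i\|_F^2/i^2<\infty$. Kolmogorov's strong law for independent, non-identically distributed mean-zero variables then gives $\frac1K\sum_i D_i\to0$ a.s. conditionally on the data, and therefore unconditionally. If instead the $x_i$ are i.i.d. with $\E\|x\|^2<\infty$, then the $D_i$ are i.i.d. and integrable, and the classical SLLN applies directly. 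In that setting $\Sigma_x\to\E[xx^\top]$ a.s. as well, again by the SLLN.

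The main obstacle is that the statement leaves the data model and the moment conditions implicit. The $D_i$ are not identically distributed if the $x_i$ are treated as fixed. I would handle this with the Kolmogorov criterion under a boundedness or uniform second-moment assumption on $x_i$. I would also state explicitly that $v$ is held fixed over the aggregation within a round, so that it does not depend on the $\tilde g_i$.
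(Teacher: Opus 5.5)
Your proposal is correct and follows the paper's route. The paper's only justification is the preceding computation $\E[\widetilde{\Sigma}_g]=(1-\alpha)^2\Sigma_x+\frac{\lambda^2}{2}vv^\top+\sigma_{\mathrm{DP}}^2 I_d$, hence $\E[\widehat{\Sigma}_x]=\Sigma_x$, and you center with that same identity; what you add is the law-of-large-numbers step the paper leaves implicit, with explicit assumptions (Kolmogorov's criterion via $\E\|D_i\|_F^2\le C(1+\|x_i\|_2^2)$ for bounded fixed-design features, or the classical SLLN for i.i.d.\ data, with $v$ held fixed), and this step is genuinely needed because unbiasedness alone does not give almost sure convergence.
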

\end{tcolorbox}

\section{Privacy accounting}
It is straightforward as follows, unlike DP-SGD's moments accountant.
For round $i$, we achieve a $\rho_i$-zCDP guarantee by adding zero-mean
Gaussian noise with variance
\[
    \sigma_i^2 = \frac{\Delta^2}{2\rho_i},
\]
where $\Delta$ is the sensitivity. Since each client operates on a
\emph{distinct} local dataset, \textbf{parallel composition} applies
within each round, so the privacy parameter $\rho_i$ does not
accumulate across clients. Any subsequent computation on the
released output is covered by \textbf{post-processing}, which leaves
$\rho_i$ unchanged.

Across rounds, the \textbf{sequential composition} theorem for zCDP
gives
\[
    \rho_{1:T} = \sum_{i=1}^{T} \rho_i,
\]
so after rounds $i$ and $i+1$ the cumulative budget is simply
$\rho_{i} + \rho_{i+1}$. This is significantly \emph{tighter} than
composition under $(\varepsilon,\delta)$-DP (even via the Advanced
Composition theorem, which incurs an $O(\sqrt{T})$ overhead in
$\varepsilon$), and also tighter than R\'enyi-DP composition, which
requires a subsequent conversion step back to
$(\varepsilon,\delta)$-DP that introduces additional slack.
\section{Non-iterative scheme}
The previous section develops an iterative gradient-based protocol. We now show that the same modulated representation can also support a one-shot estimator.
We now present our non-iterative estimator for the linear regression parameter $\beta^*$. The key insight is that we can leverage the modulated features $h_i$ and responses $y_i$ to construct unbiased moment estimators without requiring multiple rounds of iteration, where $h_i$ are computed as follows:
\[
h_i = (1-\alpha)x_i + \lambda \cos(\omega\langle x_i, v\rangle + \phi_i)v + \eta_i.
\]
This approach provides significant advantages in terms of not having to perform privacy composition across iterations. To obtain the estimator, we first compute the modulated feature-response products, $z_i = h_i y_i$. These products capture the covariance structure between the modulated features and responses. Next, we average these products across all samples
\[
\bar{z} = \frac{1}{K} \sum_{i=1}^K z_i.
\]
Finally, we correct for the modulation bias to obtain our moment estimator,
\[
\widehat{\Gamma} = \frac{1}{1-\alpha} \bar{z}.
\]
The unbiasedness of this estimator follows from careful analysis of the cross-moment:
\begin{equation}
\begin{aligned}
\mathbb{E}[h_i y_i] &= \mathbb{E}\left[\left((1-\alpha)x_i + \lambda \cos(\omega\langle x_i, v\rangle + \phi_i)v + \eta_i\right)\left(\langle x_i, \beta^*\rangle + \varepsilon_i\right)\right] \\
&= \mathbb{E}\left[(1-\alpha)x_i\langle x_i, \beta^*\rangle\right] + \mathbb{E}\left[(1-\alpha)x_i\varepsilon_i\right] \\
&\quad + \mathbb{E}\left[\lambda \cos(\omega\langle x_i, v\rangle + \phi_i)v\langle x_i, \beta^*\rangle\right] + \mathbb{E}\left[\lambda \cos(\omega\langle x_i, v\rangle + \phi_i)v\varepsilon_i\right] \\
&\quad + \mathbb{E}\left[\eta_i\langle x_i, \beta^*\rangle\right] + \mathbb{E}\left[\eta_i\varepsilon_i\right].
\end{aligned}
\end{equation}Now upon expanding each term, we get the following.
$$\mathbb{E}\left[(1-\alpha)x_i\langle x_i, \beta^*\rangle\right] = (1-\alpha)\mathbb{E}[x_i x_i^\top]\beta^* = (1-\alpha)\Sigma_x \beta^*.$$
Since $\varepsilon_i$ is zero-mean and independent of $x_i$ we have,$$\mathbb{E}\left[(1-\alpha)x_i\varepsilon_i\right] = 0.$$
As $\phi_i$ is uniform on $[0,2\pi]$, making $\cos(\omega\langle x_i, v\rangle + \phi_i)$ zero-mean and independent of $x_i$ we have
$$\mathbb{E}\left[\lambda \cos(\omega\langle x_i, v\rangle + \phi_i)v\langle x_i, \beta^*\rangle\right] = 0.$$ 
Similarly, as $\eta_i$ is zero-mean and independent of $x_i$ we have,
 $\mathbb{E}\left[\lambda \cos(\omega\langle x_i, v\rangle + \phi_i)v\varepsilon_i\right] = 0$,
 $\mathbb{E}\left[\eta_i\langle x_i, \beta^*\rangle\right] = 0$,  
and $\mathbb{E}\left[\eta_i\varepsilon_i\right] = 0$. Therefore, $\mathbb{E}[h_i y_i] = (1-\alpha)\Sigma_x \beta^*$, which demonstrates that $\widehat{\Gamma}$ provides an unbiased estimate of the population moment as
\[
\mathbb{E}[\widehat{\Gamma}] = \Sigma_x \beta^*.
\]
The final parameter estimate thereby is obtained by solving the following using our covariance estimator and the cross-moment  $\widehat{\Gamma}$.
\[
\text{Solve } \hat{\beta} = \widehat{\Sigma}_x^{-1} \widehat{\Gamma} \text{ to recover } \beta^*.
\]
This one-step procedure avoids the privacy accumulation of iterative methods due to privacy properties under composition. That said, practically obtaining a good solution is also contingent on the matrix inverse and on dealing with the overall conditioning of this system.  The estimate of $\widehat{\Sigma}_x$ can be computed from corollary 3.3 where $h_i$ is used in place of $\tilde{g}_i$. As $h_i$ in non-iterative version and $\tilde{g}_i$ in iterative version are exactly the same.
\definecolor{algoheader}{RGB}{50,50,50}  

\section{Reconstruction error lower bound}
\label{sec:reconstruction}

The preceding sections establish that the protocol is useful in the sense that the server
recovers an unbiased gradient in expectation, while protecting the privacy. We now ask the complementary question
from an adversary's perspective: given what a client transmits, how accurately can
an attacker reconstruct that client's private feature vector? We derive a
Cram\'{e}r--Rao lower bound on the mean squared reconstruction error achievable
by any unbiased attacker, thereby quantifying the protocol's privacy protection
directly in terms of its parameters.

\paragraph{Setup.}
Under the single-vector protocol, client $i$ transmits
\begin{equation}
    \label{eq:client-transmission}
    (\tilde{g}_i,\, y_i),
    \qquad
    \tilde{g}_i = g_i + \xi_i,
    \quad
    g_i = (1-\alpha)x_i + \lambda\cos(\theta_i)\,v,
    \quad
    \theta_i = \omega\langle x_i, v\rangle + \varphi_i,
\end{equation}
where $\xi_i \sim \mathcal{N}(0,\sigma^2_{\mathrm{DP}}I_d)$,
$\varepsilon_i \sim \mathcal{N}(0,\sigma^2_y)$,
and $\varphi_i \sim \mathrm{Unif}[0,2\pi]$ are all mutually independent.
The attacker is assumed to know all public parameters:
$\beta$, $v$, $\alpha$, $\lambda$, $\omega$, $\sigma^2_{\mathrm{DP}}$,
$\beta^*$, and $\sigma^2_y$. Their goal is to construct an unbiased estimator
$\hat{x}$ of $x_i$ from the observation $(\tilde{g}_i, y_i)$.

\paragraph{Information decomposition.}
Because $\tilde{g}_i$ and $y_i$ are conditionally independent given $x_i$
(their randomness comes from independent sources: $\varphi_i$ and $\xi_i$ for
$\tilde{g}_i$, and $\varepsilon_i$ for $y_i$), the total Fisher information
about $x_i$ decomposes additively:
\begin{equation}
    \label{eq:fisher-decomp}
    \mathcal{I}_{(\tilde{g}_i,\, y_i)}(x_i)
    = \mathcal{I}_{\tilde{g}_i}(x_i) + \mathcal{I}_{y_i}(x_i).
\end{equation}
We compute each term via the following auxiliary lemmas.

\begin{lemma}[Fisher information for a Gaussian-perturbed map]
\label{lem:fisher-gaussian}
Let $\tilde{g} = f(x) + \xi$ where $\xi \sim \mathcal{N}(0,\sigma^2 I_d)$ and
$f:\mathbb{R}^d \to \mathbb{R}^d$ is differentiable with Jacobian $J_f(x)$.
Then the Fisher information matrix for $x$ given observation $\tilde{g}$ is
\[
    \mathcal{I}_{\tilde{g}}(x) = \frac{1}{\sigma^2}\,J_f(x)^\top J_f(x).
\]
\end{lemma}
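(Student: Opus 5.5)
The plan is to compute the Fisher information straight from the Gaussian log-likelihood. Since $\xi\sim\mathcal{N}(0,\sigma^2 I_d)$ is independent of $x$, the density of $\tilde g$ given $x$ is
\[
p(\tilde g\mid x)=(2\pi\sigma^2)^{-d/2}\exp\Bigl(-\tfrac{1}{2\sigma^2}\norm{\tilde g-f(x)}_2^2\Bigr).
\]
Only the quadratic exponent depends on $x$. We treat $x$ as a deterministic parameter, with $f$ a fixed deterministic map. For the random-phase map $g_i$ of \eqref{eq:client-transmission}, this means the lemma is applied conditionally on the phase $\varphi_i$; the conditional-versus-marginal issue is left to the later use of the lemma.

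The first step is the score. By the chain rule,
\[
\nabla_x \log p(\tilde g\mid x)=\tfrac{1}{\sigma^2}J_f(x)^\top\bigl(\tilde g-f(x)\bigr)=\tfrac{1}{\sigma^2}J_f(x)^\top\xi .
\]
This requires only differentiability of $f$, which is assumed. The second step uses the definition $\mathcal{I}_{\tilde g}(x)=\E[\nabla_x\log p\,(\nabla_x\log p)^\top]$:
\[
\mathcal{I}_{\tilde g}(x)=\tfrac{1}{\sigma^4}J_f(x)^\top\E[\xi\xi^\top]J_f(x)=\tfrac{1}{\sigma^2}J_f(x)^\top J_f(x),
\]
since $\E[\xi\xi^\top]=\sigma^2 I_d$ and $J_f(x)$ is deterministic given $x$. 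As a cross-check, I would also compute the negative expected Hessian. It has the form
\[
\tfrac{1}{\sigma^2}\Bigl(J_f^\top J_f-\textstyle\sum_k \xi_k\nabla^2 f_k(x)\Bigr).
\]
The second term has mean zero because $\E[\xi]=0$, so the two definitions agree. This cross-check needs $f$ to be twice differentiable, which holds for the cosine map.

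The main obstacle is justifying the regularity conditions needed to use the Fisher information, and later the Cramér--Rao bound: differentiation under the integral sign, and the score having zero mean. The score zero-mean property is immediate here, since $\E[J_f^\top\xi]=0$. Differentiation under the integral is justified by dominated convergence: the Gaussian density has derivatives in $x$ bounded by a polynomial in $\norm{\tilde g}$ times the Gaussian, locally uniformly in $x$, because $f$ and $J_f$ are continuous. For the specific map $g$ of Lemma~\ref{thm:lipschitz_placeholder}, $J_f$ is explicitly bounded, so this holds even globally.
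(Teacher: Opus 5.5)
Your proposal is correct and takes the same route as the paper's proof: you compute the chain-rule score $\frac{1}{\sigma^2}J_f(x)^\top\xi$ and then take the outer-product expectation using $\E[\xi\xi^\top]=\sigma^2 I_d$. Your negative-Hessian cross-check and regularity remarks are sound additions that the paper does not include.
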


\begin{proof}
Conditioned on $x$, we have $\tilde{g}\mid x \sim \mathcal{N}(f(x),\sigma^2 I_d)$
with log-likelihood
$\log p(\tilde{g}\mid x) = \mathrm{const} - \tfrac{1}{2\sigma^2}\|\tilde{g}-f(x)\|^2$.
The score is
$\nabla_x \log p(\tilde{g}\mid x) = \tfrac{1}{\sigma^2}J_f(x)^\top(\tilde{g}-f(x))$.
Since $\tilde{g}-f(x)=\xi \sim \mathcal{N}(0,\sigma^2 I_d)$,
\[
    \mathcal{I}_{\tilde{g}}(x)
    = \mathbb{E}\!\left[\nabla_x \log p\cdot(\nabla_x \log p)^\top\right]
    = \frac{1}{\sigma^4}\,J_f(x)^\top\,\mathbb{E}[\xi\xi^\top]\,J_f(x)
    = \frac{1}{\sigma^2}\,J_f(x)^\top J_f(x)
\]
\end{proof}

\begin{lemma}[Jacobian of the client map]
\label{lem:jacobian}
Fix the phase $\varphi_i$ and let $\theta_i = \omega\langle x_i,v\rangle + \varphi_i$.
The Jacobian of $x_i \mapsto g_i$ is
\[
    J_g(x_i) = (1-\alpha)I_d - \lambda\omega\sin(\theta_i)\,vv^\top.
\]
Consequently,
\[
    J_g(x_i)^\top J_g(x_i)
    = (1-\alpha)^2 I_d + c_1(x_i)\,vv^\top,
\]
where
\[
    c_1(x_i)
    := -2\lambda\omega(1-\alpha)\sin(\theta_i) + \lambda^2\omega^2\sin^2(\theta_i).
\]
\end{lemma}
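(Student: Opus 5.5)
The plan is a direct differentiation of the client map with the phase $\varphi_i$ held fixed, followed by expanding the Gram matrix $J_g^\top J_g$ using $\|v\|_2=1$.

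\textbf{Step 1: the Jacobian.} Write $g_i = (1-\alpha)x_i + \lambda\cos(\theta_i)\,v$ with $\theta_i = \omega\langle x_i,v\rangle + \varphi_i$. The linear term contributes $(1-\alpha)I_d$. For the modulation term, apply the chain rule: $\nabla_{x_i}\theta_i = \omega v$, so $\nabla_{x_i}\cos(\theta_i) = -\omega\sin(\theta_i)\,v$. The $x_i$-independent vector $v$ multiplies this scalar, so the Jacobian of $x_i\mapsto \lambda\cos(\theta_i)v$ is the outer product $\lambda\, v\,(\nabla\cos\theta_i)^\top = -\lambda\omega\sin(\theta_i)\,vv^\top$. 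Adding the two pieces gives $J_g(x_i) = (1-\alpha)I_d - \lambda\omega\sin(\theta_i)vv^\top$.

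\textbf{Step 2: the product.} Set $s := \lambda\omega\sin(\theta_i)$. Since $J_g$ is symmetric,
\[
J_g^\top J_g = \bigl((1-\alpha)I_d - s\,vv^\top\bigr)^2 = (1-\alpha)^2 I_d - 2(1-\alpha)s\,vv^\top + s^2\,vv^\top vv^\top .
\]
Using $\|v\|_2=1$, we have $vv^\top vv^\top = v(v^\top v)v^\top = vv^\top$. Collecting terms gives the coefficient $c_1(x_i) = -2\lambda\omega(1-\alpha)\sin(\theta_i) + \lambda^2\omega^2\sin^2(\theta_i)$ in front of $vv^\top$, which is the claimed identity.

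\textbf{Main point of care.} No step is technically hard. The only things to watch are the sign from differentiating cosine and the orientation of the outer product. Because the matrix is symmetric, the orientation does not change the result here. The fixed-phase assumption is what makes $g_i$ a deterministic differentiable function of $x_i$; this is what allows the result to be fed into Lemma~\ref{lem:fisher-gaussian} conditionally on $\varphi_i$.
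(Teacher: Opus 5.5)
Your proposal is correct and follows the paper's proof: differentiate with the phase fixed to obtain $J_g(x_i) = (1-\alpha)I_d - \lambda\omega\sin(\theta_i)\,vv^\top$, then expand the square using $(vv^\top)^2 = vv^\top$, which holds because $\|v\|_2=1$. You also state explicitly that $J_g$ is symmetric, which the paper uses without comment when it writes $J_g^\top J_g$ as a square.
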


\begin{proof}
Differentiating $g_i = (1-\alpha)x_i + \lambda\cos(\omega\langle x_i,v\rangle+\varphi_i)\,v$
with respect to $x_i$ gives
$J_g(x_i) = (1-\alpha)I_d - \lambda\omega\sin(\theta_i)\,vv^\top$.
Expanding the product and using $\|v\|=1$ (so $(vv^\top)^2 = vv^\top$),
\begin{align*}
    J_g^\top J_g
    &= \bigl[(1-\alpha)I_d - \lambda\omega\sin(\theta_i)\,vv^\top\bigr]^2 \\
    &= (1-\alpha)^2 I_d
       - 2\lambda\omega(1-\alpha)\sin(\theta_i)\,vv^\top
       + \lambda^2\omega^2\sin^2(\theta_i)\,vv^\top \\
    &= (1-\alpha)^2 I_d + c_1(x_i)\,vv^\top
\end{align*}
\end{proof}

\begin{theorem}[Conditional reconstruction lower bound]
\label{thm:crb-conditional}
Under the single-vector protocol, fix the client phase $\varphi_i$ and let
$\theta_i = \omega\langle x_i,v\rangle + \varphi_i$.
For any unbiased estimator $\hat{x}$ of $x_i$ constructed from $(\tilde{g}_i, y_i)$,
the per-dimension mean squared reconstruction error satisfies
\begin{equation}
    \label{eq:crb}
    \frac{1}{d}\,\mathbb{E}\!\left[\|\hat{x}-x_i\|^2\right]
    \;\geq\;
    \frac{1}{
        \dfrac{1}{\sigma^2_{\mathrm{DP}}}
        \!\left[(1-\alpha)^2 + \dfrac{c_1(x_i)}{d}\right]
        +\dfrac{\|\beta^*\|^2}{d\,\sigma^2_y}
    },
\end{equation}
where $c_1(x_i) = -2\lambda\omega(1-\alpha)\sin(\theta_i)
+ \lambda^2\omega^2\sin^2(\theta_i)$.
\end{theorem}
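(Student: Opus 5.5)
The plan is to assemble the Fisher information from the two auxiliary lemmas, invert it through the Cramér--Rao bound, and then turn the resulting matrix trace inequality into a scalar bound with Jensen's inequality.

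\textbf{Step 1: the $\tilde g_i$ component.} With $\varphi_i$ fixed, $\tilde g_i = g(x_i) + \xi_i$ is a Gaussian-perturbed differentiable map. Lemma~\ref{lem:fisher-gaussian} applies with $f = g$ and $\sigma^2 = \sigma^2_{\mathrm{DP}}$. Combining it with Lemma~\ref{lem:jacobian} gives
\[
\mathcal{I}_{\tilde g_i}(x_i) = \frac{1}{\sigma^2_{\mathrm{DP}}}\bigl[(1-\alpha)^2 I_d + c_1(x_i)\, vv^\top\bigr].
\]

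\textbf{Step 2: the $y_i$ component.} Conditional on $x_i$, we have $y_i \sim \mathcal{N}(\langle x_i,\beta^*\rangle, \sigma_y^2)$. This is again Lemma~\ref{lem:fisher-gaussian}, now in dimension one with the linear map $f(x) = \langle x,\beta^*\rangle$, so
\[
\mathcal{I}_{y_i}(x_i) = \frac{1}{\sigma_y^2}\beta^*\beta^{*\top}.
\]

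\textbf{Step 3: total information.} Conditional independence of $\tilde g_i$ and $y_i$ given $x_i$ justifies the additive decomposition \eqref{eq:fisher-decomp}. The total information is therefore
\[
\mathcal{I} = \frac{(1-\alpha)^2}{\sigma^2_{\mathrm{DP}}} I_d + \frac{c_1}{\sigma^2_{\mathrm{DP}}} vv^\top + \frac{1}{\sigma_y^2}\beta^*\beta^{*\top}.
\]

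\textbf{Step 4: from the CRB to a scalar bound.} For an unbiased $\hat x$, the multivariate Cramér--Rao bound gives $\mathrm{Cov}(\hat x) \succeq \mathcal{I}^{-1}$. Since $\hat x$ is unbiased, $\mathbb{E}\|\hat x - x_i\|^2 = \mathrm{tr}\,\mathrm{Cov}(\hat x) \ge \mathrm{tr}(\mathcal{I}^{-1})$. This step needs $\mathcal{I}$ to be positive definite. It holds when $(1-\alpha)^2 + c_1 > 0$, and indeed
\[
(1-\alpha)^2 + c_1 = \bigl((1-\alpha) - \lambda\omega\sin\theta_i\bigr)^2 \ge 0,
\]
so I will assume this quantity is nonzero, or treat the degenerate case via a pseudo-inverse or limiting argument.

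Computing $\mathcal{I}^{-1}$ exactly would require a rank-two Woodbury update that depends on $\langle v,\beta^*\rangle$, which is messy. Instead I would use Jensen's inequality for the convex function $t \mapsto 1/t$ on the eigenvalues $\mu_1,\dots,\mu_d$ of $\mathcal{I}$:
\[
\frac{1}{d}\mathrm{tr}(\mathcal{I}^{-1}) = \frac{1}{d}\sum_j \mu_j^{-1} \ge \Bigl(\frac{1}{d}\sum_j \mu_j\Bigr)^{-1} = \frac{d}{\mathrm{tr}\,\mathcal{I}}.
\]
Using $\|v\| = 1$,
\[
\mathrm{tr}\,\mathcal{I} = \frac{d(1-\alpha)^2 + c_1}{\sigma^2_{\mathrm{DP}}} + \frac{\|\beta^*\|^2}{\sigma_y^2}.
\]
Dividing by $d$ yields exactly the denominator in \eqref{eq:crb}.

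\textbf{Main obstacle.} The delicate part is rigor rather than algebra. Fixing $\varphi_i$ makes $\theta_i$ and hence $c_1$ depend on the unknown $x_i$, so the information matrix is local, evaluated at the true parameter. The CRB regularity conditions (differentiability of the likelihood in $x_i$, interchange of differentiation and integration) hold here because the model is Gaussian with a smooth mean. The invertibility caveat from Step 4 also has to be stated. If the assumption $v\perp\beta$ were meant to simplify things, it would not help directly: the relevant vector is $\beta^*$, not $\beta$, which is one more reason to rely on the trace/Jensen route rather than an exact inverse.
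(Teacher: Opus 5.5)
Your proposal is correct and follows the paper's proof essentially step for step. You obtain the Fisher information of $\tilde g_i$ from Lemmas~\ref{lem:fisher-gaussian} and~\ref{lem:jacobian}, add the rank-one term ${\beta^*}{\beta^*}^\top/\sigma_y^2$ from $y_i$ by additivity, apply the Cram\'er--Rao bound, and finish with $\mathrm{tr}(A^{-1})\ge d^2/\mathrm{tr}(A)$, which you justify via Jensen on the eigenvalues. Your extra observation that $(1-\alpha)^2+c_1(x_i)=\bigl((1-\alpha)-\lambda\omega\sin\theta_i\bigr)^2\ge 0$, so that positive definiteness of $\mathcal{I}$ has to be assumed or handled separately, is a legitimate rigor point that the paper's proof leaves implicit.
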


\begin{proof}
Fisher information from $\tilde{g}_i$ can be written upon conditioning on the fixed phase $\varphi_i$, as the map $x_i \mapsto g_i$ is
deterministic and differentiable based on lemmas~\ref{lem:fisher-gaussian}
and~\ref{lem:jacobian} as, 
\[
    \mathcal{I}_{\tilde{g}_i}(x_i \mid \varphi_i)
    = \frac{1}{\sigma^2_{\mathrm{DP}}}
      \bigl[(1-\alpha)^2 I_d + c_1(x_i)\,vv^\top\bigr].
\] Now, the Fisher information from $y_i$, based on
$y_i \mid x_i \sim \mathcal{N}(\langle\beta^*,x_i\rangle,\sigma^2_y)$,
the score is
$\nabla_{x_i}\log p(y_i\mid x_i)
= \tfrac{1}{\sigma^2_y}(y_i - \langle\beta^*,x_i\rangle)\beta^*$,
can be stated as, 
\[
    \mathcal{I}_{y_i}(x_i) = \frac{1}{\sigma^2_y}\,\beta^*{\beta^*}^\top.
\]
The total Fisher information therefore, is
\[
    \mathcal{I}(x_i)
    = \frac{1}{\sigma^2_{\mathrm{DP}}}
      \bigl[(1-\alpha)^2 I_d + c_1(x_i)\,vv^\top\bigr]
      + \frac{1}{\sigma^2_y}\,\beta^*{\beta^*}^\top.
\]
For any unbiased $\hat{x}$, the Cram\'{e}r--Rao inequality given by
$\mathrm{Cov}(\hat{x})\succeq \mathcal{I}(x_i)^{-1}$, gives
$\mathbb{E}[\|\hat{x}-x_i\|^2]
= \mathrm{tr}(\mathrm{Cov}(\hat{x}))
\geq \mathrm{tr}(\mathcal{I}(x_i)^{-1})$.
Applying the matrix-trace inequality $\mathrm{tr}(A^{-1})\geq d^2/\mathrm{tr}(A)$
for any positive definite $A\in\mathbb{R}^{d\times d}$ yields the stated inequality~\eqref{eq:crb} as,
\[
    \mathrm{tr}(\mathcal{I}(x_i))
    = \frac{(1-\alpha)^2 d + c_1(x_i)}{\sigma^2_{\mathrm{DP}}}
      + \frac{\|\beta^*\|^2}{\sigma^2_y}.
\]

\end{proof}

\begin{corollary}[Phase-averaged reconstruction lower bound]
\label{cor:crb-marginal}
Taking the expectation of the denominator of~\eqref{eq:crb} over the
uniform phase $\varphi_i \sim \mathrm{Unif}[0,2\pi]$, and using
$\mathbb{E}_{\varphi_i}[\sin\theta_i] = 0$ and
$\mathbb{E}_{\varphi_i}[\sin^2\theta_i] = \tfrac{1}{2}$,
we obtain
\[
    \mathbb{E}_{\varphi_i}[c_1(x_i)]
    = \frac{\lambda^2\omega^2}{2}
    \;\geq\; 0.
\]
Hence the following unconditional lower bound holds:
\begin{equation}
    \label{eq:crb-marginal}
    \frac{1}{d}\,\mathbb{E}\!\left[\|\hat{x}-x_i\|^2\right]
    \;\geq\;
    \frac{1}{
        \dfrac{(1-\alpha)^2}{\sigma^2_{\mathrm{DP}}}
        + \dfrac{\lambda^2\omega^2}{2d\,\sigma^2_{\mathrm{DP}}}
        + \dfrac{\|\beta^*\|^2}{d\,\sigma^2_y}
    }.
\end{equation}
This closed-form bound depends only on the protocol parameters and is
straightforward to evaluate and optimize. The modulation term
$\lambda^2\omega^2/(2d\sigma^2_{\mathrm{DP}})$ is always non-negative,
confirming that the cosine modulation never harms the attacker's lower
bound on average, and improves it whenever $\lambda\omega > 0$.
\end{corollary}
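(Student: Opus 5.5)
The plan is to start from the conditional bound of Theorem~\ref{thm:crb-conditional} and pass to the phase-averaged statement. Write $D(\varphi_i)$ for the denominator in~\eqref{eq:crb}, so the conditional bound reads $\frac{1}{d}\E[\|\hat{x}-x_i\|^2 \mid \varphi_i] \ge 1/D(\varphi_i)$.

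\textbf{Step 1: average $c_1$ over the phase.} Since $\theta_i = \omega\langle x_i,v\rangle + \varphi_i$ with $\varphi_i\sim\mathrm{Unif}[0,2\pi]$ independent of $x_i$, the angle $\theta_i$ is uniform on the circle modulo $2\pi$ for every fixed $x_i$. Hence $\E_{\varphi_i}[\sin\theta_i]=0$ and $\E_{\varphi_i}[\sin^2\theta_i]=\tfrac12$. By linearity of the expression for $c_1$ in Lemma~\ref{lem:jacobian}, this gives $\E_{\varphi_i}[c_1(x_i)]=\lambda^2\omega^2/2\ge 0$, independent of $x_i$. Substituting into $D$ then yields $\E_{\varphi_i}[D(\varphi_i)]$ equal to the denominator in~\eqref{eq:crb-marginal}.

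\textbf{Step 2: pass from the conditional to the unconditional bound.} Averaging the conditional inequality over $\varphi_i$ gives
\[
\frac{1}{d}\E\|\hat{x}-x_i\|^2 \;\ge\; \E_{\varphi_i}\bigl[1/D(\varphi_i)\bigr].
\]
The function $t\mapsto 1/t$ is convex on $(0,\infty)$, so Jensen's inequality gives $\E[1/D]\ge 1/\E[D]$. This is exactly the direction needed, and combining it with Step 1 gives~\eqref{eq:crb-marginal}.

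\textbf{Main obstacle: positivity of $D$.} Jensen and Theorem~\ref{thm:crb-conditional} (through the trace inequality for positive definite matrices) both require $D(\varphi_i)>0$ for every phase. Since $c_1$ can be negative, I would establish this by noting that $J_g^\top J_g = \bigl((1-\alpha)-\lambda\omega\sin\theta_i\bigr)^2 vv^\top + (1-\alpha)^2(I-vv^\top)$. This is positive semidefinite, so $(1-\alpha)^2 d + c_1 \ge (1-\alpha)^2(d-1) \ge 0$.

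This still leaves the degenerate direction $v$: when $d=1$, or when $\lambda\omega\sin\theta_i = 1-\alpha$, the Fisher information can be singular. I would handle it in one of two ways:
\begin{itemize}
\item restrict to the parameter regime $\lambda\omega<1-\alpha$ (the contractive regime of Lemma~\ref{thm:lipschitz_placeholder}), which makes $J_g$ invertible for all phases; or
\item note that the $\beta^*{\beta^*}^\top/\sigma_y^2$ term together with the convention $1/0=\infty$ keeps the inequality valid, since a bound of $+\infty$ on a measure-zero set of phases does not affect the average direction of Jensen.
\end{itemize}

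Finally, I would remark that the averaged modulation term is nonnegative, which gives the interpretive claim in the statement.
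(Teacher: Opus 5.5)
Your proposal is correct and follows the paper's argument. The paper simply averages the denominator of~\eqref{eq:crb} over $\varphi_i$ using $\mathbb{E}_{\varphi_i}[\sin\theta_i]=0$ and $\mathbb{E}_{\varphi_i}[\sin^2\theta_i]=\tfrac12$ and writes ``Hence''; your Jensen step $\mathbb{E}[1/D]\ge 1/\mathbb{E}[D]$ is the justification it leaves implicit. Your positivity check is a useful addition, though $\lambda\omega<1-\alpha$ is not the contractive regime of Lemma~\ref{thm:lipschitz_placeholder}, which is $\lambda\omega<\alpha$.

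One scope remark: applying Theorem~\ref{thm:crb-conditional} phase by phase presumes $\hat{x}$ is unbiased at every fixed phase. For an attacker who never sees $\varphi_i$ and is only unbiased on average, the same bound follows instead from $\mathcal{I}_{(\tilde{g}_i,y_i)}(x_i)\preceq\mathbb{E}_{\varphi_i}[\mathcal{I}(x_i\mid\varphi_i)]=\frac{1}{\sigma^2_{\mathrm{DP}}}\bigl[(1-\alpha)^2I_d+\frac{\lambda^2\omega^2}{2}vv^\top\bigr]+\frac{1}{\sigma^2_y}\beta^*{\beta^*}^\top$ (Fisher information can only drop when the phase is unobserved) together with the trace inequality. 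That route needs no positivity case analysis, since the averaged matrix is positive definite.
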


\subsection{Interpretation}\label{rem:phase-dependence}
The bound~\eqref{eq:crb} holds at each fixed realization of the phase
$\varphi_i$, through the dependence of $c_1(x_i)$ on
$\theta_i = \omega\langle x_i,v\rangle + \varphi_i$.
The effects can be concluded based on the denominator. Increasing $\sigma^2_{\mathrm{DP}}$ shrinks the first denominator term, directly increasing the lower bound and making reconstruction harder for any attacker. Similarly, when $\sin\theta_i > 0$ and the cross term $2\lambda\omega(1-\alpha)\sin\theta_i$ dominates, $c_1(x_i)$ is negative. This reduces the Fisher information from $\tilde{g}_i$ below the no-modulation level $(1-\alpha)^2 d/\sigma^2_{\mathrm{DP}}$, providing an additional layer of protection along the modulation direction $v$ beyond the Gaussian noise alone. The term $\|\beta^*\|^2/(d\sigma^2_y)$ reflects information about $x_i$ that leaks through the response $y_i$. Since $y_i$ is treated as non-sensitive in the protocol, this part is irreducible.
\label{sec:cramer-rao}



\section{Bias reduction by $O(\sqrt{m})$ with multiple orthonormal vectors}
We now describe a variant of this protocol that further reduces the bias, faster by using a set of multiple vectors $V_t = \{v_{t,j}\}_{j=1}^m$ instead of the single-vector version that we described in section \ref{singVec}. To summarize the results that follow, we show that the multi-vector version results in a lower Lipschitz constant, thereby resulting in requiring a smaller amount of noise for differential privacy than the single-vector version. We also show that bias obtained on the multi-vector version prior to the server-side correction reduces faster by an order of the square root of the data dimension, $\sqrt{m}$.
\label{def:setting}
Fix integers $d\ge 1$ and $m\in\{1,\dots,d\}$. Let $K\ge 1$ clients each hold a single pair $(x_i,y_i)$ with $x_i\in\R^d$ and $y_i\in\R$.
In each round $t$, the server holds $\beta_t\in\R^d$ and selects an orthonormal set
\[
V_t:=\{v_{t,1},\dots,v_{t,m}\}\subset \R^d,
\qquad v_{t,j}^\top v_{t,k}=\delta_{jk}.
\]

\begin{algorithm}[H]
\caption{Multi-Vector: Client Protocol }
\begin{algorithmic}[1]
\State \textcolor{white}{\colorbox{algoheader}{\makebox[\linewidth][l]{\bfseries Input:
Private data $(x_i,y_i)$, $(\alpha,\lambda,\omega,\sigma^2_{\text{DP}})$, $\beta$, $V=\{v_j\}_{j=1}^m$}}}
\State Sample $\phi_{i,j} \sim \mathrm{Unif}[0,2\pi]$ for each $j=1,\dots,m$
\State $g_i \leftarrow (1-\alpha)x_i + \dfrac{\lambda}{\sqrt{m}} \sum_{j=1}^m \cos(\omega \langle x_i, v_j\rangle + \phi_{i,j})v_j$
\State $\xi_i \sim \mathcal{N}(0,\sigma^2_{\text{DP}} I_d)$
\State $\tilde g_i \leftarrow g_i + \xi_i$
\State Send $(u_i,\tilde g_i)$ to the server
\end{algorithmic}
\end{algorithm}

\begin{algorithm}[H]
\caption{Multi-Vector: Server Protocol }
\begin{algorithmic}[1]
\State \textcolor{white}{\colorbox{algoheader}{\makebox[\linewidth][l]{\bfseries Input: $\{(\tilde g_i, y_i)\}_{i=1}^K$, $\beta$, $(\alpha,\lambda,\sigma^2_{\mathrm{DP}})$, $V=\{v_j\}_{j=1}^m$}}}
\State Compute projection matrix: $P_V \leftarrow \sum_{j=1}^m v_j v_j^{\top}$
\State $\widetilde{\Sigma}_g \leftarrow \dfrac{1}{K}\sum_{i=1}^K \tilde g_i \tilde g_i^{\top}$
\State $\widetilde{\Sigma}_x \leftarrow \dfrac{1}{(1-\alpha)^2}\Big(\widetilde{\Sigma}_g - \dfrac{\lambda^2}{2m} P_V - \sigma^2_{\mathrm{DP}} I\Big)$
\State $Z \leftarrow \dfrac{1}{1-\alpha} \cdot \dfrac{1}{K}\sum_{i=1}^K y_i \tilde g_i$
\State $G \leftarrow \widetilde{\Sigma}_x \beta - Z$
\State $\beta \leftarrow \beta - \eta\, G$
\State $\beta=\frac{\beta}{\max \left(1,\left\|\beta\right\|_2 / c\right)}$
\State Broadcast updated $\beta$ and new orthonormal set $V \perp \beta$ to clients
\end{algorithmic}
\end{algorithm}

Let $V_t\in\R^{d\times m}$ denote the matrix with columns $v_{t,1},\dots,v_{t,m}$, and define the rank-$m$ orthogonal projector
\[
P_{V_t}:=V_tV_t^\top=\sum_{j=1}^m v_{t,j}v_{t,j}^\top.
\]
Optionally, the server may impose $v_{t,j}\perp \beta_t$ for all $j$; if $\beta_t\neq 0$ this forces $m\le d-1$.

Privacy is defined with respect to the client feature $x_i$ using the adjacency relation $x\sim x'$ if $\norm{x-x'}_2\le 1$.
Throughout, $y_i$ is treated as non-sensitive/public (as in the draft's ``new'' single-vector protocol), and privacy is required only for $x_i$.

\begin{definition}[Multi-vector modulated client map and privatized transmission]\label{def:clientmap}
Fix parameters $\alpha\in(0,1)$, $\lambda>0$, and $\omega>0$. For a given orthonormal set $V=\{v_1,\dots,v_m\}$, a phase vector
$\phi=(\phi_1,\dots,\phi_m)\in[0,2\pi]^m$, and an input $x\in\R^d$, define the \emph{noise-free} modulated map
\begin{equation}\label{eq:gmap}
g_V(x;\phi)
:=(1-\alpha)x+\frac{\lambda}{\sqrt m}\sum_{j=1}^m \cos\!\bigl(\omega\,\ip{x}{v_j}+\phi_j\bigr)\,v_j.
\end{equation}
Given a noise scale $\sigma_{\mathrm{DP}}>0$, define the \emph{privatized} output
\begin{equation}\label{eq:tildegmap}
\tilde g_V(x;\phi):=g_V(x;\phi)+\xi,\qquad \xi\sim\mathcal{N}(0,\sigma_{\mathrm{DP}}^2 \Id_d),
\end{equation}
where $\xi$ is independent of $(x,\phi)$. In round $t$, client $i$ samples i.i.d.\ phases
\[
\phi_{i,j}^{(t)}\sim\mathrm{Unif}[0,2\pi],\qquad j=1,\dots,m,
\]
forms $g^{(t)}_i=g_{V_t}(x_i;\phi_i^{(t)})$ and $\tilde g^{(t)}_i=\tilde g_{V_t}(x_i;\phi_i^{(t)})$, and transmits \emph{only}
\begin{equation}\label{eq:transmitpair}
\bigl(\tilde g^{(t)}_i,\,y_i\bigr)
\end{equation}
to the server.
\end{definition}

\begin{theorem}[Global $\ell_2$ sensitivity of the multi-vector map]\label{thm:sensitivity}
Fix an orthonormal set $V=\{v_1,\dots,v_m\}$ and any phase vector $\phi\in[0,2\pi]^m$.
Let $g_V(\cdot;\phi)$ be defined in \eqref{eq:gmap}. Then for all $x,x'\in\R^d$,
\[
\norm{g_V(x;\phi)-g_V(x';\phi)}_2 \le L\,\norm{x-x'}_2,
\]
where one may take
\[
L:=|1-\alpha|+\lambda\omega\sqrt{\frac{s}{m}},
\qquad
s:=\sup_{h\neq 0}\frac{\norm{P_V h}_2^2}{\norm{h}_2^2}.
\]
Moreover, for orthonormal $V$, $P_V$ is an orthogonal projector and $\norm{P_V}_{\mathrm{op}}=1$, hence $s=1$ and
\[
L=|1-\alpha|+\frac{\lambda\omega}{\sqrt m}.
\]
In particular, under the adjacency relation $x\sim x'$ iff $\norm{x-x'}_2\le 1$, the $\ell_2$-sensitivity is bounded by $\Delta_2\le L$.
\end{theorem}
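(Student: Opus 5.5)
The plan is to mirror the proof of Lemma~\ref{thm:lipschitz_placeholder}, with one extra step: the $m$ cosine perturbations are combined through orthonormality rather than bounded one at a time, and this is what produces the $1/\sqrt m$ factor instead of a factor growing with $m$.

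Write $h:=x-x'$ and decompose
\[
g_V(x;\phi)-g_V(x';\phi)=(1-\alpha)h+\frac{\lambda}{\sqrt m}\sum_{j=1}^m c_j\,v_j,
\qquad c_j:=\cos(\omega\ip{x}{v_j}+\phi_j)-\cos(\omega\ip{x'}{v_j}+\phi_j).
\]
By the triangle inequality, the first term contributes $|1-\alpha|\,\norm{h}_2$. For the second term, the $v_j$ are orthonormal, so Pythagoras gives $\norm{\sum_j c_j v_j}_2=(\sum_j c_j^2)^{1/2}$. This step replaces the naive bound $\sum_j|c_j|$, which would cost an extra $\sqrt m$.

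Next, $\cos$ is $1$-Lipschitz, so $|c_j|\le \omega|\ip{h}{v_j}|$. Hence
\[
\Big(\sum_j c_j^2\Big)^{1/2}\le \omega\Big(\sum_j \ip{h}{v_j}^2\Big)^{1/2}=\omega\norm{V^\top h}_2=\omega\norm{P_V h}_2 ,
\]
where the last equality uses $\norm{V^\top h}_2^2=h^\top VV^\top h=h^\top P_V h=\norm{P_V h}_2^2$, valid because $P_V$ is symmetric and idempotent. By the definition of $s$, $\norm{P_V h}_2\le \sqrt{s}\,\norm{h}_2$. Combining the two terms gives
\[
\norm{g_V(x;\phi)-g_V(x';\phi)}_2\le\Big(|1-\alpha|+\lambda\omega\sqrt{s/m}\Big)\norm{h}_2 .
\]

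For the \emph{moreover} part, orthonormality of the columns gives $V^\top V=\Id_m$. Then $P_V^2=V(V^\top V)V^\top=P_V$ and $P_V^\top=P_V$, so $P_V$ is an orthogonal projector of rank $m\ge1$. Its eigenvalues therefore lie in $\{0,1\}$ with $1$ attained, so $\norm{P_V}_{\mathrm{op}}=1$ and $s=1$. Finally, under the adjacency relation $\norm{x-x'}_2\le1$, the Lipschitz bound gives $\Delta_2\le L$. The result holds uniformly in the fixed $\phi$.

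The only step needing care is the Pythagorean step: it is what keeps the modulation contribution at order $\lambda\omega/\sqrt m$ rather than $\lambda\omega\sqrt m$, so its reliance on exact orthonormality of $V$ should be made explicit. Everything else is routine.
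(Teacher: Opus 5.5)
Your proof is correct and follows the paper's argument essentially step for step. Both use the same decomposition of $g_V(x;\phi)-g_V(x';\phi)$, the triangle inequality, the $1$-Lipschitz bound $|c_j|\le\omega|\ip{h}{v_j}|$, orthonormality to get $\norm{\sum_j c_jv_j}_2\le\omega\norm{P_Vh}_2\le\omega\sqrt{s}\,\norm{h}_2$, and $\norm{P_V}_{\mathrm{op}}=1$ to conclude $s=1$; your verification via $V^\top V=\Id_m$ and $\norm{V^\top h}_2=\norm{P_Vh}_2$ simply makes explicit what the paper asserts.
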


\begin{proof}
Fix $x,x'\in\R^d$ and let $h:=x-x'$. For $j\in\{1,\dots,m\}$ define
\[
\theta_j:=\omega\,\ip{x}{v_j}+\phi_j,\qquad \theta'_j:=\omega\,\ip{x'}{v_j}+\phi_j.
\]
Then
\[
g_V(x;\phi)-g_V(x';\phi)
=(1-\alpha)h+\frac{\lambda}{\sqrt m}\sum_{j=1}^m\bigl(\cos\theta_j-\cos\theta'_j\bigr)\,v_j.
\]
By the triangle inequality,
\[
\norm{g_V(x;\phi)-g_V(x';\phi)}_2
\le |1-\alpha|\norm{h}_2+\frac{\lambda}{\sqrt m}\,\norm{\sum_{j=1}^m(\cos\theta_j-\cos\theta'_j)v_j}_2.
\]
Since $\cos(\cdot)$ is $1$-Lipschitz on $\R$, $|\cos u-\cos w|\le |u-w|$ for all $u,w\in\R$. Therefore,
\[
|\cos\theta_j-\cos\theta'_j|
\le |\theta_j-\theta'_j|
=\omega\,|\ip{h}{v_j}|.
\]
Set $c_j:=\cos\theta_j-\cos\theta'_j$. Then $|c_j|\le \omega|\ip{h}{v_j}|$ and, using orthonormality,
\[
\norm{\sum_{j=1}^m c_j v_j}_2^2=\sum_{j=1}^m c_j^2
\le \omega^2\sum_{j=1}^m \ip{h}{v_j}^2
=\omega^2\norm{P_V h}_2^2.
\]
Taking square roots yields
\[
\norm{\sum_{j=1}^m c_j v_j}_2\le \omega\norm{P_V h}_2\le \omega\sqrt{s}\,\norm{h}_2,
\]
where $s=\sup_{h\neq 0}\norm{P_V h}_2^2/\norm{h}_2^2$. Substituting back gives
\[
\norm{g_V(x;\phi)-g_V(x';\phi)}_2
\le \Bigl(|1-\alpha|+\lambda\omega\sqrt{\tfrac{s}{m}}\Bigr)\norm{h}_2,
\]
which proves the first claim with $L:=|1-\alpha|+\lambda\omega\sqrt{s/m}$.

If $V$ is orthonormal then $P_V$ is an orthogonal projector, hence $\norm{P_V}_{\mathrm{op}}=1$ and $s=1$.
The sensitivity statement for adjacency $\norm{x-x'}_2\le 1$ is immediate.
\end{proof}

\begin{corollary}[Per-round local differential privacy via the Gaussian mechanism]\label{cor:ldp}
Let $\Delta_2\le L$ be the sensitivity bound from Theorem~\ref{thm:sensitivity}. Fix $\varepsilon>0$ and $\delta\in(0,1)$ and set
\begin{equation}\label{eq:sigmadp}
\sigma_{\mathrm{DP}}:=\frac{L\sqrt{2\ln(1.25/\delta)}}{\varepsilon}.
\end{equation}
Then, for each round and each client, the randomized mapping $x\mapsto \tilde g_V(x;\Phi)$ of \eqref{eq:tildegmap}, where $\Phi$ is the client-sampled random phase vector and $\xi$ is Gaussian noise with variance $\sigma_{\mathrm{DP}}^2$, is $(\varepsilon,\delta)$-locally differentially private with respect to $x$.
\end{corollary}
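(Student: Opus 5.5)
The plan is to reduce the statement to the standard Gaussian mechanism \cite{dwork2016calibrating} applied conditionally on the random phase vector, and then remove the conditioning by mixing over $\Phi$. The mechanism is $x\mapsto \tilde g_V(x;\Phi)=g_V(x;\Phi)+\xi$. Here $\Phi\sim\mathrm{Unif}[0,2\pi]^m$ is sampled by the client independently of $x$ and of $\xi$. The public quantities $V$, $\alpha$, $\lambda$, $\omega$ and $\sigma_{\mathrm{DP}}$ do not depend on $x$.

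\textbf{Step 1: fixed phase.} Fix a phase vector $\phi\in[0,2\pi]^m$. By Theorem~\ref{thm:sensitivity}, the deterministic map $x\mapsto g_V(x;\phi)$ satisfies $\norm{g_V(x;\phi)-g_V(x';\phi)}_2\le L$ whenever $x\sim x'$, i.e.\ $\norm{x-x'}_2\le 1$. This bound is uniform in $\phi$. The classical Gaussian mechanism says that adding $\mathcal{N}(0,\sigma^2 I_d)$ noise with $\sigma=\Delta_2\sqrt{2\ln(1.25/\delta)}/\varepsilon$ to a query of $\ell_2$-sensitivity $\Delta_2$ yields $(\varepsilon,\delta)$-DP, in the regime $\varepsilon<1$ where that theorem is stated. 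Since $\Delta_2\le L$ and $\sigma_{\mathrm{DP}}$ is set by \eqref{eq:sigmadp}, for every measurable $S$ and every $\phi$ we get
\[
\Pr[g_V(x;\phi)+\xi\in S]\le e^{\varepsilon}\Pr[g_V(x';\phi)+\xi\in S]+\delta .
\]

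\textbf{Step 2: averaging over the phase.} Because $\Phi$ is independent of $x$ and its law $\mu$ is the same for $x$ and $x'$, we have
\[
\Pr[\tilde g_V(x;\Phi)\in S]=\int \Pr[g_V(x;\phi)+\xi\in S]\,d\mu(\phi).
\]
Integrating the Step~1 inequality against $\mu$ preserves it, with the same $e^{\varepsilon}$ factor and the same additive $\delta$, since $\int\delta\,d\mu=\delta$. This gives $(\varepsilon,\delta)$-local DP with respect to $x$. In words, a mixture of $(\varepsilon,\delta)$-DP mechanisms, with mixing weights independent of the data, is again $(\varepsilon,\delta)$-DP.

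\textbf{Step 3: transmitted pair and per-client scope.} The client also sends $y_i$, which is treated as public, so the release is a post-processing of $\tilde g_i$ together with data-independent information. By the post-processing theorem the guarantee is unchanged. The argument is per round and per client, matching the scope of the statement; composition across rounds is handled separately in the privacy accounting section.

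\textbf{Main obstacle.} The only delicate point is the hypothesis range of the classical Gaussian mechanism calibration ($\varepsilon\in(0,1)$), whereas the corollary is stated for any $\varepsilon>0$. My first attempt would be to note that the argument above is valid verbatim whenever the Gaussian mechanism guarantee holds. For larger $\varepsilon$ I would either restrict to $\varepsilon<1$, or invoke the analytic Gaussian mechanism, which shows that this $\sigma$ still suffices in that case. A second, minor point is that the sensitivity bound must be uniform over $\phi$ so the mixing step applies; Theorem~\ref{thm:sensitivity} provides exactly this.
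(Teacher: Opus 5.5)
Your Steps 1--2 are the paper's proof. You condition on $\Phi=\phi$, apply the Gaussian mechanism to $x\mapsto g_V(x;\phi)$ using the $\phi$-uniform bound $L$ of Theorem~\ref{thm:sensitivity}, and then integrate over the data-independent law of $\Phi$. That part is sound. You are also right to flag something the paper passes over in silence: \eqref{eq:sigmadp} is the classical calibration, which is only established for $\varepsilon<1$. (Step 3 lies outside the statement. It relies on the convention that neighbors differ in $x$ with $y_i$ held fixed. If $y_i$ is generated from $x_i$ through the regression model, as in Section~\ref{sec:reconstruction}, then $y_i$ itself leaks about $x_i$, and post-processing does not cover the pair.)

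The gap is your second remedy for $\varepsilon\ge 1$. Balle and Wang's analytic Gaussian mechanism does not support this $\sigma_{\mathrm{DP}}$; its exact privacy profile shows the opposite. With $c=\sqrt{2\ln(1.25/\delta)}$ and $\sigma=\Delta c/\varepsilon$, the privacy loss is $\mathcal{N}(\mu,2\mu)$ with $\mu=\Delta^2/(2\sigma^2)=\varepsilon^2/(2c^2)$. This mean exceeds $\varepsilon$ once $\varepsilon>4\ln(1.25/\delta)$, after which the exact $\delta$ is at least $\tfrac12-\tfrac{1}{2\sqrt{\pi\varepsilon}}$ and tends to $1$. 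Failure starts even earlier: for $\delta=10^{-5}$ and $\varepsilon=10$ the exact value is about $2.3\times10^{-5}$.

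Nor is this only a weakness of the proof; the corollary itself is false for large $\varepsilon$. Suppose $m<d$, which holds whenever $V\perp\beta_t\neq 0$. Take neighbors with $h=x-x'$, $\norm{h}_2=1$ and $h\perp\mathrm{span}(V)$. Then $\ip{x}{v_j}=\ip{x'}{v_j}$ for every $j$, so the cosine terms coincide. The output splits into the coordinate $(1-\alpha)\ip{x}{h}+\ip{\xi}{h}$ and an independent remainder whose law is identical under $x$ and $x'$. For this pair the mechanism is exactly a one-dimensional Gaussian mechanism with sensitivity $1-\alpha$ and noise $Lc/\varepsilon$. The ratio $(1-\alpha)\varepsilon/(Lc)$ grows linearly in $\varepsilon$, so $(\varepsilon,\delta)$-DP fails for all sufficiently large $\varepsilon$, whatever the fixed $\alpha,\lambda,\omega,m,\delta$.

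Two provable versions remain. One is your first option, restricting to $\varepsilon\in(0,1)$. The other keeps all $\varepsilon>0$ but replaces \eqref{eq:sigmadp} with the analytic calibration for sensitivity $L$, whose noise scales like $L/\sqrt{\varepsilon}$ rather than $L/\varepsilon$ for large $\varepsilon$. Your Steps 1--2 go through verbatim with either fix.
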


\begin{proof}
Condition on an arbitrary realization of the phases $\Phi=\phi$. The map $x\mapsto g_V(x;\phi)$ is deterministic and has $\ell_2$-sensitivity at most $L$ by Theorem~\ref{thm:sensitivity}.
By the standard Gaussian mechanism for vector-valued queries with $\ell_2$-sensitivity $L$, adding $\xi\sim\mathcal{N}(0,\sigma_{\mathrm{DP}}^2\Id_d)$ with $\sigma_{\mathrm{DP}}$ as in \eqref{eq:sigmadp} yields $(\varepsilon,\delta)$-DP conditional on $\Phi=\phi$.
Since $\Phi$ is sampled independently of $x$ and DP is preserved under additional randomness independent of the private input, the unconditional mechanism $x\mapsto \tilde g_V(x;\Phi)$ is also $(\varepsilon,\delta)$-locally differentially private.
\end{proof}

\begin{lemma}[Second-moment identity for $\tilde g_i$]\label{lem:secondmoment}
Fix any client $i$ and treat $(x_i,y_i)$ as fixed.
Let $\tilde g_i$ be generated according to Definitions~\ref{def:clientmap} using orthonormal $V=\{v_j\}_{j=1}^m$,
i.i.d.\ phases $\phi_{i,j}\sim\mathrm{Unif}[0,2\pi]$, and independent Gaussian noise $\xi_i\sim\mathcal{N}(0,\sigma_{\mathrm{DP}}^2\Id_d)$.
Then
\begin{equation}\label{eq:secondmomentidentity}
\E\bigl[\tilde g_i\tilde g_i^\top \,\big|\, x_i\bigr]
=(1-\alpha)^2 x_ix_i^\top+\frac{\lambda^2}{2m}P_V+\sigma_{\mathrm{DP}}^2\Id_d,
\end{equation}
where the expectation is over the internal randomness $(\phi_i,\xi_i)$.
\end{lemma}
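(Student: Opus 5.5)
The plan is to expand $\tilde g_i\tilde g_i^\top$ into its nine blocks using the decomposition $\tilde g_i = a + b + \xi_i$. Here $a:=(1-\alpha)x_i$ is deterministic given $x_i$. The modulation term is $b:=\frac{\lambda}{\sqrt m}\sum_{j=1}^m C_j v_j$ with $C_j:=\cos(\omega\ip{x_i}{v_j}+\phi_{i,j})$. Then I take the conditional expectation block by block, over the independent phases and the Gaussian noise.

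First I record the scalar moments of the $C_j$. Conditional on $x_i$, each $\theta_j=\omega\ip{x_i}{v_j}+\phi_{i,j}$ is a deterministic shift of a $\mathrm{Unif}[0,2\pi]$ variable. By periodicity $\theta_j$ is therefore uniform modulo $2\pi$, which gives
\[
\E[C_j\mid x_i]=0,\qquad \E[C_j^2\mid x_i]=\tfrac12 .
\]
Since the phases $\phi_{i,j}$ are independent across $j$, the $C_j$ are conditionally independent. Hence $\E[C_jC_k\mid x_i]=\E[C_j\mid x_i]\E[C_k\mid x_i]=0$ for $j\neq k$.

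Next I handle the blocks.
\begin{itemize}
\item The block $aa^\top=(1-\alpha)^2x_ix_i^\top$ is deterministic.
\item The cross terms $ab^\top$ and $ba^\top$ have mean zero because $\E[C_j\mid x_i]=0$.
\item Every term containing exactly one factor of $\xi_i$ vanishes, because $\xi_i$ is zero-mean and independent of $(x_i,\phi_i)$.
\item The noise block gives $\E[\xi_i\xi_i^\top]=\sigma_{\mathrm{DP}}^2\Id_d$.
\item The modulation block is
\[
\E[bb^\top\mid x_i]=\frac{\lambda^2}{m}\sum_{j,k}\E[C_jC_k\mid x_i]\,v_jv_k^\top=\frac{\lambda^2}{m}\cdot\frac12\sum_{j}v_jv_j^\top=\frac{\lambda^2}{2m}P_V .
\]
\end{itemize}
Summing these blocks yields \eqref{eq:secondmomentidentity}.

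The only step requiring care is the off-diagonal vanishing $\E[C_jC_k]=0$ for $j\ne k$. This relies on the phases being i.i.d. across $j$, not merely uniformly distributed. It is also where conditioning on $x_i$ matters, since the arguments $\omega\ip{x_i}{v_j}$ are coupled through $x_i$. Once $x_i$ is fixed, these arguments are constants and the independence of the $\phi_{i,j}$ alone suffices. Orthonormality of the $v_j$ is not needed for the expectation identity itself. It enters only in identifying $\sum_j v_jv_j^\top$ with the projector $P_V$.
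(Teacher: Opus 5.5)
Your proposal is correct and is essentially the paper's proof. The paper likewise drops the zero-mean independent noise cross terms, kills the $A_iB_i^\top$ terms using $\E[\cos\theta_{i,j}\mid x_i]=0$, and evaluates $\E[B_iB_i^\top\mid x_i]$ via independence of the phases for $j\neq k$ and $\E[\cos^2\theta_{i,j}\mid x_i]=\tfrac12$ for $j=k$; your nine-block expansion is just a flattened version of that two-stage decomposition.
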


\begin{proof}
Write $\tilde g_i=g_i+\xi_i$ with $\E[\xi_i]=0$ and $\E[\xi_i\xi_i^\top]=\sigma_{\mathrm{DP}}^2\Id_d$. Then
\[
\E[\tilde g_i\tilde g_i^\top\mid x_i]
=\E[g_ig_i^\top\mid x_i]+\E[\xi_i\xi_i^\top]+\E[g_i\xi_i^\top\mid x_i]+\E[\xi_ig_i^\top\mid x_i].
\]
The cross terms vanish because $\xi_i$ is independent of $g_i$ and mean-zero, hence
$\E[g_i\xi_i^\top\mid x_i]=\E[g_i\mid x_i]\E[\xi_i]^\top=0$ and similarly $\E[\xi_ig_i^\top\mid x_i]=0$.
Therefore,
\[
\E[\tilde g_i\tilde g_i^\top\mid x_i]=\E[g_ig_i^\top\mid x_i]+\sigma_{\mathrm{DP}}^2\Id_d.
\]

Next decompose $g_i=A_i+B_i$ where
\[
A_i:=(1-\alpha)x_i,
\qquad
B_i:=\frac{\lambda}{\sqrt m}\sum_{j=1}^m \cos(\theta_{i,j})\,v_j,
\qquad
\theta_{i,j}:=\omega\,\ip{x_i}{v_j}+\phi_{i,j}.
\]
Then
\[
\E[g_ig_i^\top\mid x_i]=A_iA_i^\top+\E[B_iB_i^\top\mid x_i]+\E[A_iB_i^\top\mid x_i]+\E[B_iA_i^\top\mid x_i].
\]
Since $\phi_{i,j}$ is uniform on $[0,2\pi]$ and independent of $x_i$, $\E[\cos(\theta_{i,j})\mid x_i]=0$, so $\E[B_i\mid x_i]=0$.
Hence $\E[A_iB_i^\top\mid x_i]=A_i\E[B_i^\top\mid x_i]=0$ and similarly $\E[B_iA_i^\top\mid x_i]=0$.
Thus,
\[
\E[g_ig_i^\top\mid x_i]=A_iA_i^\top+\E[B_iB_i^\top\mid x_i]=(1-\alpha)^2x_ix_i^\top+\E[B_iB_i^\top\mid x_i].
\]

It remains to compute $\E[B_iB_i^\top\mid x_i]$. Expanding,
\[
B_iB_i^\top
=\frac{\lambda^2}{m}\sum_{j=1}^m\sum_{k=1}^m
\cos(\theta_{i,j})\cos(\theta_{i,k})\,v_jv_k^\top.
\]
For $j\neq k$, independence of $\phi_{i,j}$ and $\phi_{i,k}$ implies
$\E[\cos(\theta_{i,j})\cos(\theta_{i,k})\mid x_i]=\E[\cos(\theta_{i,j})\mid x_i]\E[\cos(\theta_{i,k})\mid x_i]=0$.
For $j=k$, uniformity of $\phi_{i,j}$ implies $\theta_{i,j}\ (\mathrm{mod}\ 2\pi)$ is uniform on $[0,2\pi]$, so
\[
\E[\cos^2(\theta_{i,j})\mid x_i]=\frac{1}{2\pi}\int_0^{2\pi}\cos^2(u)\,du=\frac12.
\]
Therefore,
\[
\E[B_iB_i^\top\mid x_i]
=\frac{\lambda^2}{m}\sum_{j=1}^m \frac12\,v_jv_j^\top
=\frac{\lambda^2}{2m}\sum_{j=1}^m v_jv_j^\top
=\frac{\lambda^2}{2m}P_V.
\]
Substituting into the earlier identity yields \eqref{eq:secondmomentidentity}.
\end{proof}

\begin{corollary}[Reduction to the single-vector  protocol]\label{cor:singlevector}
If $m=1$ and $V=\{v\}$, then $P_V=vv^\top$, the client map \eqref{eq:gmap} becomes
$g(x)=(1-\alpha)x+\lambda\cos(\omega\ip{x}{v}+\phi)v$,
the sensitivity bound reduces to $L=|1-\alpha|+\lambda\omega$,
and with the server side processing reduces to
\[
\widehat{\Sig}_x=\frac{1}{(1-\alpha)^2}\Bigl(\Sig_{\tilde g}-\frac{\lambda^2}{2}vv^\top-\sigma_{\mathrm{DP}}^2\Id_d\Bigr),
\]
which is exactly the structure of the draft's single-vector ``new'' protocol.
\end{corollary}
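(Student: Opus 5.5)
The plan is to specialize each ingredient of the multi-vector construction to $m=1$, checking the client map, the sensitivity constant and the server correction in turn. Each follows by direct substitution into Definition~\ref{def:clientmap}, Theorem~\ref{thm:sensitivity} and Lemma~\ref{lem:secondmoment}, so no new estimate is needed.

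First, with $m=1$ the orthonormal set $V=\{v\}$ consists of a single unit vector. Hence $P_V=\sum_{j=1}^1 v_jv_j^\top=vv^\top$, and $\lambda/\sqrt m=\lambda$. Substituting into \eqref{eq:gmap} gives $g_V(x;\phi)=(1-\alpha)x+\lambda\cos(\omega\ip{x}{v}+\phi)v$, which is exactly the single-vector map \eqref{eq:map} of Section~\ref{singVec}. Second, Theorem~\ref{thm:sensitivity} gives $L=|1-\alpha|+\lambda\omega/\sqrt m$ for orthonormal $V$. At $m=1$ this becomes $|1-\alpha|+\lambda\omega$, which agrees with Lemma~\ref{thm:lipschitz_placeholder}. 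Consequently the noise calibration \eqref{eq:sigmadp} coincides with the single-vector Gaussian mechanism.

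Third, for the server side I would substitute $m=1$ and $P_V=vv^\top$ into the multi-vector debiasing step $\widetilde{\Sigma}_x=(1-\alpha)^{-2}\bigl(\widetilde{\Sigma}_g-\tfrac{\lambda^2}{2m}P_V-\sigma_{\mathrm{DP}}^2\Id_d\bigr)$. This yields the displayed $\widehat{\Sig}_x$, where $\Sig_{\tilde g}$ denotes the empirical second moment $\widetilde{\Sigma}_g$. I would also check consistency with Lemma~\ref{lem:secondmoment}. Its identity at $m=1$ reads $\E[\tilde g_i\tilde g_i^\top\mid x_i]=(1-\alpha)^2x_ix_i^\top+\tfrac{\lambda^2}{2}vv^\top+\sigma_{\mathrm{DP}}^2\Id_d$, which matches the expansion computed earlier for the single-vector protocol. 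So the correction is unbiased in both settings for the same reason. The $Z$ and $G$ steps contain no $m$-dependence and are identical in the two protocols.

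I do not expect a genuine obstacle, since every step is a substitution. The only point needing care is notational: $\Sig_{\tilde g}$ in the statement must be identified with $\widetilde{\Sigma}_g=\tfrac1K\sum_i\tilde g_i\tilde g_i^\top$. The orthogonality constraint $v\perp\beta$ from the single-vector protocol corresponds to the optional constraint $v_{t,j}\perp\beta_t$, which is permissible because $m=1\le d-1$ whenever $d\ge2$.
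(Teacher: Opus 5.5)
Your proposal is correct and matches the paper's proof, which simply substitutes $m=1$ into the multi-vector definitions and observes that $P_V=\sum_{j=1}^{1}v_jv_j^\top=vv^\top$. Your extra checks of the sensitivity constant, the $m=1$ second-moment identity, and the $m$-independence of the $Z$ and $G$ steps are consistent elaborations of that same substitution.
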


\begin{proof}
All identities follow by substituting $m=1$ into Definitions~\ref{def:clientmap} and observing that $P_V=\sum_{j=1}^1 v_jv_j^\top=vv^\top$.
\end{proof}

\section{Convergence} \cite{nemirovski2009robust}
Having established unbiasedness of the gradient estimator, we next quantify its stochastic variability. This section derives the variance decomposition of the estimator, which is then used as a key input needed for understanding the convergence guarantee of our updates.
To start with, the gradient estimator is given by
\[
G = \widehat{\Sigma}_{x}\beta - Z,\qquad 
Z = \frac{1}{1 - \alpha}\cdot \frac{1}{K}\sum_{i = 1}^{K}y_{i}\tilde{g}_{i},
\]
which satisfies \(\mathbb{E}[G] = \nabla L(\beta)\). We now derive the total variance of this estimator. Define the residual vector \(r\coloneqq X\beta - Y\in \mathbb{R}^{K}\), so \(r_{i} = \langle x_{i},\beta \rangle - y_{i}\).  
Let \(C\in \mathbb{R}^{K}\) have entries \(C_{i}\coloneqq \cos (\omega \langle x_{i},v\rangle +\phi_{i})\),  
let \(\Xi \in \mathbb{R}^{K\times d}\) be the noise matrix whose rows are the \(\xi_{i}^{\top}\), and set \(q\coloneqq \Xi \beta \in \mathbb{R}^{K}\), so \(q_{i} = \xi_{i}^{\top}\beta\).  
All phases \(\phi_{i}\) and all rows of \(\Xi\) are mutually independent.

The final formula involves only three scalar data‑dependent constants:
\[
S_{r}^{2}\coloneqq \frac{1}{K}\| r\|^{2},\qquad 
\beta^{\top}\Sigma_{r x}\coloneqq \frac{1}{K}\beta^{\top}X^{\top}r,\qquad 
\mathrm{tr}(\Sigma_{x})\coloneqq \frac{1}{K}\| X\|_{F}^{2} 
\]

\begin{theorem}\label{thm:var}
Under the assumption \(v\perp \beta\) with fixed data \((x_{1},y_{1}),\ldots ,(x_{K},y_{K})\)

\begin{multline}
\sigma_{g}^{2}(\beta) = \mathbb{E}\big[\| G - \nabla L(\beta)\|^{2}\big] \\
= \frac{ S_{r}^{2}\bigl(\frac{\lambda^{2}}{2} + d\sigma_{\mathrm{DP}}^{2}\bigr) + 2\sigma_{\mathrm{DP}}^{2}\beta^{\top}\Sigma_{rx} + \sigma_{\mathrm{DP}}^{2}\|\beta\|^{2}\operatorname{tr}(\Sigma_{x}) }{K(1-\alpha)^{2}} \\
+ \frac{ \|\beta\|^{2}\bigl(\frac{\lambda^{2}\sigma_{\mathrm{DP}}^{2}}{2} + (d+1)\sigma_{\mathrm{DP}}^{4}\bigr) }{K(1-\alpha)^{4}}.
\end{multline}
\end{theorem}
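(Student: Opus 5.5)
The plan is to write $G-\nabla L(\beta)$ as an average of $K$ independent, mean-zero per-client terms. The variance is then $\frac{1}{K^2}$ times the sum of per-client second moments. Each per-client second moment reduces to a Gaussian/trigonometric moment computation.

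\emph{Step 1: per-client form of $G$, using $v\perp\beta$.} Substitute $\tilde g_i=(1-\alpha)x_i+\lambda C_i v+\xi_i$ into the server formulas. Since $v\perp\beta$, we have $vv^\top\beta=0$, so the $\frac{\lambda^2}{2}vv^\top$ correction drops out of $\widehat\Sigma_x\beta$. Moreover $\tilde g_i^\top\beta=(1-\alpha)\langle x_i,\beta\rangle+q_i$, so
\[
\tilde g_i\tilde g_i^\top\beta=\tilde g_i\bigl((1-\alpha)\langle x_i,\beta\rangle+q_i\bigr).
\]
Combining this with the $Z$ term and using $r_i=\langle x_i,\beta\rangle-y_i$ gives
\[
G=\frac1K\sum_{i=1}^K\Bigl(a_i\,\tilde g_i-\frac{\sigma_{\mathrm{DP}}^2}{(1-\alpha)^2}\beta\Bigr),\qquad a_i:=\frac{r_i}{1-\alpha}+\frac{q_i}{(1-\alpha)^2}.
\]
The summands depend only on $(\phi_i,\xi_i)$, so they are independent across $i$. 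By the unbiasedness already established, $\E[a_i\tilde g_i]=x_ir_i+\frac{\sigma_{\mathrm{DP}}^2}{(1-\alpha)^2}\beta$; this also follows directly from $\E[\xi_i q_i]=\sigma_{\mathrm{DP}}^2\beta$. Hence
\[
\sigma_g^2(\beta)=\frac1{K^2}\sum_{i=1}^K\Bigl(\E\|a_i\tilde g_i\|^2-\bigl\|x_ir_i+\tfrac{\sigma_{\mathrm{DP}}^2}{(1-\alpha)^2}\beta\bigr\|^2\Bigr).
\]

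\emph{Step 2: the second moment.} Expand $\|a_i\tilde g_i\|^2=a_i^2\|\tilde g_i\|^2$. The phase enters only through $\lambda C_i v$. Because $\E C_i=0$, $\E C_i^2=\tfrac12$, $v\perp x_i$-contributions drop, and $C_i$ is independent of $\xi_i$, this piece contributes
\[
\frac{\lambda^2}{2}\,\E[a_i^2]=\frac{\lambda^2}{2}\Bigl(\frac{r_i^2}{(1-\alpha)^2}+\frac{\sigma_{\mathrm{DP}}^2\|\beta\|^2}{(1-\alpha)^4}\Bigr).
\]
The cross terms $C_i\langle v,x_i\rangle$ and $C_i\langle v,\xi_i\rangle$ vanish in expectation. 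For the remaining part, $\|(1-\alpha)x_i+\xi_i\|^2$ times $a_i^2$, I will use the Gaussian moments
\[
\E[q_i^2]=\sigma_{\mathrm{DP}}^2\|\beta\|^2,\quad \E[\xi_iq_i]=\sigma_{\mathrm{DP}}^2\beta,\quad \E[\|\xi_i\|^2]=d\sigma_{\mathrm{DP}}^2,\quad \E[\|\xi_i\|^2q_i^2]=(d+2)\sigma_{\mathrm{DP}}^4\|\beta\|^2,
\]
together with the vanishing of all odd moments. Collecting the terms:
\begin{itemize}
\item $r_i^2\|x_i\|^2$, which cancels against the squared mean;
\item $\frac{r_i^2}{(1-\alpha)^2}\,d\sigma_{\mathrm{DP}}^2$;
\item the cross term $2\cdot\frac{r_i}{1-\alpha}\cdot\frac{1}{(1-\alpha)^2}\cdot 2(1-\alpha)\sigma_{\mathrm{DP}}^2\langle x_i,\beta\rangle$ minus $2r_i\frac{\sigma_{\mathrm{DP}}^2}{(1-\alpha)^2}\langle x_i,\beta\rangle$ from the mean, leaving $\frac{2\sigma_{\mathrm{DP}}^2 r_i\langle x_i,\beta\rangle}{(1-\alpha)^2}$;
\item $\frac{\|x_i\|^2\sigma_{\mathrm{DP}}^2\|\beta\|^2}{(1-\alpha)^2}$;
\item $\frac{(d+2)\sigma_{\mathrm{DP}}^4\|\beta\|^2}{(1-\alpha)^4}$ minus the mean's $\frac{\sigma_{\mathrm{DP}}^4\|\beta\|^2}{(1-\alpha)^4}$, leaving $(d+1)$.
\end{itemize}

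\emph{Step 3: averaging.} Sum over $i$ and divide by $K^2$. Writing $\frac1K\sum_i r_i^2=S_r^2$, $\frac1K\sum_i r_i\langle x_i,\beta\rangle=\beta^\top\Sigma_{rx}$, and $\frac1K\sum_i\|x_i\|^2=\mathrm{tr}(\Sigma_x)$ yields exactly the two displayed groups, with prefactors $\frac1{K(1-\alpha)^2}$ and $\frac1{K(1-\alpha)^4}$.

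The main obstacle is the bookkeeping in Step 2. One must track which cross terms survive, in particular the factor $2$ in the $\beta^\top\Sigma_{rx}$ term. One must also use the Gaussian fourth-moment identity $\E[\|\xi\|^2(\xi^\top\beta)^2]=(d+2)\sigma^4\|\beta\|^2$ (via Isserlis) and check that subtracting the squared mean turns $d+2$ into $d+1$. I would verify each coefficient separately by conditioning first on $\phi_i$ and then on $\xi_i$.
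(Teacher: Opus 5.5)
Your proof is correct, and it organizes the computation differently from the paper.

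The paper first proves a centering identity (Lemma~\ref{lem:centering}). It groups $G-\nabla L(\beta)$ by order in the randomness into two aggregate blocks: $T_1+T_2$, with weight $1/(K(1-\alpha))$, and $T_3$, with weight $1/(K(1-\alpha)^2)$. It then makes three separate computations:
\begin{itemize}
\item $\mathbb{E}\|T_1+T_2\|^2$ (Lemma~\ref{lem:block1});
\item $\mathbb{E}\|T_3\|^2$ (Lemmas~\ref{lem:sw} and~\ref{lem:T32});
\item the vanishing of the cross block (Theorem~\ref{thm:cross}).
\end{itemize}
Because these blocks are sums over clients, the paper has to handle double sums over $(i,j)$. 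In particular, $\mathbb{E}\|\Xi^\top q\|^2$ has nonzero off-diagonal contributions $K(K-1)\sigma_{\mathrm{DP}}^4\|\beta\|^2$. These disappear only after the $-K\sigma_{\mathrm{DP}}^2\beta$ shift is accounted for.

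You instead write $G=\frac1K\sum_i\bigl(a_i\tilde g_i-\frac{\sigma_{\mathrm{DP}}^2}{(1-\alpha)^2}\beta\bigr)$ and center each client's summand before squaring. Independence across clients then removes every $i\neq j$ term at once. The theorem reduces to the single-client quantity $\mathbb{E}[a_i^2\|\tilde g_i\|^2]-\|\mathbb{E}[a_i\tilde g_i]\|^2$.

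The coefficients arise differently in the two routes. In yours, the coefficient $2$ on $\beta^\top\Sigma_{rx}$ and the factor $d+1$ come from subtracting the squared mean from raw coefficients $4$ and $d+2$. In the paper, the $2$ appears directly as $2\mathbb{E}[T_1^\top T_2]$. Its $d+1$ comes from combining the diagonal $(d+2)$ contribution with the $K(K-1)$ off-diagonal terms and the shift.

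What each buys:
\begin{itemize}
\item Your argument is shorter and needs no separate cross-block lemma, since every cross term is absorbed into one per-client moment calculation.
\item The paper's decomposition gives a cleaner attribution: the $(1-\alpha)^{-2}$ group comes exactly from $T_1+T_2$, and the $(1-\alpha)^{-4}$ group exactly from $T_3$.
\end{itemize}

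One wording slip: in Step 2 you say ``$v\perp x_i$-contributions drop,'' but $v\perp x_i$ is neither assumed nor needed. Those terms vanish only because $\mathbb{E}[C_i]=0$ and $C_i$ is independent of $\xi_i$, as your next sentence correctly states.
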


\begin{proof}
By the centering identity of \ref{lem:centering},
\[
G - \nabla L(\beta) = \frac{T_1 + T_2}{K(1 - \alpha)} +\frac{T_3}{K(1 - \alpha)^2},
\]
where \(T_1,T_2,T_3\) are the mean‑zero random vectors defined in (A.1)–(A.3). Squaring and taking expectations yields
\[
\sigma_g^2 (\beta) = \frac{\mathbb{E}[\|T_1 + T_2\|^2]}{K^2(1 - \alpha)^2} + \frac{\mathbb{E}[\|T_3\|^2]}{K^2(1 - \alpha)^4},
\]
because the cross term \(\mathbb{E}[(T_1+T_2)^\top T_3] = 0\) (Theorem C.2).  
Lemma C.1 gives
\[
\mathbb{E}[\| T_1 + T_2\| ^2 ] = \| r\| ^2 \left(\frac{\lambda^2}{2} +d\sigma_{\mathrm{DP}}^2\right) + 2K\sigma_{\mathrm{DP}}^2 \beta^\top \Sigma_{rx} + K\sigma_{\mathrm{DP}}^2 \| \beta \| ^2 \mathrm{tr}(\Sigma_{x}). 
\]
Lemma D.2 gives
\[
\mathbb{E}[\| T_3\| ^2 ] = K\| \beta \| ^2 \left(\frac{\lambda^2\sigma_{\mathrm{DP}}^2}{2} +(d + 1)\sigma_{\mathrm{DP}}^4\right). 
\]
Substituting above two equations into the expression for \(\sigma_g^2\) and using \(\|r\|^2 = K S_r^2\) yields the stated formula.
\end{proof}

\section{Uniform Variance Bound and Convergence}

\begin{corollary}\label{cor:uniform}
Suppose \(\| \beta_t\| \leq B\), \(\| x_i\| \leq R\), and \(|y_i| \leq M\) for all \(i\) and \(t\). Then \(\sigma_g^2 (\beta_t) \leq \bar{\sigma}_g^2\) where
\[
\bar{\sigma}_g^2 = \frac{ (RB+M)^2\bigl(\frac{\lambda^2}{2}+d\sigma_{\mathrm{DP}}^2\bigr) + 2\sigma_{\mathrm{DP}}^2 BR(RB+M) + \sigma_{\mathrm{DP}}^2 B^2 R^2 }{K(1-\alpha)^2}
+ \frac{ B^2\bigl(\frac{\lambda^2\sigma_{\mathrm{DP}}^2}{2} + (d+1)\sigma_{\mathrm{DP}}^4\bigr) }{K(1-\alpha)^4}.
\]
\end{corollary}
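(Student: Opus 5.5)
The plan is to start from the exact variance formula of Theorem~\ref{thm:var}, evaluated at $\beta=\beta_t$ (the theorem needs $v_t\perp\beta_t$, which the server enforces every round). In that expression every term is nonnegative except possibly the cross term $2\sigma_{\mathrm{DP}}^2\beta^\top\Sigma_{rx}$. The noise parameters $\lambda,\sigma_{\mathrm{DP}},d,K,\alpha$ are fixed, so $\sigma_g^2(\beta_t)$ depends on the data and on $\beta_t$ only through four quantities: $S_r^2$, $\beta^\top\Sigma_{rx}$, $\operatorname{tr}(\Sigma_x)$ and $\|\beta\|^2$. The proof therefore reduces to bounding each of these uniformly using $\|\beta_t\|\le B$, $\|x_i\|\le R$ and $|y_i|\le M$, and substituting term by term. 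The positive prefactors $1/(K(1-\alpha)^2)$ and $1/(K(1-\alpha)^4)$ preserve each inequality.

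The four bounds are as follows.
\begin{itemize}
\item For the residuals, $|r_i|=|\langle x_i,\beta_t\rangle-y_i|\le \|x_i\|\|\beta_t\|+|y_i|\le RB+M$ by Cauchy--Schwarz. Averaging the squares gives $S_r^2=\frac1K\sum_i r_i^2\le (RB+M)^2$.
\item For the trace term, $\operatorname{tr}(\Sigma_x)=\frac1K\sum_i\|x_i\|^2\le R^2$. Together with $\|\beta_t\|^2\le B^2$ this gives $\sigma_{\mathrm{DP}}^2\|\beta\|^2\operatorname{tr}(\Sigma_x)\le \sigma_{\mathrm{DP}}^2B^2R^2$.
\item In the second fraction, $\|\beta_t\|^2\le B^2$ directly yields the $B^2(\cdots)$ term.
\end{itemize}

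The only step needing care is the sign-indefinite cross term. Because it may be negative, we cannot simply drop it; we bound it by its absolute value instead. Write $\beta^\top\Sigma_{rx}=\frac1K\sum_i r_i\langle x_i,\beta_t\rangle$. Then
\[
|\beta^\top\Sigma_{rx}|\le \frac1K\sum_i |r_i|\,\|x_i\|\,\|\beta_t\|\le (RB+M)RB.
\]
Hence $2\sigma_{\mathrm{DP}}^2\beta^\top\Sigma_{rx}\le 2\sigma_{\mathrm{DP}}^2BR(RB+M)$, which matches the middle term of $\bar\sigma_g^2$.

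Summing the four bounds inside the two fractions of Theorem~\ref{thm:var} gives exactly $\bar\sigma_g^2$. The bound holds for every $t$, since the only $t$-dependence enters through $\|\beta_t\|\le B$. This hypothesis is guaranteed with $B=c$ by the projection step in the server protocol.
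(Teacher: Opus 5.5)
Your proposal is correct and follows essentially the same route as the paper: you bound $S_r^2\le (RB+M)^2$, $\operatorname{tr}(\Sigma_x)\le R^2$ and $\|\beta_t\|^2\le B^2$, control the sign-indefinite term through $|\beta^\top\Sigma_{rx}|\le BR(RB+M)$, and substitute into Theorem~\ref{thm:var}. The only cosmetic difference is that you bound the cross term by expanding it as a sum over clients, whereas the paper applies Cauchy--Schwarz to $\|\beta\|\,\|\Sigma_{rx}\|$.
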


\begin{proof}
Since \(|r_i| = |\langle x_i, \beta \rangle - y_i| \leq \| x_i\| \| \beta \| + |y_i| \leq RB + M\), we have \(S_r^2 \leq (RB + M)^2\). By Cauchy‑Schwarz, \(|\beta^\top \Sigma_{rx}| \leq \| \beta \| \| \Sigma_{rx}\| \leq B \cdot R(RB + M)\), and \(\operatorname{tr}(\Sigma_x) \leq R^2\). Substituting these bounds into Theorem~\ref{thm:var} gives the claimed bound.
\end{proof}

\begin{theorem}\label{thm:conv}
Under the assumptions of Theorem~\ref{thm:var} and Corollary~\ref{cor:uniform}, with step size \(\eta = 1 / L_{\mathrm{DP}}\) where \(L_{\mathrm{DP}} = \lambda_{\max}(\Sigma_x)\), and $  \bar{\beta}_T=  \sum_t \beta_t/T$, we have
\[
\mathbb{E}[F_e(\bar{\beta}_T)] - F_e(\beta^*)\leq \frac{L_{\mathrm{DP}}\| \beta_0 - \beta^*\|^2}{2T} +\frac{\bar{\sigma}_g^2}{2L_{\mathrm{DP}}K}.
\]
\end{theorem}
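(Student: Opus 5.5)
The plan is to treat the server iteration as projected stochastic gradient descent on the empirical quadratic $F_e(\beta)=L(\beta)=\frac{1}{2K}\sum_i(x_i^\top\beta-y_i)^2$ over the Euclidean ball $\mathcal{B}_c=\{\|\beta\|\le c\}$, and to run the standard convex SGD argument with averaged iterates. Three facts drive it.
\begin{itemize}
\item $F_e$ is convex and $L_{\mathrm{DP}}$-smooth with $L_{\mathrm{DP}}=\lambda_{\max}(\Sigma_x)$, since $\nabla^2F_e=\Sigma_x$.
\item The estimator $G_t$ is conditionally unbiased given $\beta_t$: $\E[G_t\mid\beta_t]=\nabla F_e(\beta_t)$. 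This holds because fresh phases and DP noise are drawn each round independently of the past, so the unbiasedness computation of Section 3 applies conditionally.
\item The conditional variance is bounded: $\E[\|G_t-\nabla F_e(\beta_t)\|^2\mid\beta_t]\le\bar\sigma_g^2$, by Theorem~\ref{thm:var} and Corollary~\ref{cor:uniform}. This uses $\|\beta_t\|\le B=c$, which the clipping step guarantees, and $v_t\perp\beta_t$.
\end{itemize}
I assume $\beta^*\in\mathcal{B}_c$, so that the projection is nonexpansive toward $\beta^*$.

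\textbf{Key steps.}

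\emph{Distance recursion.} By nonexpansiveness of the projection,
\[
\|\beta_{t+1}-\beta^*\|^2\le\|\beta_t-\beta^*\|^2-2\eta\langle G_t,\beta_t-\beta^*\rangle+\eta^2\|G_t\|^2 .
\]

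\emph{Split the gradient.} Write $G_t=\nabla F_e(\beta_t)+e_t$ with $\E[e_t\mid\beta_t]=0$. Taking conditional expectations gives
\[
\E\|G_t\|^2=\|\nabla F_e(\beta_t)\|^2+\E\|e_t\|^2\le\|\nabla F_e(\beta_t)\|^2+\bar\sigma_g^2 .
\]

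\emph{Combine convexity and smoothness.} Use the co-coercivity form, or equivalently the smooth descent lemma applied to $\beta_t-\eta\nabla F_e$: with $\eta=1/L_{\mathrm{DP}}$,
\[
\langle\nabla F_e(\beta_t),\beta_t-\beta^*\rangle-\tfrac{\eta}{2}\|\nabla F_e(\beta_t)\|^2\ge F_e(\beta_t)-F_e(\beta^*)-\text{(nonnegative slack)} .
\]
The variant I would try first is the standard one: $F_e(\beta_t)-F_e(\beta^*)\le\langle\nabla F_e(\beta_t),\beta_t-\beta^*\rangle-\frac{1}{2L}\|\nabla F_e(\beta_t)\|^2$. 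This follows from smoothness plus convexity, using the fact that $\nabla F_e(\beta^*)=0$ at the unconstrained minimizer, which lies in the ball. Substituting yields
\[
\E[F_e(\beta_t)-F_e(\beta^*)]\le\frac{L_{\mathrm{DP}}}{2}\bigl(\E\|\beta_t-\beta^*\|^2-\E\|\beta_{t+1}-\beta^*\|^2\bigr)+\frac{\bar\sigma_g^2}{2L_{\mathrm{DP}}} .
\]

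\emph{Telescope and average.} Summing over $t=0,\dots,T-1$, dividing by $T$, and applying Jensen to $\bar\beta_T$ gives $\frac{L_{\mathrm{DP}}\|\beta_0-\beta^*\|^2}{2T}+\frac{\bar\sigma_g^2}{2L_{\mathrm{DP}}}$.

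\textbf{Main obstacle.} The delicate step is matching the exact constants in the statement, in particular the extra factor $1/K$ on the noise term. As written, $\bar\sigma_g^2$ already carries a $1/K$. I would therefore interpret the stated bound either as charging a per-client variance, i.e.\ reading $\bar\sigma_g^2$ as $K$ times the aggregate variance so that the $1/K$ is reinstated, or as a typo. I would state explicitly which normalization of $\bar\sigma_g^2$ makes the inequality hold.

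A second subtlety is the interaction of projection with the smoothness step. If $\beta^*$ is not interior, $\nabla F_e(\beta^*)\ne0$, and the clean inequality above fails. In that case I would fall back on the nonsmooth-style bound $F_e(\beta_t)-F_e(\beta^*)\le\langle\nabla F_e(\beta_t),\beta_t-\beta^*\rangle$ and control $\eta^2\|\nabla F_e\|^2$ separately via boundedness on the ball, at the cost of an extra term. So I would add the hypothesis $\|\beta^*\|\le c$ explicitly to obtain the stated form.
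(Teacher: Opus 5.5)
Your route is the paper's route: the paper's proof is a one-line appeal to standard projected-SGD theory with smoothness constant $L_{\mathrm{DP}}$ and variance bound $\bar\sigma_g^2$, and you carry that argument out. The derivation is correct. With $\eta=1/L_{\mathrm{DP}}$, the co-coercivity term $-\frac{\eta}{L_{\mathrm{DP}}}\|\nabla F_e(\beta_t)\|^2$ cancels $\eta^2\|\nabla F_e(\beta_t)\|^2$ exactly. You obtain $\frac{L_{\mathrm{DP}}\|\beta_0-\beta^*\|^2}{2T}+\frac{\bar\sigma_g^2}{2L_{\mathrm{DP}}}$ for the average of $\beta_0,\dots,\beta_{T-1}$. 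This needs your hypothesis that the least-squares minimizer lies in the ball, plus $\|\beta_0\|\le c$ so that the variance bound also applies at $t=0$.

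On the factor $1/K$, you should not hedge: it is a defect of the statement, not of your argument. The standard argument the paper cites yields $\bar\sigma_g^2/(2L_{\mathrm{DP}})$. The $\bar\sigma_g^2$ of Corollary~\ref{cor:uniform} is already the variance of the $K$-client average, so the extra $1/K$ double counts, and the displayed inequality fails in general.

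To see this, take a dataset (replicated to let $K$ grow) with $\Sigma_x=L_{\mathrm{DP}}I$ and an interior least-squares solution $\beta^*\neq 0$, and start at $\beta_0=\beta^*$. Up to the projection, whose effect is of lower order for large $K$, $\bar\beta_2-\beta^*=-e_0/(2L_{\mathrm{DP}})$. Hence $\E[F_e(\bar\beta_2)]-F_e(\beta^*)\approx\sigma_g^2(\beta^*)/(8L_{\mathrm{DP}})$. Since $X^\top r=0$ at $\beta^*$, every term in Theorem~\ref{thm:var} is nonnegative, so $K\sigma_g^2(\beta^*)$ is bounded below independently of $K$, while $K\bar\sigma_g^2$ is bounded above. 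The left side is therefore of order $1/K$ and the stated right side of order $1/K^2$.

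So state the result with $\bar\sigma_g^2/(2L_{\mathrm{DP}})$, or equivalently the displayed form with the per-client quantity $K\bar\sigma_g^2$ in place of $\bar\sigma_g^2$.

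You are also right that your cancellation needs $\nabla F_e(\beta^*)=0$, which the paper's proof does not address. For a boundary minimizer there is a better fallback than yours: apply the descent lemma at $\beta_{t+1}$ and combine it with the three-point inequality of the projection. This gives $\frac{L_{\mathrm{DP}}\|\beta_0-\beta^*\|^2}{2T}+\frac{\bar\sigma_g^2}{L_{\mathrm{DP}}}$ for the average of $\beta_1,\dots,\beta_T$, with no radius-dependent extra term.
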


\begin{proof}
This follows from standard theory of projected stochastic gradient descent on the set ${\beta: ||\beta|| <= c}$ \cite{nemirovski2009robust} \(\bar{\sigma}_g^2\) with smoothness constant \(L_{\mathrm{DP}}\). The first term decays as \(O(1/T)\) while the second term is the irreducible variance floor set by the privacy noise and protocol parameters.
\end{proof}

\section{Parameters} 
With privacy, unbiasedness, and convergence in place, we can now interpret the formulas at a design level, to help guide the user on choosing their values. This roughly translates the theory into practical guidance on how $\alpha, \lambda, \omega, m, d$, and $K$ shape the privacy-utility trade-off.
Since, $\sigma_{\mathrm{DP}}^2\propto L^2/\varepsilon^2$ with $L=|1-\alpha|+\lambda\omega$,
the dominant cost terms scale as $d\sigma_{\mathrm{DP}}^2/(1-\alpha)^2$
and $(d+1)\sigma_{\mathrm{DP}}^4/[K(1-\alpha)^4]$.
Reducing $L$ directly reduces $\sigma_{\mathrm{DP}}^2$ and both terms simultaneously.

With $m$ orthonormal modulation vectors the Lipschitz constant becomes
$L=|1-\alpha|+\lambda\omega/\sqrt{m}$, so $\sigma_{\mathrm{DP}}^2$ decreases with $m$.
Every $\sigma_{\mathrm{DP}}^2$-dependent term therefore benefits from
using more vectors. Increasing $\alpha$ lowers $L$ (and hence $\sigma_{\mathrm{DP}}^2$), but the denominators
$(1-\alpha)^2$ and $(1-\alpha)^4$ simultaneously amplify the variance.
The optimal $\alpha$ balances these opposing effects. The dimension appears linearly in both the $d\sigma_{\mathrm{DP}}^2 S_r^2$ and
$(d+1)\sigma_{\mathrm{DP}}^4$ terms, confirming a linear curse of dimensionality.

\section*{Experiments}

We evaluate the proposed modulated learner on five real regression tasks in the one-sample-per-client regime. Each training example is treated as a separate client. We compare the iterative modulated estimator from Algorithms~3.1--3.2, the one-shot estimator from Section~5 with ridge stabilization, and a tuned DP-SGD FedAvg baseline. The purpose of this appendix is to compare whether the privacy-utility trade-offs achieved by these methods in practice. 

\paragraph{Methods.} \emph{Modulated iterative} is the multi-round server-side update induced by the client mechanism of Algorithm~3.1 and the server correction of Algorithm~3.2. At round $t$, each client sends $(\tilde g_i, y_i)$; the server forms the debiased covariance estimate $\widehat\Sigma_x$ and corrected first moment $Z$, and updates
\[
\beta \leftarrow \beta - \eta\bigl(\widehat\Sigma_x\beta - Z\bigr).
\]
 \emph{Modulated one-shot ridge} uses the Section~5 moment estimator together with a ridge term (like in ridge regression) when solving the linear model. \emph{DP-SGD FedAvg} performs one clipped and Gaussian-perturbed local gradient step per client per round, followed by server averaging. All methods are operated  over the common privacy values $\epsilon\in\{0.5,0.75,1.0,1.25,\ldots,10.0\}$ while using a public validation split. For modulated iterative we tune the scalar factor $c$ in the adaptive rule $\eta_t = c / \lVert \widehat\Sigma_{x,t}\rVert_{\mathrm{op}}$. For the one-shot estimator we tune the ridge parameter $\gamma$. For DP-SGD we tune the clipping norm $C$ jointly with the learning rate. Each plotted curve therefore corresponds to one dataset-specific hyperparameter configuration held fixed across the entire privacy sweep.
\begin{figure}[!ht]
\centering
\begin{minipage}[t]{0.48\linewidth}
    \centering
    \includegraphics[width=\linewidth]{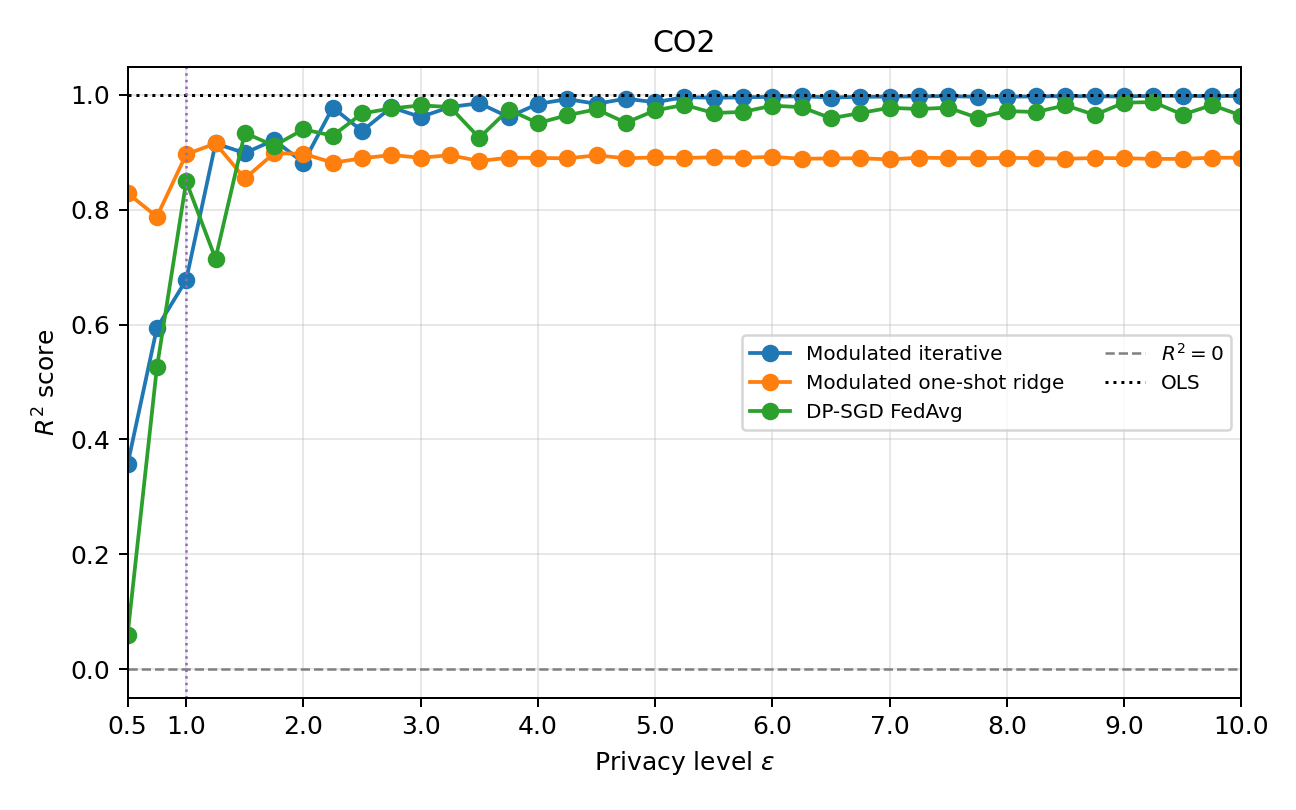}\\[-0.35em]
    {\small\textbf{(a)} CO$_2$}
\end{minipage}\hfill
\begin{minipage}[t]{0.48\linewidth}
    \centering
    \includegraphics[width=\linewidth]{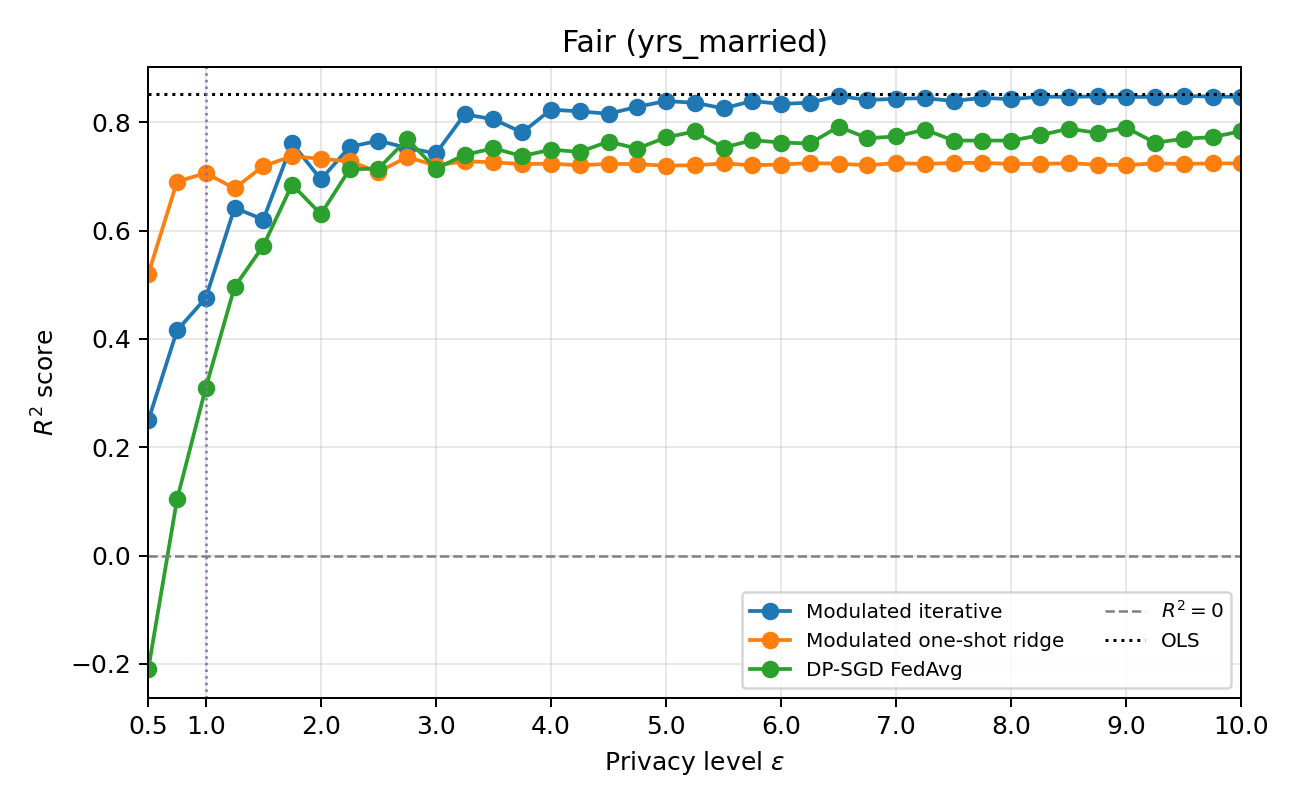}\\[-0.35em]
    {\small\textbf{(b)} Fair (years married)}
\end{minipage}

\begin{minipage}[t]{0.48\linewidth}
    \centering
    \includegraphics[width=\linewidth]{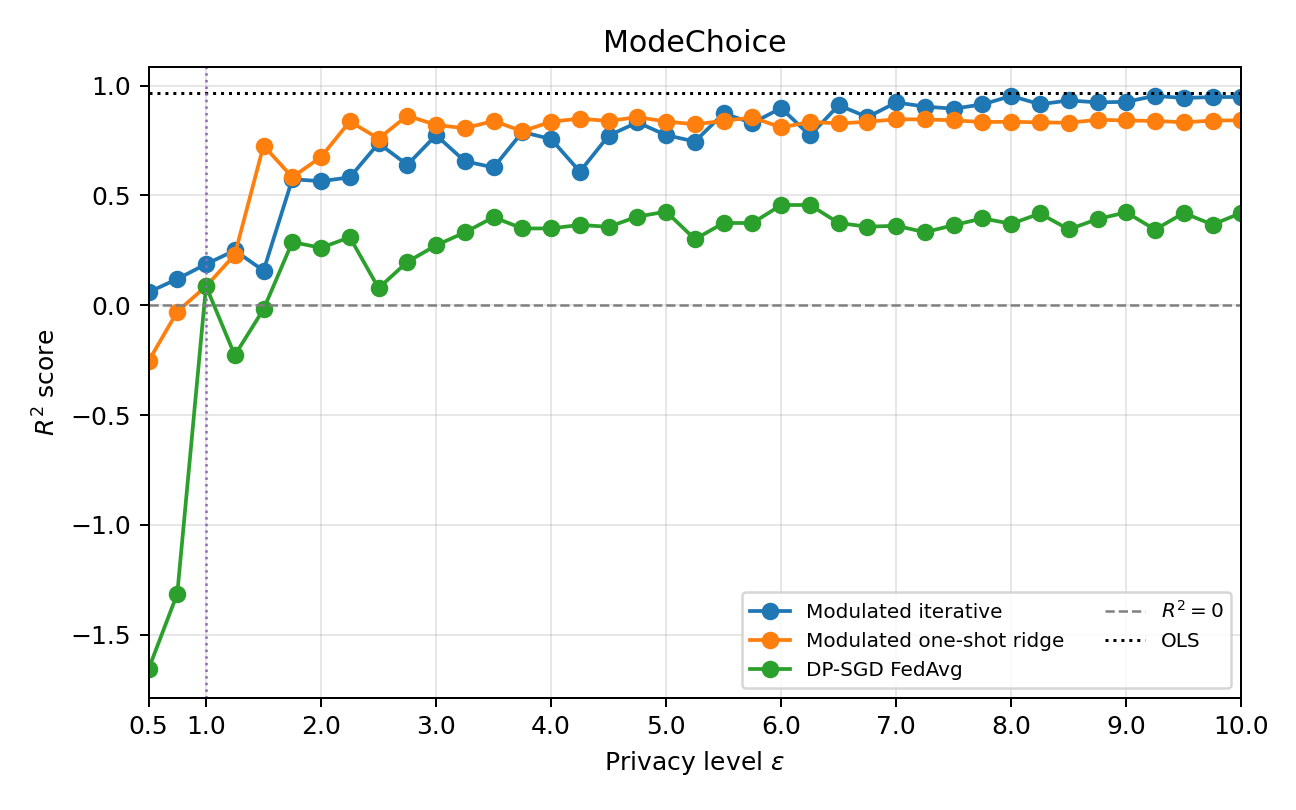}\\[-0.35em]
    {\small\textbf{(c)} ModeChoice}
\end{minipage}\hfill
\begin{minipage}[t]{0.48\linewidth}
    \centering
    \includegraphics[width=\linewidth]{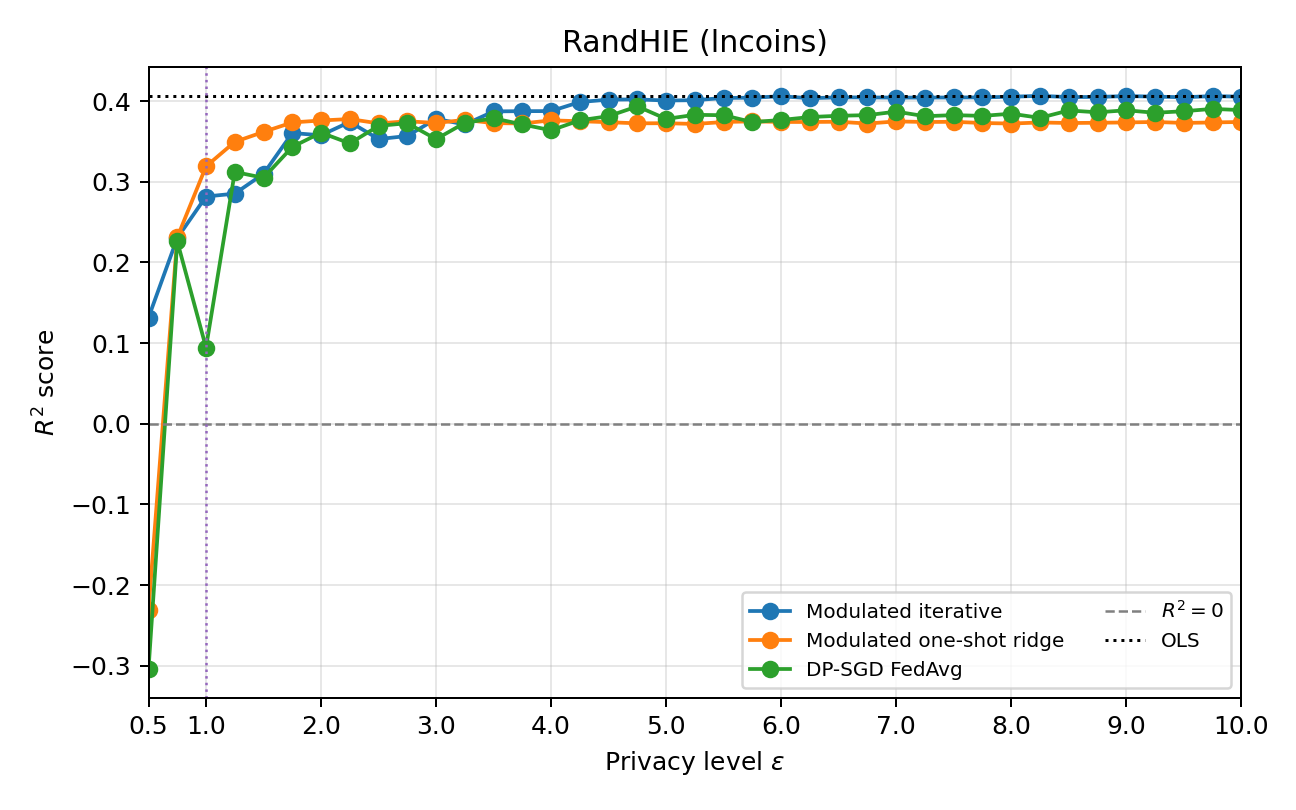}\\[-0.35em]
    {\small\textbf{(d)} RandHIE (\texttt{lncoins})}
\end{minipage}

\begin{minipage}[t]{0.60\linewidth}
    \centering
    \includegraphics[width=\linewidth]{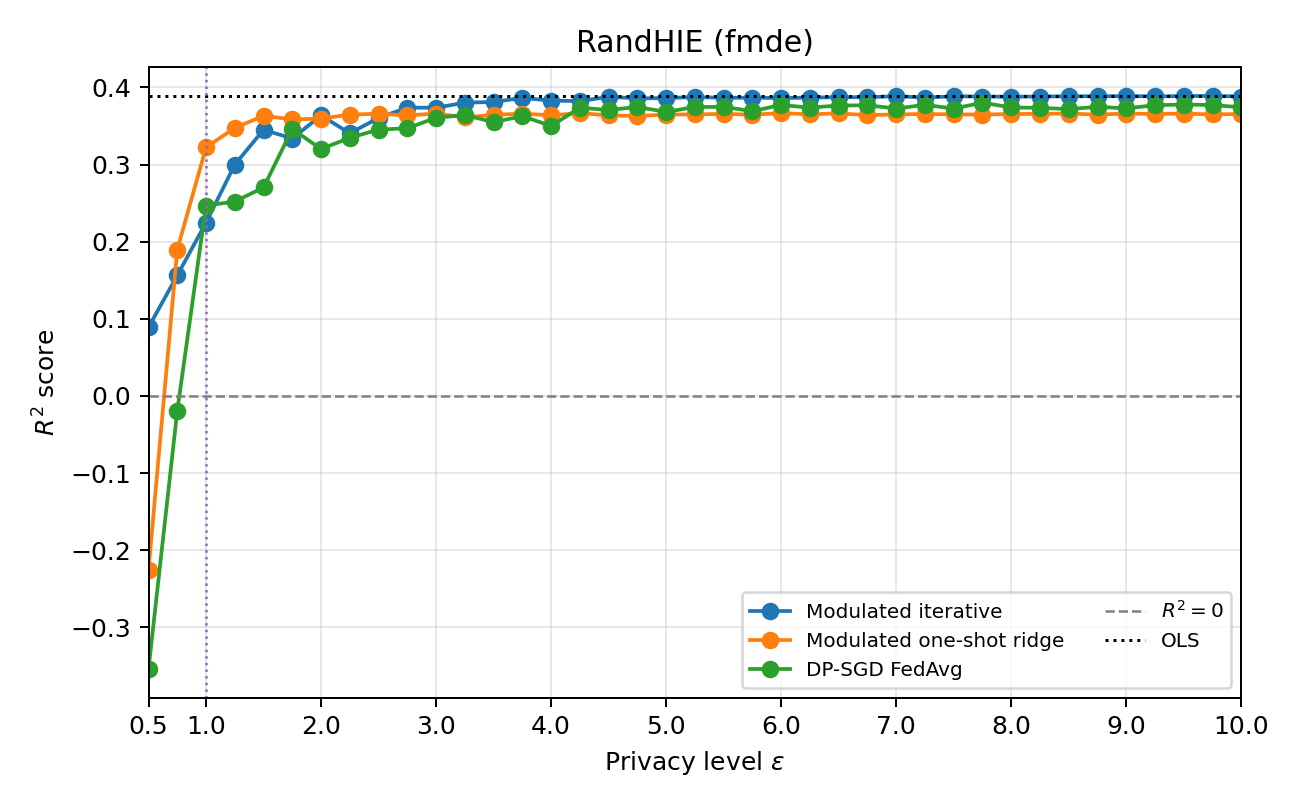}\\[-0.35em]
    {\small\textbf{(e)} RandHIE (\texttt{fmde})}
\end{minipage}
\caption{Jointly tuned test-set $R^2$ curves on five representative real tasks. Each panel compares modulated iterative, modulated one-shot ridge, and tuned DP-SGD FedAvg over the same privacy grid. The dashed horizontal line is the non-private OLS reference. The common pattern is that the one-shot modulated estimator is strongest at the smallest $\epsilon$ values, while the modulated iterative estimator improves more rapidly once the privacy budget is large enough to support repeated rounds.}
\label{fig:joint-realdata-r2-panels}
\end{figure}

\paragraph{Privacy accounting.} We follow the privacy convention used by the algorithm: privacy is imposed on the client feature vector $x_i$, while $y_i$ is treated as public in the server correction term. Smaller $\epsilon$ means stronger privacy and larger $\epsilon$ means weaker privacy. For the modulated methods, the client map is calibrated by the Gaussian mechanism using the sensitivity bound from Theorem~3.1. Since every client participates in every round, privacy composes sequentially across rounds under zCDP: if the per-round budget is $\rho_t$, then $\rho_{1:T}=\sum_{t=1}^T \rho_t$ and the corresponding $(\epsilon,\delta)$ guarantee is recovered through $\epsilon = \rho_{1:T} + 2\sqrt{\rho_{1:T}\log(1/\delta)}$. The iterative method uses $T=10$ rounds; the one-shot estimator uses a single private release. DP-SGD is evaluated with the same full-participation zCDP accountant over $T=10$ rounds, with per-round sensitivity $2C$ for the clipped gradient. There is no subsampling amplification because every client participates in every round. That said, it could still be applied to all the methods to obtain a similar conclusion on their performance. 

\paragraph{Datasets and preprocessing.} Each task is split once into train, validation, and test subsets. Features are standardized on the training split. The response is standardized for fitting, and $R^2$ is reported on the standardized test target. The dashed horizontal reference in every panel is the non-private OLS fit computed on the same split.

\begin{table}
\centering
\caption{Real datasets used in the appended comparison. Each training example is treated as one client.}
\label{tab:joint-realdata-datasets}
\begin{tabular}{lrrr}
\toprule
Dataset & $n$ & $d$ & OLS $R^2$ \\
\midrule
CO2 & 2284 & 7 & 0.999 \\
Fair (yrs\_married) & 6366 & 8 & 0.853 \\
ModeChoice & 840 & 7 & 0.967 \\
RandHIE (lncoins) & 20190 & 9 & 0.406 \\
RandHIE (fmde) & 20190 & 9 & 0.389 \\
\bottomrule
\end{tabular}
\end{table}
\FloatBarrier\paragraph{Results.} The five tasks show the same qualitative pattern. At the strongest privacy levels, the one-shot modulated estimator is often strongest because it uses a single private release and therefore avoids inter-round composition. As the privacy budget increases, the iterative estimator improves steadily and eventually overtakes the one-shot alternative on most tasks. The gain is largest on ModeChoice, where multi-round correction produces a clear advantage once the privacy budget is large enough to support repeated optimization. The two RandHIE targets have weaker linear signal, so the separation between methods is smaller, but the same transition remains visible. DP-SGD is a meaningful baseline throughout, especially on the easiest tasks, but in this regime it pays the full cost of ten rounds of composition while using only one clipped example per client per round.\par When the needed privacy is high (low $\epsilon$), the one-shot modulated estimator fared as the better choice because it extracts as much utility as possible from a single private release. As privacy weakens, the iterative modulated estimator becomes preferable because it can convert additional privacy budget into useful optimization progress over multiple rounds. 
\begin{table}
\centering
\caption{Jointly tuned hyperparameters over the considered range of $\epsilon$ values. The iterative column reports the selected step factor $c$, the one-shot column reports the selected ridge parameter $\gamma$, and the DP-SGD columns report the selected clipping norm $C$ and learning rate.}
\label{tab:joint-realdata-tuning}
\begin{tabular}{lcccc}
\toprule
Dataset & Iterative $c$ & One-shot $\gamma$ & DP-SGD $C$ & DP-SGD step \\
\midrule
CO2 & 0.80 & 2.00 & 2.249 & 0.05 \\
Fair (yrs\_married) & 0.80 & 1.00 & 2.059 & 0.10 \\
ModeChoice & 1.00 & 0.50 & 1.816 & 0.05 \\
RandHIE (lncoins) & 0.80 & 0.50 & 2.270 & 0.10 \\
RandHIE (fmde) & 0.50 & 0.50 & 2.256 & 0.10 \\
\bottomrule
\end{tabular}
\end{table}

\appendix

\section{Centering Identity}\label{app:centering}
This appendix proves the centering identity used to decompose the gradient noise in the variance analysis. The purpose of the result is to separate the random contributions into blocks that can be analyzed independently.

\begin{lemma}[Centering identity]\label{lem:centering}
Define the three $\R^d$-valued random vectors
\begin{align}
  T_1 &:= \lambda(C^\top r)\,v + \Xi^\top r, \label{eq:T1}\\
  T_2 &:= X^\top q, \label{eq:T2}\\
  T_3 &:= \lambda(C^\top q)\,v + \Xi^\top q - K\sigma_{\mathrm{DP}}^2\beta. \label{eq:T3}
\end{align}
Each of $T_1$, $T_2$, $T_3$ has mean zero, and
\[
  G - \nabla L(\beta)
  = \frac{T_1+T_2}{K(1-\alpha)} + \frac{T_3}{K(1-\alpha)^2}.
\]
\end{lemma}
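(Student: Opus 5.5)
The plan is direct algebraic substitution. I write each privatized message in vector form, expand $G$ exactly, and sort the resulting terms by the factor $1/(1-\alpha)$ or $1/(1-\alpha)^2$. Throughout I use the standing assumption $v\perp\beta$ of Theorem~\ref{thm:var}. I work with the single-vector protocol of Algorithms~\ref{algo1}--\ref{algo2}, so that
\[
\tilde g_i=(1-\alpha)x_i+\lambda C_i v+\xi_i,
\]
with $C_i$, $\xi_i$, $q_i=\xi_i^\top\beta$ as defined before Theorem~\ref{thm:var}.

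\emph{Step 1: simplify the scalar projections.} Since $v^\top\beta=0$, I get $\tilde g_i^\top\beta=(1-\alpha)\langle x_i,\beta\rangle+q_i$. The debiasing term $\frac{\lambda^2}{2}vv^\top\beta$ in $\widehat\Sigma_x\beta$ also vanishes. Hence
\[
\widehat\Sigma_x\beta=\frac{1}{K(1-\alpha)^2}\sum_i \tilde g_i\bigl((1-\alpha)\langle x_i,\beta\rangle+q_i\bigr)-\frac{\sigma_{\mathrm{DP}}^2}{(1-\alpha)^2}\beta.
\]

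\emph{Step 2: combine with $Z$.} Subtracting $Z=\frac{1}{K(1-\alpha)}\sum_i y_i\tilde g_i$ and using $r_i=\langle x_i,\beta\rangle-y_i$ gives
\[
G=\frac{1}{K(1-\alpha)}\sum_i r_i\tilde g_i+\frac{1}{K(1-\alpha)^2}\Bigl(\sum_i q_i\tilde g_i-K\sigma_{\mathrm{DP}}^2\beta\Bigr).
\]

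\emph{Step 3: expand each sum using the decomposition of $\tilde g_i$.} For the first sum,
\[
\sum_i r_i\tilde g_i=(1-\alpha)X^\top r+\lambda(C^\top r)v+\Xi^\top r.
\]
Since $\frac1K X^\top r=\nabla L(\beta)$, this contributes $\nabla L(\beta)+T_1/(K(1-\alpha))$. For the second sum,
\[
\sum_i q_i\tilde g_i=(1-\alpha)X^\top q+\lambda(C^\top q)v+\Xi^\top q.
\]
The piece $(1-\alpha)X^\top q$, divided by $K(1-\alpha)^2$, becomes $T_2/(K(1-\alpha))$. The remaining pieces, together with $-K\sigma_{\mathrm{DP}}^2\beta$, are exactly $T_3$. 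This yields the stated identity.

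\emph{Step 4: zero means.} $T_1$ and $T_2$ are linear in the mean-zero, independent quantities $C$ and $\Xi$, with fixed data coefficients, so $\E[T_1]=\E[T_2]=0$; in particular $\E[q]=0$. For $T_3$, $\E[C^\top q]=\sum_i\E[C_i]\E[q_i]=0$ by independence of phases and noise. Moreover,
\[
\E[\Xi^\top q]=\sum_i\E[\xi_i\xi_i^\top]\beta=K\sigma_{\mathrm{DP}}^2\beta,
\]
which cancels the constant term, so $\E[T_3]=0$.

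The main obstacle is not conceptual but bookkeeping. The two places to handle carefully are the use of $v\perp\beta$, which removes both the $\lambda^2vv^\top\beta$ correction and the $\lambda C_i v^\top\beta$ contribution to the projections, and the correct split of $\sum_i q_i\tilde g_i$ between the $T_2$ and $T_3$ blocks. That split requires keeping the extra factor $(1-\alpha)$ so that $T_2$ lands with denominator $K(1-\alpha)$.
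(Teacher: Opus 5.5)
Your proposal is correct and takes essentially the paper's route. Both arguments substitute $\tilde g_i=(1-\alpha)x_i+\lambda C_i v+\xi_i$, use $v\perp\beta$ to remove the $\lambda C_i v^\top\beta$ and $\tfrac{\lambda^2}{2}vv^\top\beta$ terms, expand and collect, and get the zero means from $\E[C_i]=0$, $\E[\xi_i]=0$ and $\E[\Xi^\top q]=K\sigma_{\mathrm{DP}}^2\beta$. Grouping $G$ into $\sum_i r_i\tilde g_i$ and $\sum_i q_i\tilde g_i$ before expanding is just a tidier ordering, and your explicit check that $\E[C^\top q]=0$ fills a step the paper leaves implicit.
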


\begin{proof}
Recall from Algorithm~3.2 of the main paper that
$G = \widehat\Sigma_x\beta - Z$,
where
$\widehat\Sigma_x = \tfrac{1}{(1-\alpha)^2}(\widetilde\Sigma_g - \tfrac{\lambda^2}{2}vv^\top - \sigma_{\mathrm{DP}}^2 I)$,
$\widetilde\Sigma_g = \tfrac{1}{K}\sum_i \tilde g_i\tilde g_i^\top$,
and $Z = \tfrac{1}{(1-\alpha)K}\sum_i y_i\tilde g_i$.

Since $\tilde g_i = (1-\alpha)x_i + \lambda C_i v + \xi_i$ and $v\perp\beta$, the inner product
$\tilde g_i^\top\beta$ simplifies to
\[
  \tilde g_i^\top\beta = (1-\alpha)(x_i^\top\beta)
  + \lambda C_i\underbrace{(v^\top\beta)}_{=\,0} + \xi_i^\top\beta
  = (1-\alpha)(x_i^\top\beta) + q_i.
\]
Using this, $\tilde g_i(\tilde g_i^\top\beta) = [(1-\alpha)x_i + \lambda C_i v + \xi_i][(1-\alpha)(x_i^\top\beta)+q_i]$.
Expanding, averaging over $i$, and noting that $\tfrac{\lambda^2}{2}vv^\top\beta = 0$ since $v\perp\beta$, gives
\begin{align}
  \widehat\Sigma_x\beta
  = \Sigma_x\beta
  + \frac{X^\top q}{(1-\alpha)K}
  + \frac{\lambda(C^\top X\beta)\,v}{(1-\alpha)K}
  + \frac{\lambda(C^\top q)\,v}{(1-\alpha)^2 K}
  &\\\nonumber+ \frac{\Xi^\top X\beta}{(1-\alpha)K}
  + \frac{\Xi^\top q}{(1-\alpha)^2 K}
  - \frac{\sigma_{\mathrm{DP}}^2\beta}{(1-\alpha)^2}
\end{align}
Substituting $\tilde g_i$ into $Z$ gives
\[
  Z = \frac{X^\top Y}{K}
    + \frac{\lambda(C^\top Y)\,v}{(1-\alpha)K}
    + \frac{\Xi^\top Y}{(1-\alpha)K}
\]
Since $\nabla L(\beta) = \Sigma_x\beta - \tfrac{1}{K}X^\top Y$,
the zeroth-order terms in $\widehat\Sigma_x\beta - Z$ cancel $\nabla L(\beta)$ exactly.
Using $C^\top X\beta - C^\top Y = C^\top r$ and $\Xi^\top X\beta - \Xi^\top Y = \Xi^\top r$,
the remaining terms collect as
\[
  G - \nabla L(\beta)
  = \frac{\lambda(C^\top r)\,v + \Xi^\top r + X^\top q}{K(1-\alpha)}
  + \frac{\lambda(C^\top q)\,v + \Xi^\top q - K\sigma_{\mathrm{DP}}^2\beta}{K(1-\alpha)^2}\]
  \[= \frac{T_1+T_2}{K(1-\alpha)} + \frac{T_3}{K(1-\alpha)^2}
\]
Each vector has mean zero: $\E[T_1]=0$ because $\E[C_i]=0$ and $\E[\xi_i]=0$;
$\E[T_2]=0$ because $\E[q_i]=0$; and $\E[T_3]=0$ because
$\E[\Xi^\top q]=K\sigma_{\mathrm{DP}}^2\beta$ (by \ref{lem:moments})
is exactly cancelled by the $-K\sigma_{\mathrm{DP}}^2\beta$ term.
\end{proof}

\section{Scalar Moments}\label{app:moments}

\begin{lemma}[Scalar moments]\label{lem:moments}
For all $i,j$ and any fixed $a\in\R^d$:
\begin{align}
  \E[C_i C_j] &= \tfrac{1}{2}\delta_{ij}, \quad \E[C_i] = 0, \label{eq:Cmom}\\
  \E[q_i q_j] &= \sigma_{\mathrm{DP}}^2\norm{\beta}^2\delta_{ij}, \quad \E[q_i] = 0, \label{eq:qmom}\\
  \E\bigl[q_i(\xi_i^\top a)\bigr] &= \sigma_{\mathrm{DP}}^2(\beta^\top a), \label{eq:cross}\\
  \E\bigl[(\xi_i^\top\beta)\norm{\xi_i}^2\bigr] &= 0, \label{eq:third}\\
  \E\bigl[(\xi_i^\top\beta)^2\norm{\xi_i}^2\bigr]
  &= \sigma_{\mathrm{DP}}^4(d+2)\norm{\beta}^2. \label{eq:fourth}
\end{align}
\end{lemma}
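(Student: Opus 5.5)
The plan is to prove each identity directly from the distributional assumptions, with nothing beyond elementary moment calculations. The setup is:
\begin{itemize}
\item the phases $\phi_i$ are i.i.d.\ $\mathrm{Unif}[0,2\pi]$;
\item the noise vectors $\xi_i$ are i.i.d.\ $\mathcal{N}(0,\sigma_{\mathrm{DP}}^2 I_d)$;
\item the phases and the noise are mutually independent;
\item the data $(x_i,y_i)$, together with $\beta$, $v$ and $a$, are treated as fixed.
\end{itemize}

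\textbf{Phase moments \eqref{eq:Cmom}.} Write $C_i=\cos(\omega\ip{x_i}{v}+\phi_i)$. Since $\omega\ip{x_i}{v}$ is a fixed shift, $\theta_i \bmod 2\pi$ is uniform on $[0,2\pi]$. Hence $\E[C_i]=\frac{1}{2\pi}\int_0^{2\pi}\cos u\,du=0$ and $\E[C_i^2]=\frac12$. For $i\neq j$, independence of $\phi_i$ and $\phi_j$ gives $\E[C_iC_j]=\E[C_i]\E[C_j]=0$.

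\textbf{Linear forms in $\xi$: \eqref{eq:qmom} and \eqref{eq:cross}.} The variable $q_i=\xi_i^\top\beta$ is a linear form in a centered Gaussian, so $\E[q_i]=0$. For the second moments, $\E[q_iq_j]=\beta^\top\E[\xi_i\xi_j^\top]\beta$. This equals $\sigma_{\mathrm{DP}}^2\norm{\beta}^2$ when $i=j$ and $0$ when $i\neq j$, by independence and zero mean. Likewise $\E[q_i(\xi_i^\top a)]=\beta^\top\E[\xi_i\xi_i^\top]a=\sigma_{\mathrm{DP}}^2\beta^\top a$.

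\textbf{Third moment \eqref{eq:third}.} I will use symmetry. The Gaussian law is invariant under $\xi_i\mapsto-\xi_i$, while the integrand $(\xi_i^\top\beta)\norm{\xi_i}^2$ is odd in $\xi_i$. Its expectation therefore vanishes; it is finite because Gaussians have all moments.

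\textbf{Fourth moment \eqref{eq:fourth}.} This is the only step with actual computation. I would expand in coordinates, $(\xi^\top\beta)^2\norm{\xi}^2=\sum_{k,l,m}\beta_k\beta_l\,\xi_k\xi_l\xi_m^2$, and apply Isserlis' theorem to $\E[\xi_k\xi_l\xi_m\xi_m]$:
\[
\E[\xi_k\xi_l\xi_m^2]=\sigma_{\mathrm{DP}}^4\bigl(\delta_{kl}+2\delta_{km}\delta_{lm}\bigr).
\]
Summing over $m$ gives $\sigma_{\mathrm{DP}}^4(d\,\delta_{kl}+2\delta_{kl})$. Contracting with $\beta_k\beta_l$ then yields $\sigma_{\mathrm{DP}}^4(d+2)\norm{\beta}^2$.

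As a cross-check, I would rotate coordinates so that $\beta=\norm{\beta}e_1$. The expectation becomes $\norm{\beta}^2\E[\xi_1^2(\xi_1^2+\sum_{k\ge2}\xi_k^2)]=\norm{\beta}^2(3\sigma_{\mathrm{DP}}^4+(d-1)\sigma_{\mathrm{DP}}^4)$, which agrees. The case $\beta=0$ is trivial.

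The only place needing care is the Isserlis bookkeeping in \eqref{eq:fourth}, in particular obtaining the factor $d+2$ rather than $d+1$ or $d+3$. The rotation argument is a quick independent verification of that constant.
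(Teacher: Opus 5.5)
Your proposal is correct and follows essentially the same route as the paper. You use elementary phase and Gaussian second-moment computations for \eqref{eq:Cmom}--\eqref{eq:cross}, and Isserlis' theorem with the contraction $\sigma_{\mathrm{DP}}^4(\delta_{kl}+2\delta_{km}\delta_{lm})$ for \eqref{eq:fourth}. The differences are minor: you handle \eqref{eq:third} by the global sign-flip symmetry $\xi_i\mapsto-\xi_i$ instead of the paper's coordinatewise split into independent factors and odd third moments, and your rotation to $\beta=\norm{\beta}e_1$ is a valid extra check of the constant $d+2$.
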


\begin{proof}
Equations \eqref{eq:Cmom} and \eqref{eq:qmom} are standard.
For \eqref{eq:cross}, note that $q_i = \xi_i^\top\beta$, so
\[
  \E[q_i(\xi_i^\top a)] = \sum_{b,c}\beta_b a_c \E[\xi_{ib}\xi_{ic}]
  = \sigma_{\mathrm{DP}}^2 \beta^\top a,
\]
since $\E[\xi_{ib}\xi_{ic}]=\sigma_{\mathrm{DP}}^2\delta_{bc}$.
For \eqref{eq:third}, expand $\E[(\xi_i^\top\beta)\norm{\xi_i}^2] = \sum_{a,b}\beta_b\,\E[\xi_{ib}\xi_{ia}^2]$.
When $b\ne a$ this equals $\E[\xi_{ib}]\E[\xi_{ia}^2]=0$,
and when $b=a$ it equals $\beta_a\E[\xi_{ia}^3]=0$ by Gaussian symmetry.
For \eqref{eq:fourth}, the Isserlis theorem gives
$\E[\xi_{ib}\xi_{ic}\xi_{ia}\xi_{ia}]= \sigma_{\mathrm{DP}}^4(\delta_{bc}+2\delta_{ba}\delta_{ca})$.
Summing over $a$ and then over $b,c$ with weights $\beta_b\beta_c$
yields $\sigma_{\mathrm{DP}}^4(d\norm{\beta}^2+2\norm{\beta}^2)$.
\end{proof}

\section{Block 1: \texorpdfstring{$\E[\norm{T_1+T_2}^2]$}{E[||T1+T2||^2]}}
\label{app:block1}

\begin{lemma}\label{lem:block1}
\[
  \E[\norm{T_1+T_2}^2]
  = \norm{r}^2\!\left(\frac{\lambda^2}{2}+d\sigma_{\mathrm{DP}}^2\right)
  + 2K\sigma_{\mathrm{DP}}^2\,\beta^\top\Sigma_{rx}
  + K\sigma_{\mathrm{DP}}^2\norm{\beta}^2\,\mathrm{tr}(\Sigma_x).
\]
\end{lemma}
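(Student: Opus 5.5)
The plan is to expand the squared norm of $T_1+T_2$ into its elementary pieces and evaluate each expectation with the scalar moments of Lemma~\ref{lem:moments}. The data $(x_i,y_i)$, and hence $r$ and $X$, are fixed throughout. The only randomness is the phase vector $C$ and the noise matrix $\Xi$, with $q=\Xi\beta$, and these two are independent. Writing $T_1 = A + B$ with $A:=\lambda(C^\top r)v$ and $B:=\Xi^\top r$, I will expand
\[
\norm{T_1+T_2}^2=\norm{A}^2+\norm{B}^2+\norm{T_2}^2+2A^\top B+2A^\top T_2+2B^\top T_2 .
\]
I will then take expectations term by term.

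\textbf{Terms involving $A$.} Since $\norm{v}=1$, we have $\norm{A}^2=\lambda^2(C^\top r)^2$. By \eqref{eq:Cmom},
\[
\E\norm{A}^2=\lambda^2\sum_{i,j}r_ir_j\E[C_iC_j]=\tfrac{\lambda^2}{2}\norm{r}^2 .
\]
Both cross terms $A^\top B$ and $A^\top T_2$ are linear in $C$ and multiplied by functions of $\Xi$ alone. Independence of $C$ and $\Xi$ together with $\E[C_i]=0$ makes both expectations vanish. This disposes of everything involving the cosine modulation.

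\textbf{Noise-only terms.} For the pure-noise term,
\[
\E\norm{\Xi^\top r}^2=\sum_{i,j}r_ir_j\E[\xi_i^\top\xi_j]=d\sigma_{\mathrm{DP}}^2\norm{r}^2 ,
\]
using independence of distinct rows. For $T_2=X^\top q$, equation \eqref{eq:qmom} gives
\[
\E\norm{X^\top q}^2=\sum_{i,j}x_i^\top x_j\,\E[q_iq_j]=\sigma_{\mathrm{DP}}^2\norm{\beta}^2\sum_i\norm{x_i}^2=K\sigma_{\mathrm{DP}}^2\norm{\beta}^2\,\mathrm{tr}(\Sigma_x).
\]

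\textbf{The coupled cross term.} The remaining term is $2\E[B^\top T_2]=2\sum_{i,j}r_i\,\E[q_j\,\xi_i^\top x_j]$. Terms with $i\neq j$ vanish by independence and zero mean of the rows of $\Xi$. For $i=j$, the cross moment \eqref{eq:cross} with $a=x_i$ gives $\sigma_{\mathrm{DP}}^2\beta^\top x_i$. Hence
\[
2\E[B^\top T_2]=2\sigma_{\mathrm{DP}}^2\beta^\top X^\top r=2K\sigma_{\mathrm{DP}}^2\beta^\top\Sigma_{rx}.
\]

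Summing the surviving pieces gives exactly the claimed identity. The only delicate point is this last cross term, because $B$ and $T_2$ share the same noise rows $\xi_i$. It is the one place where the diagonal $i=j$ contributions survive and the correlation between $\xi_i^\top r_i$ and $q_i=\xi_i^\top\beta$ must be tracked via \eqref{eq:cross}. Everything else reduces to independence and zero-mean arguments. The orthogonality $v\perp\beta$ is not needed in this block; it was already used in Lemma~\ref{lem:centering} to produce the form of $T_1,T_2$.
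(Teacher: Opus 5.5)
Your proposal is correct and follows essentially the same route as the paper. The paper expands $\E\norm{T_1+T_2}^2=\E\norm{T_1}^2+2\E[T_1^\top T_2]+\E\norm{T_2}^2$ and evaluates exactly your six pieces with the same arguments: independence of $C$ and $\Xi$ with $\E[C_i]=0$, independence of distinct noise rows, and the cross moment \eqref{eq:cross} for the surviving diagonal terms of $r^\top\Xi X^\top q$.
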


\begin{proof}
We use the expansion
$\E[\norm{T_1+T_2}^2] = \E[\norm{T_1}^2]+2\E[T_1^\top T_2]+\E[\norm{T_2}^2]$
and compute each term separately.

\smallskip\noindent\textit{Term $\E[\norm{T_1}^2]$.}
Since $T_1=\lambda(C^\top r)v+\Xi^\top r$ and $\norm{v}=1$,
\[
  \norm{T_1}^2 = \lambda^2(C^\top r)^2 + 2\lambda(C^\top r)(v^\top\Xi^\top r) + \norm{\Xi^\top r}^2.
\]
The first term satisfies $\E[\lambda^2(C^\top r)^2] = \lambda^2\sum_{i,j}r_ir_j\E[C_iC_j]
= \tfrac{\lambda^2}{2}\norm{r}^2$ by \eqref{eq:Cmom}.
The middle term vanishes since $C$ is independent of $\Xi$ and $\E[C]=0$.
Writing $\norm{\Xi^\top r}^2 = \sum_{i,j}r_ir_j(\xi_i^\top\xi_j)$, independence of distinct rows
kills all off-diagonal expectations, while $\E[\norm{\xi_i}^2]=d\sigma_{\mathrm{DP}}^2$ gives
\[
  \E[\norm{\Xi^\top r}^2] = d\sigma_{\mathrm{DP}}^2\norm{r}^2,
\]
so altogether $\E[\norm{T_1}^2] = \norm{r}^2\bigl(\tfrac{\lambda^2}{2}+d\sigma_{\mathrm{DP}}^2\bigr)$.

\smallskip\noindent\textit{Term $\E[T_1^\top T_2]$.}
Expanding,
\[
  T_1^\top T_2 = \lambda(C^\top r)(v^\top X^\top q) + r^\top\Xi X^\top q.
\]
The first summand has zero expectation since $C$ is independent of $q$ and $\E[C]=0$.
Writing the second as $r^\top\Xi X^\top q = \sum_{i,j}r_iq_j(\xi_i^\top x_j)$,
off-diagonal terms ($i\ne j$) vanish by independence, and on the diagonal
\eqref{eq:cross} with $a=x_i$ gives $\E[q_i(\xi_i^\top x_i)]=\sigma_{\mathrm{DP}}^2(\beta^\top x_i)$, so
\[
  \E[T_1^\top T_2] = \sigma_{\mathrm{DP}}^2\sum_i r_i(\beta^\top x_i) = K\sigma_{\mathrm{DP}}^2\,\beta^\top\Sigma_{rx}.
\]

\smallskip\noindent\textit{Term $\E[\norm{T_2}^2]$.}
Since $\norm{T_2}^2 = q^\top XX^\top q$ and $\E[qq^\top] = \sigma_{\mathrm{DP}}^2\norm{\beta}^2 I_K$ by \eqref{eq:qmom},
\[
  \E[\norm{T_2}^2] = \sigma_{\mathrm{DP}}^2\norm{\beta}^2\,\mathrm{tr}(XX^\top)
  = K\sigma_{\mathrm{DP}}^2\norm{\beta}^2\,\mathrm{tr}(\Sigma_x).
\]
Adding the three contributions gives the stated formula.
\end{proof}


\label{app:cross}

\begin{theorem}\label{thm:cross}
$\E[T_1^\top T_3]=0$ and $\E[T_2^\top T_3]=0$.
\end{theorem}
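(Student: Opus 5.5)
The plan is to expand both inner products into finitely many scalar terms and to show that each term has expectation zero. Each term is odd either in the phase vector $C$ or in the Gaussian noise matrix $\Xi$. I will use two distributional symmetries. The first is $\Xi\mapsto-\Xi$, which preserves the law of $\Xi$, is independent of $C$, and sends $q=\Xi\beta\mapsto -q$. The second is the phase shift $\phi_i\mapsto\phi_i+\pi$ for all $i$, which preserves the uniform law of the phases and sends $C\mapsto -C$ while leaving $\Xi$ and $q$ unchanged. If a term changes sign under one of these maps, it equals minus itself in expectation, so its expectation is zero. Integrability is clear because every term is a polynomial in bounded $C_i$ and Gaussian $\xi_i$.

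For $\E[T_1^\top T_3]$, I will write $T_1=\lambda(C^\top r)v+\Xi^\top r$ and $T_3=\lambda(C^\top q)v+\Xi^\top q-K\sigma_{\mathrm{DP}}^2\beta$, which gives six products.
\begin{itemize}
\item $\lambda^2(C^\top r)(C^\top q)$ is linear in $q$, hence odd in $\Xi$. Alternatively, independence of $C$ and $\Xi$ together with $\E[q]=0$ from \eqref{eq:qmom} gives the same conclusion.
\item $\lambda(C^\top r)\,v^\top\Xi^\top q$ and $\lambda(C^\top q)\,r^\top\Xi v$ are each linear in $C$, hence odd in $C$. They vanish because $C$ is independent of $\Xi$ and $\E[C]=0$ by \eqref{eq:Cmom}.
\item $-K\sigma_{\mathrm{DP}}^2\lambda(C^\top r)(v^\top\beta)$ is identically zero, since $v\perp\beta$.
\item $-K\sigma_{\mathrm{DP}}^2\, r^\top\Xi\beta=-K\sigma_{\mathrm{DP}}^2\,r^\top q$ is linear in $\Xi$, so its expectation is zero.
\item The remaining term is $r^\top\Xi\Xi^\top q=\sum_{i,j} r_i q_j\,\xi_i^\top\xi_j$, which is cubic in $\Xi$.
\end{itemize}

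The cubic term is the only step needing care, so I will check it directly as well as by the symmetry argument. For $i\neq j$, independence of rows reduces the expectation to a product in which some factor $\xi_i$ appears to an odd power, so it vanishes. On the diagonal, the expectation is $r_i\E[(\xi_i^\top\beta)\|\xi_i\|^2]$, which is zero by \eqref{eq:third}.

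For $\E[T_2^\top T_3]$, with $T_2=X^\top q$, there are three products.
\begin{itemize}
\item $\lambda(C^\top q)\,q^\top Xv$ is odd in $C$, so its expectation is zero.
\item $-K\sigma_{\mathrm{DP}}^2\,q^\top X\beta$ is linear in $\Xi$, so its expectation is zero.
\item $q^\top X\Xi^\top q=\sum_{i,j} q_i\,x_i^\top\xi_j\,q_j$ is cubic in $\Xi$. Off-diagonal index combinations vanish by row independence and zero odd moments, and each diagonal term $\E[(\xi_i^\top\beta)^2(\xi_i^\top x_i)]$ is an odd Gaussian moment, so it is also zero.
\end{itemize}
Summing these contributions gives both identities of Theorem~\ref{thm:cross}. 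This also justifies the vanishing cross term $\E[(T_1+T_2)^\top T_3]=0$ used in Theorem~\ref{thm:var}.
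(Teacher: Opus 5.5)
Your proposal is correct and takes essentially the same route as the paper. Both expand $T_1^\top T_3$ into the same six terms and $T_2^\top T_3$ into the same three, and kill each term using independence of $C$ and $\Xi$, $\E[C]=0$, $\E[q]=0$, $v\perp\beta$, and the odd-Gaussian-moment identity \eqref{eq:third} on the diagonal of the cubic terms; your parity framing and your off-diagonal step $\E[\xi_i]^\top\E[q_j\xi_j]=0$ (which correctly accounts for $q_j$ depending on $\xi_j$) are a slightly cleaner packaging of the same argument.
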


\begin{proof}
\noindent\textit{$\E[T_1^\top T_3]=0$.}
Substituting \eqref{eq:T1} and \eqref{eq:T3} and expanding gives
\begin{align*}
  T_1^\top T_3
  &= \lambda^2(C^\top r)(C^\top q)
  + \lambda(C^\top r)(v^\top\Xi^\top q)
  - \lambda K\sigma_{\mathrm{DP}}^2(C^\top r)(v^\top\beta)\\
  &\quad + \lambda(C^\top q)(r^\top\Xi v)
  + r^\top\Xi\Xi^\top q
  - K\sigma_{\mathrm{DP}}^2\, r^\top\Xi\beta.
\end{align*}
We verify each of the six terms has zero expectation.
The first term factors as $$\lambda^2\sum_{i,j}r_j\E[C_iC_j]\E[q_i]$$ which vanishes
since $C$ and $q$ are independent and $\E[q_i]=0$.
The second and fourth terms both vanish because $C$ is independent of $\Xi$ and $\E[C]=0$.
The third term is zero since $v\perp\beta$.
For the fifth term, write $r^\top\Xi\Xi^\top q = \sum_{i,j}r_iq_j(\xi_i^\top\xi_j)$.
When $i\ne j$ the rows $\xi_i$ and $\xi_j$ are independent, so $\E[\xi_i^\top\xi_j]=0$.
When $i=j$,
\[
  \E\bigl[r_i\,q_i\norm{\xi_i}^2\bigr]
  = r_i\,\E\bigl[(\xi_i^\top\beta)\norm{\xi_i}^2\bigr] = 0
  \qquad\text{by \eqref{eq:third}.}
\]
The sixth term satisfies
$\E[-K\sigma_{\mathrm{DP}}^2\,r^\top\Xi\beta] = -K\sigma_{\mathrm{DP}}^2\sum_i r_i\E[q_i]=0$.

\smallskip\noindent\textit{Similarly, $\E[T_2^\top T_3]=0$.}
Substituting \eqref{eq:T2} and \eqref{eq:T3} and expanding gives
\[
  T_2^\top T_3
  = \lambda(C^\top q)(q^\top Xv)
  + q^\top X\Xi^\top q
  - K\sigma_{\mathrm{DP}}^2\,q^\top X\beta.
\]
The first term vanishes since $C_i$ is independent of all $\xi$-quantities and $\E[C_i]=0$.
For the second, write $q^\top X\Xi^\top q = \sum_{i,j}q_iq_j(x_i^\top\xi_j)$.
When $i\ne j$, $q_i=\xi_i^\top\beta$ is independent of $(\xi_j,q_j)$ and $\E[q_i]=0$, so the term vanishes.
When $i=j$,
\[
  \E\bigl[q_i^2(x_i^\top\xi_i)\bigr]
  = x_i^\top\E\bigl[(\xi_i^\top\beta)^2\xi_i\bigr] = 0
\]
by the same odd-moment argument as \eqref{eq:third}.
The third term satisfies
\[
  \E\bigl[-K\sigma_{\mathrm{DP}}^2\,q^\top X\beta\bigr] = -K\sigma_{\mathrm{DP}}^2(X\beta)^\top\E[q]=0
\] and hence, $\E[(T_1+T_2)^\top T_3]=0$.
\end{proof}

\section{ \texorpdfstring{$\E[\norm{T_3}^2]$}{E[||T3||^2]}}
\label{app:block2}

Write $T_3 = \lambda s\,v + (w - K\sigma_{\mathrm{DP}}^2\beta)$ where
$s:=C^\top q\in\R$ and $w:=\Xi^\top q\in\R^d$.

\begin{lemma}\label{lem:sw}
The scalar $s$ and vector $w$ satisfy
\begin{align}
  \E[s] &= 0, \quad
  \E[s^2] = \frac{K\sigma_{\mathrm{DP}}^2\norm{\beta}^2}{2}, \label{eq:s}\\
  \E\bigl[s(v^\top w)\bigr] &= 0, \label{eq:sw}\\
  \E\bigl[\norm{w - K\sigma_{\mathrm{DP}}^2\beta}^2\bigr]
  &= K\sigma_{\mathrm{DP}}^4\norm{\beta}^2(d+1). \label{eq:w}
\end{align}
\end{lemma}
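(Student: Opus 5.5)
The plan is to reduce everything to the scalar moments of Lemma~\ref{lem:moments}, using the definitions $s=C^\top q=\sum_i C_i q_i$ and $w=\Xi^\top q=\sum_i \xi_i q_i$. Two independence facts do most of the work: $C$ is independent of $\Xi$ (hence of $q$), and distinct rows $\xi_i,\xi_j$ are independent.

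For \eqref{eq:s}, the mean follows from independence: $\E[s]=\sum_i \E[C_i]\E[q_i]=0$. For the second moment I will write $\E[s^2]=\sum_{i,j}\E[C_iC_j]\,\E[q_iq_j]$, which again uses independence of $C$ and $q$. By \eqref{eq:Cmom} only $i=j$ survives, with weight $\tfrac12$. Then \eqref{eq:qmom} gives $\E[q_i^2]=\sigma_{\mathrm{DP}}^2\norm{\beta}^2$, and summing over $K$ indices yields $K\sigma_{\mathrm{DP}}^2\norm{\beta}^2/2$.

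For \eqref{eq:sw}, I will condition on $\Xi$. Then $s(v^\top w)$ is linear in $C$, and since $\E[C]=0$ with $C$ independent of $\Xi$, the expectation vanishes. This step does not even need $v\perp\beta$.

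For \eqref{eq:w}, the step I expect to be the main obstacle, I first note $\E[w]=\sum_i\E[\xi_i\xi_i^\top]\beta=K\sigma_{\mathrm{DP}}^2\beta$ by \eqref{eq:cross}. The bias-variance identity then gives $\E\norm{w-K\sigma_{\mathrm{DP}}^2\beta}^2=\E\norm{w}^2-K^2\sigma_{\mathrm{DP}}^4\norm{\beta}^2$. Next I expand $\E\norm{w}^2=\sum_{i,j}\E[q_iq_j\,\xi_i^\top\xi_j]$ and split it into diagonal and off-diagonal parts.

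\textbf{Diagonal terms.} By \eqref{eq:fourth}, $\E[q_i^2\norm{\xi_i}^2]=(d+2)\sigma_{\mathrm{DP}}^4\norm{\beta}^2$, contributing $K(d+2)\sigma_{\mathrm{DP}}^4\norm{\beta}^2$ in total.

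\textbf{Off-diagonal terms ($i\neq j$).} Independence factorizes each term: $\E[q_i\xi_i]^\top\E[q_j\xi_j]=\sigma_{\mathrm{DP}}^4\norm{\beta}^2$, contributing $K(K-1)\sigma_{\mathrm{DP}}^4\norm{\beta}^2$. This is the delicate point: these terms are nonzero because $w$ is not centered, so they must be tracked rather than discarded as in earlier lemmas.

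Adding the two contributions and subtracting $K^2\sigma_{\mathrm{DP}}^4\norm{\beta}^2$ gives $K\sigma_{\mathrm{DP}}^4\norm{\beta}^2\bigl[(d+2)+(K-1)-K\bigr]=K\sigma_{\mathrm{DP}}^4\norm{\beta}^2(d+1)$. I will double-check the Isserlis constant in \eqref{eq:fourth}, since the final $(d+1)$ hinges on it.
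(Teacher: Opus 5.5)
Your proposal is correct and follows the paper's proof essentially step for step. It uses independence of $C$ and $q$ with \eqref{eq:Cmom}--\eqref{eq:qmom} for $\E[s^2]$, the vanishing of $\E[C]$ for the cross term, and for \eqref{eq:w} the same diagonal/off-diagonal split of $\E\norm{w}^2$ into $K(d+2)+K(K-1)$ copies of $\sigma_{\mathrm{DP}}^4\norm{\beta}^2$, minus $\norm{\E w}^2=K^2\sigma_{\mathrm{DP}}^4\norm{\beta}^2$ (equivalent to the paper's explicit expansion), and the Isserlis constant you flag is indeed right, since $\sum_a\E[\xi_{ib}\xi_{ic}\xi_{ia}^2]=(d+2)\sigma_{\mathrm{DP}}^4\delta_{bc}$.
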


\begin{proof}
\noindent\textit{Proof of \eqref{eq:s}.}
Since $\E[C_i]=0$ we have $\E[s]=0$.
For $\E[s^2]$, expand $s^2 = \sum_{i,j}C_iC_jq_iq_j$.
Using independence of $C$ and $q$ together with \eqref{eq:Cmom} and \eqref{eq:qmom},
\[
  \E[C_iC_jq_iq_j]
  = \E[C_iC_j]\,\E[q_iq_j]
  = \tfrac{1}{2}\delta_{ij}\cdot\sigma_{\mathrm{DP}}^2\norm{\beta}^2\delta_{ij},
\]
which is nonzero only when $i=j$.  Summing over $i$ gives
$\E[s^2] = \tfrac{K\sigma_{\mathrm{DP}}^2\norm{\beta}^2}{2}$.

\noindent\textit{Proof of \eqref{eq:sw}.}
Write $s(v^\top w) = \sum_{i,j}C_iq_iq_j(\xi_j^\top v)$.
Since each $C_i$ is independent of all $\xi$-quantities and $\E[C_i]=0$, every term vanishes.

\noindent\textit{Proof of \eqref{eq:w}.}
Expand the squared norm as
\[
  \E\bigl[\norm{w - K\sigma_{\mathrm{DP}}^2\beta}^2\bigr]
  = \E[\norm{w}^2]
  - 2K\sigma_{\mathrm{DP}}^2\,\beta^\top\E[w]
  + K^2\sigma_{\mathrm{DP}}^4\norm{\beta}^2.
\]
Applying \eqref{eq:cross} componentwise gives $\E[w] = \E[\Xi^\top q] = K\sigma_{\mathrm{DP}}^2\beta$,
so $\beta^\top\E[w] = K\sigma_{\mathrm{DP}}^2\norm{\beta}^2$.

To compute $\E[\norm{w}^2]$, write $\norm{w}^2 = \sum_{i,j}q_iq_j(\xi_i^\top\xi_j)$ and split
into diagonal and off-diagonal parts.  For $i=j$, \eqref{eq:fourth} gives
\[
  \E\bigl[q_i^2\norm{\xi_i}^2\bigr]
  = \E\bigl[(\xi_i^\top\beta)^2\norm{\xi_i}^2\bigr]
  = \sigma_{\mathrm{DP}}^4(d+2)\norm{\beta}^2,
\]
so the $K$ diagonal terms contribute $K\sigma_{\mathrm{DP}}^4(d+2)\norm{\beta}^2$.
For $i\ne j$, independence of rows gives
\[
  \E\bigl[q_iq_j(\xi_i^\top\xi_j)\bigr]
  = \bigl(\E[q_i\xi_i]\bigr)^\top\!\E[q_j\xi_j]
  = (\sigma_{\mathrm{DP}}^2\beta)^\top(\sigma_{\mathrm{DP}}^2\beta)
  = \sigma_{\mathrm{DP}}^4\norm{\beta}^2,
\]
and there are $K(K-1)$ such pairs.  Adding both contributions,
\[
  \E[\norm{w}^2] = K\sigma_{\mathrm{DP}}^4(d+2)\norm{\beta}^2 + K(K-1)\sigma_{\mathrm{DP}}^4\norm{\beta}^2
  = K\sigma_{\mathrm{DP}}^4(d+K+1)\norm{\beta}^2.
\]
Substituting back,
\[
  \E\bigl[\norm{w-K\sigma_{\mathrm{DP}}^2\beta}^2\bigr]
  = K\sigma_{\mathrm{DP}}^4(d+K+1)\norm{\beta}^2
  - \]\[2K^2\sigma_{\mathrm{DP}}^4\norm{\beta}^2
  + K^2\sigma_{\mathrm{DP}}^4\norm{\beta}^2
  = K\sigma_{\mathrm{DP}}^4(d+1)\norm{\beta}^2.
\]
\end{proof}

\begin{lemma}\label{lem:T32}
$\E[\norm{T_3}^2] = K\norm{\beta}^2\!\left(\dfrac{\lambda^2\sigma_{\mathrm{DP}}^2}{2}+(d+1)\sigma_{\mathrm{DP}}^4\right)$.
\end{lemma}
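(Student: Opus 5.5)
We use the decomposition $T_3 = \lambda s\,v + u$, where $u := w - K\sigma_{\mathrm{DP}}^2\beta$, $s = C^\top q$ and $w = \Xi^\top q$, exactly as introduced before Lemma~\ref{lem:sw}. Since $\norm{v}_2=1$, expanding the squared norm gives
\[
\norm{T_3}^2 = \lambda^2 s^2 + 2\lambda s\,(v^\top u) + \norm{u}^2 .
\]
We take expectations term by term and read off each piece from Lemma~\ref{lem:sw}.

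\textbf{First term.} By \eqref{eq:s}, $\E[\lambda^2 s^2] = \lambda^2 K\sigma_{\mathrm{DP}}^2\norm{\beta}^2/2$.

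\textbf{Cross term.} Write $v^\top u = v^\top w - K\sigma_{\mathrm{DP}}^2 (v^\top\beta)$. The second piece vanishes because $v\perp\beta$; even without this assumption, its contribution would be $-K\sigma_{\mathrm{DP}}^2(v^\top\beta)\E[s]=0$. The first piece contributes $\E[s(v^\top w)]=0$ by \eqref{eq:sw}. Hence the cross term has zero expectation.

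\textbf{Last term.} By \eqref{eq:w}, $\E[\norm{u}^2] = K\sigma_{\mathrm{DP}}^4\norm{\beta}^2(d+1)$.

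Adding the three contributions gives
\[
\E[\norm{T_3}^2] = K\norm{\beta}^2\Bigl(\tfrac{\lambda^2\sigma_{\mathrm{DP}}^2}{2}+(d+1)\sigma_{\mathrm{DP}}^4\Bigr),
\]
as claimed.

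The only step that is more than bookkeeping is the cross term. It relies on $C$ being independent of $\Xi$ together with $\E[C_i]=0$, which kills every summand $C_i q_i q_j (\xi_j^\top v)$. This is precisely the content of \eqref{eq:sw}, so we may cite it directly. Everything else follows from Lemma~\ref{lem:sw}, whose fourth-moment computation (the Isserlis step in Lemma~\ref{lem:moments}) is the real source of the $(d+1)$ factor.
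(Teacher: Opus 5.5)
Your proof is correct and matches the paper's argument essentially verbatim. It uses the same decomposition $T_3=\lambda s\,v+(w-K\sigma_{\mathrm{DP}}^2\beta)$, the same three-term expansion via $\norm{v}=1$, and the same appeal to \eqref{eq:s}, \eqref{eq:sw}, and \eqref{eq:w}, with the cross term killed by $\E[s(v^\top w)]=0$ and $\E[s]=0$.
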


\begin{proof}
Since $\norm{v}=1$, expanding the squared norm of $T_3 = \lambda s\,v + (w - K\sigma_{\mathrm{DP}}^2\beta)$ gives
\[
  \norm{T_3}^2
  = \lambda^2 s^2
  + 2\lambda s\,v^\top(w - K\sigma_{\mathrm{DP}}^2\beta)
  + \norm{w - K\sigma_{\mathrm{DP}}^2\beta}^2.
\]
Taking expectations of each term in turn:
\begin{align*}
  \E[\lambda^2 s^2]
    &= \frac{\lambda^2 K\sigma_{\mathrm{DP}}^2\norm{\beta}^2}{2}
    && \text{by \eqref{eq:s},}\\[4pt]
  \E\bigl[2\lambda s\,v^\top(w - K\sigma_{\mathrm{DP}}^2\beta)\bigr]
    &= 2\lambda\bigl(\E[s\,v^\top w] - K\sigma_{\mathrm{DP}}^2(v^\top\beta)\,\E[s]\bigr) = 0
    && \text{by \eqref{eq:sw} and $\E[s]=0$,}\\[4pt]
  \E\bigl[\norm{w - K\sigma_{\mathrm{DP}}^2\beta}^2\bigr]
    &= K\sigma_{\mathrm{DP}}^4(d+1)\norm{\beta}^2
    && \text{by \eqref{eq:w}.}
\end{align*}
Adding the three contributions gives the stated formula.
\end{proof}
\begin{tcolorbox}[colback=cyan!10, colframe=black]

\begin{lemma}[Expected Expansion]
Let \( x, x' \in \mathbb{R}^d \) be two points with difference \( h = x - x' \), and let \( v \in \mathbb{R}^d \) be a fixed unit vector (i.e., \( \|v\| = 1 \)). Let \( \phi \sim \operatorname{Unif}[0, 2\pi] \), and for parameters \( \alpha \in (0,1) \), \( \lambda > 0 \), and \( \omega > 0 \). Then the expected squared distance between the transformed points satisfies
\[
\mathbb{E}_\phi \left[ \| g(x) - g(x') \|^2 \right]
=
(1 - \alpha)^2 \| h \|^2 + 2 \lambda^2 \sin^2\left( \frac{\delta}{2} \right),
\]
where \( \delta = \omega \langle h, v \rangle \).
\end{lemma}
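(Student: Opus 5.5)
The plan is a direct computation. I read $g$ in the statement as the noise-free map of Lemma~\ref{thm:lipschitz_placeholder}, so $\eta_0$ is dropped, and I use the same phase $\phi$ for both $x$ and $x'$. This matches how the sensitivity lemmas fix $\phi_0$ and compare $g(x)$ with $g(x')$.

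Set $\theta=\omega\langle x,v\rangle+\phi$ and $\theta'=\omega\langle x',v\rangle+\phi$, so that $\theta-\theta'=\delta=\omega\langle h,v\rangle$ is deterministic. As in the proof of Lemma~\ref{thm:lipschitz_placeholder}, I would write
\[
g(x)-g(x')=(1-\alpha)h+\lambda(\cos\theta-\cos\theta')\,v .
\]
Expanding the squared norm with $\|v\|=1$ gives three terms:
\[
(1-\alpha)^2\|h\|^2,\qquad 2(1-\alpha)\lambda\langle h,v\rangle(\cos\theta-\cos\theta'),\qquad \lambda^2(\cos\theta-\cos\theta')^2 .
\]

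I would then take $\E_\phi$ term by term.

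\emph{First term.} It is deterministic, so it passes through unchanged.

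\emph{Cross term.} Modulo $2\pi$, both $\theta$ and $\theta'$ are uniform on $[0,2\pi]$, because each is a fixed shift of the uniform $\phi$. Hence $\E[\cos\theta]=\E[\cos\theta']=0$ (the same fact used in the unbiasedness computation for $Z$), and the cross term vanishes in expectation.

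\emph{Last term.} I would apply the sum-to-product identity
\[
\cos\theta-\cos\theta'=-2\sin\!\Big(\tfrac{\theta+\theta'}{2}\Big)\sin\!\Big(\tfrac{\delta}{2}\Big),
\]
so that
\[
(\cos\theta-\cos\theta')^2=4\sin^2(\delta/2)\,\sin^2(\psi),\qquad \psi=\tfrac{\theta+\theta'}{2}=\tfrac{\omega\langle x+x',v\rangle}{2}+\phi .
\]
Here $\psi$ is again a fixed shift of $\phi$, so $\E[\sin^2\psi]=\tfrac12$. The last term therefore contributes $2\lambda^2\sin^2(\delta/2)$, and adding the three pieces gives the stated identity.

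The only point needing care is $\E[\sin^2\psi]=\tfrac12$. Since $\sin^2$ is $\pi$-periodic, a shift of a uniform phase on $[0,2\pi]$ still averages it to $\tfrac12$. As an alternative check, one can expand $(\cos\theta-\cos\theta')^2$ directly: this gives $\tfrac12+\tfrac12-2\E[\cos\theta\cos\theta']$, and $\E[\cos\theta\cos\theta']=\tfrac12\cos\delta$ by the product-to-sum formula. The result is $1-\cos\delta=2\sin^2(\delta/2)$, which agrees. I expect no real obstacle beyond stating the shared-phase convention explicitly.
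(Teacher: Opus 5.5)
Your proof is correct and follows the paper's argument essentially step for step. Both use the same decomposition of $g(x)-g(x')$, both kill the cross term because a shifted uniform phase has zero-mean cosine and sine, and both use the sum-to-product identity to turn the last term into $4\lambda^2\sin^2(\delta/2)\sin^2(\theta-\delta/2)$, whose mean is $2\lambda^2\sin^2(\delta/2)$. Your identity also carries the correct minus sign, which the paper omits harmlessly since the factor is squared, and you make explicit the shared-phase, noise-free reading of $g$ that the paper leaves implicit.
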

\end{tcolorbox}
\begin{proof}
Define \( h = x - x' \) and let
\[
\theta = \omega \langle x, v \rangle + \phi,
\quad
\theta' = \omega \langle x', v \rangle + \phi = \theta - \delta,
\quad
\delta = \omega \langle h, v \rangle.
\]
Then the difference between outputs is
\[
g(x) - g(x') = (1 - \alpha)h + \lambda \left[ \cos(\theta) - \cos(\theta - \delta) \right] v.
\]
Using the identity \( \cos(\theta) - \cos(\theta - \delta) = 2 \sin\left( \frac{\delta}{2} \right) \sin\left( \theta - \frac{\delta}{2} \right) \), we obtain
\[
g(x) - g(x') = (1 - \alpha)h + 2\lambda \sin\left( \frac{\delta}{2} \right) \sin\left( \theta - \frac{\delta}{2} \right) v.
\]
The squared norm is
\[
\|g(x) - g(x')\|^2
= (1 - \alpha)^2 \|h\|^2
+ 4 \lambda^2 \sin^2\left( \frac{\delta}{2} \right) \sin^2\left( \theta - \frac{\delta}{2} \right)
+ \text{cross terms}.
\]
The cross terms vanish in expectation since \( \mathbb{E}_\phi[\sin(\cdot)] = 0 \). Also, \( \theta - \frac{\delta}{2} \sim \operatorname{Unif}[0, 2\pi] \), so
\[
\mathbb{E}_\phi \left[ \sin^2\left( \theta - \frac{\delta}{2} \right) \right] = \frac{1}{2}.
\]
Therefore,
\[
\mathbb{E}_\phi \left[ \|T(x) - T(x')\|^2 \right]
= (1 - \alpha)^2 \|h\|^2 + 2 \lambda^2 \sin^2\left( \frac{\delta}{2} \right).
\]
\end{proof}

\section{Some equivalences to $(\epsilon, \delta)$-differential privacy} \label{appEquiv}
\subsection{Equivalence as a Hypothesis Test}

Given the output of a differentially private mechanism $M$, an attacker's error rates in leaking the privacy (via attacks such as the membership inference attack) are controlled for with differential privacy as follows by equivalently viewing it as a binary hypothesis test of the following kind, with a corresponding control on error rates of the attacker.  
\begin{itemize}
    \item[(i)] \textit{Null Hypothesis ($H_0$)} Attacker observes $M(D)$,  to conclude that Alice is \emph{not} in the dataset. 
    \item[(ii)] \textit{Alternative Hypothesis ($H_1$):} Attacker observes $M(D^\prime)$, to conclude that Alice \textit{is} in the dataset.  
\end{itemize}

The adversary seeks to minimize:
\begin{enumerate}
    \item[(i)] \textit{Type I error:} Incorrectly concluding that Alice is in the dataset when she is not.  
    \item[(ii)] \textit{Type II error:} Failing to detect that Alice is in the dataset when she actually is.  
\end{enumerate}

One is therefore interested in the trade-off function that minimizes Type II error subject to a constraint on the Type I error rate,
\[
T\bigl(\mathcal{M}(D), \mathcal{M}(D')\bigr)(\alpha) \;=\; \inf \{ \text{Type II error} \;\mid\; \text{Type I error} \leq \alpha \}.
\]

If a mechanism reveals information, the adversary can distinguish $M(D)$ from $M(D')$ with improved odds, i.e., achieve lower Type II error at the same Type I error rate. In this sense, the privacy guarantee can be described as a bound on the trade-off curve.  

A mechanism $M$ is $(\varepsilon, \delta)$-differentially private if and only if, for all rejection rules, the following inequalities hold.
\[
\begin{cases}
\Pr(\text{Type I error}) + e^{\varepsilon} \Pr(\text{Type II error}) \;\geq\; 1 - \delta, \\
e^{\varepsilon} \Pr(\text{Type I error}) + \Pr(\text{Type II error}) \;\geq\; 1 - \delta.
\end{cases}
\]

This corresponds to the following trade-off function.
\[
f_{\varepsilon, \delta}(\alpha) \;=\; \max \Bigl\{ 0,\; 1 - \delta - e^{\varepsilon}\alpha,\; e^{-\varepsilon}(1 - \delta - \alpha) \Bigr\}.
\]
\subsubsection{Equivalence as a divergence constraint}
 A mechanism $\mathcal{M}$ is $(\varepsilon, \delta)$-differentially private (or $(\varepsilon, \delta)-D P$ for short) if and only if, for any neighboring input datasets $\mathbf{x}, \mathbf{x}^{\prime}$, it holds that $\mathfrak{D}_{e^{\varepsilon}}\left(\mathcal{M}(\mathbf{x}) \| \mathcal{M}\left(\mathbf{x}^{\prime}\right)\right) \leq \delta$ using the hockey-stick divergence given by $$\mathfrak{D}_{e^{\varepsilon}}\left(\mu \| \mu^{\prime}\right):=\int\left[f_\mu(y)-e^{\varepsilon} \cdot f_{\mu^{\prime}}(y)\right]_{+} d y$$
 For two discrete distributions $\mu_{\mathrm{up}}$ and $\mu_{\mathrm{lo}}$, their privacy loss at $o \in \operatorname{supp}\left(\mu_{\mathrm{up}}\right)$ is defined as

$$
\mathcal{L}_{\mu_{\mathrm{up}} / \mu_{\mathrm{lo}}}(o):=\left\{\begin{array}{ll}
\ln \left(\frac{\mu_{\mathrm{up}}(o)}{\mu_{\mathrm{lo}}(o)}\right) & \text { if } \mu_{\mathrm{lo}}(o)>0 \\
+\infty & \text { if } \mu_{\mathrm{lo}}(o)=0
\end{array} .\right.
$$

The privacy loss distribution (PLD) of $\mu_{\mathrm{up}}$ and $\mu_{\mathrm{lo}}$, denoted by $P L D_{\mu_{\mathrm{up}} / \mu_{\mathrm{lo}}}$, is a distribution on $\mathbb{R} \cup\{\infty\}$ where $y \sim P L D_{\mu_{\mathrm{up}} / \mu_{\mathrm{lo}}}$ is generated as follows: sample $o \sim \mu_{\mathrm{up}}$ and let $y=\mathcal{L}_{\mu_{\mathrm{up}} / \mu_{\mathrm{lo}}}(o)$.
Moreover, for any two distributions $\mu_{u p}$ and $\mu_{l o}$ where both are discrete or both are continuous, it holds that
$$
\mathfrak{D}_{e^{\varepsilon}}\left(\mu_{u p} \| \mu_{l o}\right)=\mathbb{E}_{y \sim PLD_{\mu_{up} / \mu_{lo}}} \left[1-e^{\varepsilon-y}\right]_{+}
$$
Due to this, a mechanism $\mathcal{M}$ is $(\varepsilon, \delta)-D P$ if and only if the following holds for all neighbouring input datasets $\mathbf{x}$ and $\mathbf{x}^{\prime}$ :
$$
\delta \geq \mathbb{E}_{y \sim P L D_{\mathcal{M}(\mathrm{x}) / \mathcal{M}}\left(\mathrm{x}^{\prime}\right)}\left[1-e^{\varepsilon-y}\right]_{+}
$$
 
\section{Linear regression}\label{LRapp}
\subsubsection{Classical results}
In linear regression, the relationship between a response $y \in \mathbb{R}$ and covariates $x \in \mathbb{R}^d$ is given by
$$
y = x^\top \beta + \epsilon, \quad \epsilon \sim \mathcal{N}(0, \sigma^2).
$$
Given $n$ i.i.d. samples $\{(x_i, y_i)\}_{i=1}^n$ from this model, with ${X} \in \mathbb{R}^{n \times d}$ denoting the design matrix and ${Y}$ denoting the response, the ordinary least squares (OLS) estimator of $\beta$ is
$$
\hat{\beta}_{\text{OLS}} = ({X}^\top {X})^{-1} {X}^\top {Y},
$$
and defined when ${X}$ has full column rank. Under Gaussian errors and fixed design, OLS is minimax optimal and achieves the Cramér-Rao lower bound, with the minimax risks characterized as follows.

\begin{enumerate}
    \item \textbf{Prediction error (fixed design)}:
   The minimax risk for the normalized prediction error satisfies
    \[
    \inf_{\hat{\beta}} \sup_{\beta_0 \in \mathbb{R}^d} \mathbb{E}\left[ \frac{1}{n} \| {X} (\hat{\beta} - \beta_0) \|_2^2 \mid {X} \right] = \sigma^2 \frac{\operatorname{rank}({X})}{n}.
    \]
    \item \textbf{Estimation error (fixed design)}:
    \[
    \inf_{\hat{\beta}} \sup_{\beta_0 \in \mathbb{R}^d} \mathbb{E}\left[ \|\hat{\beta} - \beta_0\|_2^2 \mid {X} \right] = \sigma^2 \cdot \operatorname{tr}\left( ({X}^\top {X})^{-1} \right).
    \]
\end{enumerate}

\subsubsection{Statistical learning result}
 Linear regression has also been studied in the distribution free statistical learning setting, where the only assumption is that the data are drawn i.i.d. from some unknown distribution $\mathcal{P}$ defined over some compact domain $\mathcal{X} \times \mathcal{Y}$. Specifically, let the risk be

\[
R(\beta)=\mathbb{E}_{(x, y) \sim \mathcal{P}}\left[\frac{1}{2}\left(x^T \beta-y\right)^2\right].
\]

The minimax risk under this setting when $\beta, \mathcal{X}$ and $\mathcal{Y}$ are Euclidean balls is given \cite{shamir2015sample} by,
$$
\inf _{\hat{\beta}} \sup _{\mathcal{P}}\left[\mathbb{E}[n \cdot R(\hat{\beta})]-\inf _{\beta \in \beta}[n \cdot R(\beta)]\right]
\gtrsim  \min \left\{n\|\mathcal{Y}\|^2,\|\beta\|^2\|\mathcal{X}\|^2+d\|\mathcal{Y}\|^2, \sqrt{n}\|\beta\|\|\mathcal{X}\|\|\mathcal{Y}\|\right\}
$$
where $\hat{\beta}$ be any measurable function of the data set $X, {y}$ to $\beta$ and the expectation is taken over the data generating distribution $(X, {Y}) \sim \mathcal{P}^n$.

\section*{Acknowledgments}
We would like to acknowledge the MBZUAI SU Fund and ADIA Lab Fellowship for supporting this work. We also acknowledge the MIT Institute for Data, Systems, and Society for hosting the primary author.
\printbibliography
\end{document}